\newtheorem{theorem}{Theorem}
\newtheorem{lemma}{Lemma}
\newtheorem{definition}{Definition}
\newcommand{\bmx}{\bm{x}}
\newcommand{\bmy}{\bm{y}}
\newcommand{\bmz}{\bm{z}}
\newcommand{\bmv}{\bm{v}}
\newcommand{\bmr}{\bm{r}}
\newcommand{\bmf}{\bm{f}}
\newcommand{\ojzj}{OneJumpZeroJump}
\newcommand{\aojzj}{OneJumpZeroJump$_{SS}$}
\newcommand{\omm}{OneMinMax}
\newcommand{\lotz}{LeadingOnesTrailingZeroes}
\title{Not Just for Archiving: Provable Benefits of Reusing the Archive \\in Evolutionary Multi-objective Optimization}
\author {
    % Authors
    Shengjie Ren\textsuperscript{\rm 1,\rm 2},
    Zimin Liang\textsuperscript{\rm 3},
    Miqing Li\textsuperscript{\rm 3}
    Chao Qian\textsuperscript{\rm 1,\rm 2}
}
\begin{document}

\maketitle

\begin{abstract}

Evolutionary Algorithms (EAs) have become the most popular tool for solving widely-existed multi-objective optimization problems. In Multi-Objective EAs (MOEAs), there is increasing interest in using an archive to store non-dominated solutions generated during the search. This approach can 1) mitigate the effects of population oscillation, a common issue in many MOEAs, and 2) allow for the use of smaller, more practical population sizes. In this paper, we analytically show that the archive can even further help MOEAs through reusing its solutions during the process of new solution generation. We first prove that using a small population size alongside an archive (without incorporating archived solutions in the generation process) may fail on certain problems, as the population may remove previously discovered but promising solutions. We then prove that reusing archive solutions can overcome this limitation, resulting in at least a polynomial speedup on the expected running time. Our analysis focuses on the well-established SMS-EMOA algorithm applied to the commonly studied OneJumpZeroJump problem as well as one of its variants. We also show that reusing archive solutions can be better than using a large population size directly. Finally, we show that our theoretical findings can generally hold in practice by experiments on well-known practical optimization problems -- multi-objective 0-1 Knapsack, TSP, QAP and NK-landscape problems -- with realistic settings.

\end{abstract}

% Uncomment the following to link to your code, datasets, an extended version or similar.
% You must keep this block between (not within) the abstract and the main body of the paper.
% \begin{links}
%     \link{Code}{https://github.com/Zim-L/AAAI26ArchiveReuse}
%     % \link{Datasets}{https://aaai.org/example/datasets}
%     \link{Extended version}{https://arxiv.org/abs/2508.16993}
% \end{links}

\section{Introduction}
Multi-objective optimization deals with problems in which multiple, often conflicting, objectives must be optimized simultaneously. Unlike single-objective optimization, there is generally no single optimal solution for a Multi-objective Optimization Problem (MOP). Instead, the goal is to find a set of solutions representing different optimal trade-offs among the objectives, called Pareto optimal solutions. The corresponding objective vectors of these solutions constitute the Pareto front. Thus, the goal of multi-objective optimization is to find the Pareto front or its good approximation. Evolutionary Algorithms (EAs), a class of randomized heuristic optimization algorithms inspired by natural evolution, have proven to be well-suited for solving MOPs due to their population-based nature. They have been widely applied across various real-world domains~\cite{deb2001book,qian19el}. Over the years, numerous well-established Multi-Objective EAs (MOEAs) have been developed, including the Non-dominated Sorting Genetic Algorithm II (NSGA-II)~\cite{deb-tec02-nsgaii}, the Multi-Objective Evolutionary Algorithm based on Decomposition (MOEA/D)~\cite{zhang2007moea}, and the $\mathcal{S}$-Metric Selection Evolutionary Multi-objective Optimization Algorithm (SMS-EMOA)~\cite{beume2007sms}.

% In the field of MOEAs, there has recently been a growing trend of using an archive to store non-dominated solutions generated during the search process~\cite{li2023multi}. A major reason for this practice is that the population in MOEAs may fail to preserve high-quality solutions found even with elite preservation~\cite{knowles2004bounded}. Since the formalization of the archiving problem in the early 2000s~\cite{knowles2003properties}, there has been increasing interest and feasibility to use (even unbounded) archives in MOEAs, as seen in e.g.~\cite{fieldsend2003using,krause2016unbounded,brockhoff2019benchmarking,ishibuchi2020new}. Recently, Bian et al.~\shortcite{bian2024archive} theoretically showed that using an archive can provide a speedup. The reason is that the archive allows for a smaller population size, enabling the population to focus more on exploration rather than merely preserving optimal solutions. Ren et al.~(2025) further proved that combining an archive with a small population enhances the exploration capability of stochastic population updates. The reason is that an archive can help preserve elites in stochastic population updates, allowing a smaller population to have more chance to explore inferior regions.

\begin{table*}[t]
  \centering
  \begin{tabular}{llcc}
  \toprule
    & &\textbf{\ojzj} & \textbf{\aojzj} \\ \midrule
    \multirow{3}{*}[-5ex]{SMS-EMOA } & \multirow{3}{*}[1ex]{Archive without reuse}  & $n^{\Omega(n)}$ (Theorem~\ref{thm:ojzj1})  & $\Omega(n^k)$ (Theorem~\ref{thm:archive_benchmark})\\
    & &  [$\mu =2, k\le n/8$] & [$\mu \le n-2k+1, k\ge 13, a\ge k/4$]\\
    \cmidrule(lr){3-4}
     & \multirow{3}{*}[1ex]{Archive with reuse}  & $O(\mu n^k)$ (Theorem~\ref{thm:ojzj2})  & $O(\max\{\mu n^a/k, n^{k-a+1},n^2\log n\})$ (Theorem~\ref{thm:reusing_benchmark})\\
    & &  [$\mu \ge 2, k\ge 3$] & [$\mu \ge 2$]\\
     \cmidrule(lr){3-4}
    &Original algorithm & $O(\mu n^k)$~\cite{bian23stochastic}  & $O(\mu \cdot \max\{n^a/k, n^{k-a} \})$ (Theorem~\ref{thm:population_benchmark})\\
    &with a large population &  [$\mu \ge n-2k+3$] & [$\mu \ge n-2k+5$]\\
    \bottomrule
  \end{tabular}
  % \vspace{-0.1cm}
  \caption{The expected number of generations of SMS-EMOA for solving \ojzj\ and \aojzj\ when 1) considering an archive without the reuse, 2) considering an archive with the reuse, and 3) employing a large population size, where $n$ denotes the problem size, $k$ and $a$ denote the parameters of \ojzj\ ($2\le k<n/2$) and \aojzj\ ($3\le k<n/2, 2\le a< k$), and $\mu$ denotes the population size.}
  \label{tab:summary}
\end{table*}

In the area of MOEAs, there has recently been growing interest in using an archive to store non-dominated solutions
generated during the search process~\cite{li2023multi}. The main reason for this practice is as follows. In MOPs, there can be infinite many Pareto optimal solutions~\cite{figueira2017easy}, and it may not be practical to use a very large population aiming for accommodating all of them. However, using a small population may easily lead to the oscillation phenomenon, even when an elite preservation mechanism is employed~\cite{knowles2003properties}. This is because, during the evolutionary process, the number of generated non-dominated solutions usually exceeds the population size. As a result, population maintenance becomes necessary to remove excess non-dominated solutions (e.g., based on criteria of crowding distance). Yet, the algorithm may later accept new solutions that are non-dominated w.r.t. the current population but are actually dominated by the previously discarded solutions. Consequently, MOEAs may end up with a final population containing a large number of sub-optimal (i.e., globally dominated) solutions. This issue affects all practical MOEAs~\cite{li2023multi}. For example, Li and Yao~\shortcite{li2019empirical} reports that on some problems like DTLZ7~\cite{deb2005scalable}, nearly half of the final population produced by NSGA-II and MOEA/D are not globally optimal (i.e., being dominated by previously discarded solutions).
This phenomenon leads to the popularity of using an archive in MOEAs that store non-dominated solutions found during the search process, e.g.,~\cite{fieldsend2003using,krause2016unbounded,brockhoff2019benchmarking,ishibuchi2020new}. Recently, theoretical studies~\cite{bian2024archive} also confirm this practice, showing that using an archive can provide a speedup, particularly when working with stochastic population updates in MOEAs~\cite{ren2025archive}.

In this paper, we theoretically show that the archive can further help MOEAs by reusing its solutions during the process of new solution generation. 
% A problem of using a small population with an archive is that some promising non-dominated solutions found by an MOEA may be edged out of the population (due to the capacity of the population) and cannot be exploited again by the algorithm. 
One drawback of using a small population alongside an archive is that some promising non-dominated solutions discovered by an MOEA may be removed from the population due to its limited size, preventing the algorithm from further exploiting them.
A reuse of the archive where all non-dominated solutions are preserved can ``revisit" these solutions, resulting in a speedup. Specifically, in this study we analyze the expected running time of the well-established SMS-EMOA for solving OneJumpZeroJump and its variant \aojzj, and prove that a simple way of reusing the archive (i.e., with probability 1/2, randomly choosing solutions in the archive to generate offspring) helps. 
% We compare the cases of the algorithm 1)~using an archive to only store non-dominated solutions (i.e., like most MOEAs which consider an archive do), 2)~reusing the archive, and 3)~using a large population size (i.e., the original algorithm).
We compare three cases of the algorithm: 1) using an archive solely to store non-dominated solutions (as is common in many MOEAs), 2) reusing the archive, and 3) employing a large population size (i.e., the original version of the algorithm).
The results are given in Table~\ref{tab:summary}. Our findings can be summarized as follows.
% In this paper, we theoretically show that the solutions in the archive are potentially useful, and reusing them can provide significant benefits. Specifically, we analyze the expected running time of the well-established SMS-EMOA for solving \ojzj\ and its variant \aojzj, and compare the cases of the algorithm 1) using a large population size, 2) using an external archive, and 3) reusing the archive. The results are showen in Table~\ref{tab:summary}. Our findings can be summarized as follows.
\begin{itemize}
    \item Reusing the archive can help find a more diverse and well-covered Pareto front. By comparing the results of Theorem~\ref{thm:ojzj1} and Theorem~\ref{thm:ojzj2} in Table~\ref{tab:summary}, we show that the expected running time of SMS-EMOA for solving \ojzj\ when only using an archive to store solutions is $n^{\Omega(n)}$, whereas it is reduced to $O(\mu n^k)$ when the archive is reused to generate new solutions. The reason is that a small population may miss some regions of the Pareto front during the early stages of the search, and re-finding those solutions becomes difficult afterward. In contrast, reusing the archived solutions enables the algorithm to re-explore the previously missed regions easily.
    \item Reusing the archive can benefit exploration. By comparing the results of Theorem~\ref{thm:archive_benchmark} and Theorem~\ref{thm:reusing_benchmark} in Table~\ref{tab:summary}, we show that the expected running time of SMS-EMOA for solving \aojzj\ using an archive that only stores solutions is $\Omega(n^k)$, whereas it is reduced to $O(\max\{\mu n^a/k, n^{k-a+1},n^2\log n\})$ (where $\mu \geq 2$ and $2 \leq a < k$) when the archive is reused. The reason is that, for most practical MOEAs like SMS-EMOA, when the number of non-dominated solutions exceeds the population size, diversity maintenance mechanisms in the objective space are typically employed to prune the population. As a result, some non-dominated solutions that are sparse in the decision space (often beneficial for exploration), but are crowded in the objective space, are likely to be discarded. Since such solutions are preserved in the archive, allowing reusing the archive helps the algorithm explore sparse regions in the decision space.
    \item Reusing the archive with a small population can be better than directly using a large population size. By comparing the results in the last two rows of Table~\ref{tab:summary}, we show that for SMS-EMOA solving \ojzj\ and \aojzj, reusing the archive can reduce the upper bound on the expected running time compared to using a large population size. The reason is that reusing the archive allows for a constant population size $\mu \ge 2$, which brings a speedup of $\Theta(n)$. 
    On \aojzj, this improvement becomes evident only when $a > k/2$. 
    Actually, Bian et al.~\shortcite{bian2024archive} showed that compared to using a large population size, using an archive can provide a polynomial-speedup for SMS-EMOA solving OneMinMax and LeadingOnesTrailingZeroes. In this paper, we also show that reusing the archive does not diminish this advantage.
\end{itemize}

We also conduct experiments on the artificial problems studied in this paper and four well-known practical problems: the multi-objective 0/1 knapsack~\cite{teghem_multi-objective_1994}, TSP~\cite{ribeiro2002study}, QAP~\cite{knowles2003instance}, and NK-landscapes~\cite{Aguirre2004}. The results validate our theoretical findings.

It is worth pointing out that there have been many important studies on the running time of MOEAs over the past two decades. Early theoretical works~\cite{laumanns-nc04-knapsack,LaumannsTEC04,Neumann07,Giel10,Neumann10,doerr2013lower,Qian13,qian-ppsn16-hyper,bian2018tools} mainly focus on simple MOEAs like \mbox{(G)SEMO}. Recently, researchers have begun to examine practical MOEAs.
Huang \textit{et al.}~\shortcite{huang2021runtime} investigated MOEA/D. Zheng \textit{et al.}~\shortcite{zheng2021first} conducted the first theoretical analysis of NSGA-II. Bian \textit{et al.}~\shortcite{bian23stochastic} analyzed the running time of SMS-EMOA and showed that stochastic population update can bring acceleration. Wietheger and Doerr~\shortcite{wietheger23nsgaiii} showed that NSGA-III~\cite{deb2014nsgaiii} outperforms NSGA-II on $3$OneMinMax. Ren \textit{et al.}~\shortcite{Ren2024spea2} analyzed SPEA2.
Some other works on well-established MOEAs include~\cite{bian2022better,zheng2023manyobj,cerf2023first,dang2023analysing,dang2023crossover,doerr2023lower,doerr2023crossover,dang2024level,Opris2024nsgaiii,ren2024multimodel,wietheger2024near,doerr2025runtime,doerr2025speeding,opris2025first}. There are also a series of theoretical studies showing that MOEAs can achieve good approximation guarantees for single-objective submodular optimization with constraints~\cite{qian2015subset,Friedrich2015sub,qian2017subset,qian2019sub}

% \begin{table*}[t]
%   \centering
%   \begin{tabular}{lcc}
%   \toprule
%      &\textbf{\ojzj} & \textbf{\aojzj} \\ \midrule
%      \multirow{2}{*}[1ex]{Original SMS-EMOA}  & $O(\mu n^k)$~\cite{bian23stochastic}  & $O(\mu \cdot \max\{n^a/k, n^{k-a} \})$ (Theorem~\ref{thm:population_benchmark})\\
%      &  [$\mu \ge n-2k+3$ ] & [$\mu \ge n-2k+5$]\\
%      \cmidrule(lr){2-3}
%       \multirow{2}{*}[1ex]{SMS-EMOA + Archive}  & $n^{\Omega(n)}$ (Theorem~\ref{thm:ojzj1})  & $\Omega(n^k)$ (Theorem~\ref{thm:archive_benchmark})\\
%     &  [$\mu =2$] & [$\mu \le n-2k+1$]\\
%     \cmidrule(lr){2-3}
%      \multirow{2}{*}[1ex]{SMS-EMOA + Reusing the Archive}  & $O(\mu n^k)$ (Theorem~\ref{thm:ojzj2})  & $O(\max\{\mu^2 n^a, n^{k-a+1}\})$ (Theorem~\ref{thm:reusing_benchmark})\\
%     &  [$\mu \ge 2$] & [$\mu \ge 2$]\\
%     % \cmidrule(lr){2-2}  
%     \bottomrule
%   \end{tabular}
%   % \vspace{-0.1cm}
%   \caption{sss
%   }
%   \label{tab:summary}
% \end{table*}

\section{Preliminaries}\label{sec-preliminary}

In this section, we first give basic concepts in multi-objective optimization, followed by a description of the SMS-EMOA algorithm and the proposed archive reusing mechanism. Lastly, we describe the \ojzj\ problem and its variant \aojzj, studied in this paper.

\subsection{Multi-objective Optimization}

Multi-objective optimization aims to optimize two or more objective functions simultaneously, as shown in Definition~\ref{def_MO}. The objectives are typically conflicting, meaning that there is no canonical complete order in the solution space $\mathcal{X}$. To compare solutions, we use the \emph{domination} relationship in Definition~\ref{def_Domination}.  
A solution is \emph{Pareto optimal} if no other solution in $\mathcal{X}$ dominates it. The set of objective vectors corresponding to all Pareto optimal solutions forms the \emph{Pareto front}. The goal of multi-objective optimization is to find the Pareto front or its good approximation.

\begin{definition}[Multi-objective Optimization]\label{def_MO}
	Given a feasible solution space $\mathcal{X}$ and objective functions $f_1,f_2,\ldots, f_m$, multi-objective optimization can be formulated as
	\[
	\max_{\bmx\in
		\mathcal{X}}\bmf(\bmx)=\max_{\bmx \in
		\mathcal{X}} \big(f_1(\bmx),f_2(\bmx),...,f_m(\bmx)\big).
	\]
\end{definition}
\begin{definition}\label{def_Domination}
	Let $\bm f = (f_1,f_2,\ldots, f_m):\mathcal{X} \rightarrow \mathbb{R}^m$ be the objective vector. For two solutions $\bmx$ and $\bmy\in \mathcal{X}$:
	\begin{itemize}
		\item $\bmx$ \emph{weakly dominates} $\bmy$ (denoted as $\bmx \succeq \bmy$) if for all $1 \leq i \leq m, f_i(\bmx) \geq f_i(\bmy)$;
		\item $\bmx$ \emph{dominates} $\bmy$ (denoted as $\bmx\succ \bmy$) if $\bm{x} \succeq \bmy$ and $f_i(\bmx) > f_i(\bmy)$ for some $i$;
		\item $\bmx$ and $\bmy$ are \emph{incomparable} if neither $\bmx\succeq \bmy$ nor $\bmy\succeq \bmx$.
	\end{itemize}
\end{definition}

\subsection{SMS-EMOA and Archiving Mechianism}

SMS-EMOA~\cite{beume2007sms} presented in Algorithm~\ref{alg:sms} is a popular MOEA, which employs non-dominated sorting and hypervolume indicator to update the population. It starts with an initial population of $\mu$ solutions (line~1). In each generation, it randomly selects a solution $\bm{x}$ from the current population (line~3) for reproduction. Afterwards, bit-wise mutation flips each bit of $\bmx$ with probability $1/n$ to produce an offspring $\bmx'$ (line~4). Then, the union of the current population and the offspring solution is partitioned into non-dominated sets $R_1,\ldots,R_v$ (line~5), where $R_1$ contains all non-dominated solutions in $P\cup \{\bm{x}'\}$, and $R_i$ ($i \ge 2$) contains all non-dominated solutions in $P\cup \{\bm{x}'\} \setminus \cup_{j=1}^{i-1} R_j$. A solution $\bm{z} \in R_v$ is then removed (lines~6--7) by minimizing $\Delta_{\bmr}(\bm{x}, R_v) := HV_{\bmr}(R_v) - HV_{\bmr}(R_v \setminus \{\bm{x}\})$, where $HV_{\bmr}(X)$ denotes the hypervolume of the solution set $X$ with respect to a reference point $\bmr \in \mathbb{R}^m$ ($\forall i, r_i\le \min_{\bmx\in \mathcal{X}}f_i(\bmx)$), i.e., the volume between the reference point and the objective vectors of the solution set. A larger hypervolume indicates better approximation of the Pareto front in terms of convergence and diversity. For bi-objective problems, as defined in the original SMS-EMOA~\cite{beume2007sms}, the algorithm omits the reference point and directly preserves the two boundary points, i.e., the two solutions with the maximum $ f_1 $ and $ f_2 $ values, respectively, allowing the hypervolume contribution to be calculated accordingly.

\begin{algorithm}[tb]
	\caption{SMS-EMOA}
	\label{alg:sms}
	\textbf{Input}: objective functions $f_1,f_2\ldots,f_m$, population size $\mu$\\
    \textbf{Output}: $\mu$ solutions from $\{0,1\}^n$
	\begin{algorithmic}[1] %[1] enables line numbers
		\STATE $P\leftarrow \mu$ solutions uniformly and randomly selected from $\{0,\! 1\}^{\!n}$ with replacement;
		\WHILE{criterion is not met}
		\STATE select a solution $\bmx$ from $P$ uniformly at random;
		\STATE apply bit-wise mutation on $\bmx$ to generate $\bmx'$;
		\STATE partition $P\!\cup \!\{\bmx'\!\}$ into non-dominated sets $R_1\!,...,\!R_v$;
		\STATE let $\bmz=\arg\min_{\bmx\in R_v}\Delta_{\bmr}(\bmx,R_v)$;
		\STATE $P\leftarrow (P\cup \{\bmx'\})\setminus \{\bmz\}$
		\ENDWHILE
		\RETURN $P$
	\end{algorithmic}
\end{algorithm}

An archive in MOEAs is used to store non-dominated solutions found during the search, preventing them from being discarded due to limited population capacity. Once a new solution is generated, the solution will be tested if it can enter the archive. If there is no solution in the archive that weakly dominates the new solution, then the solution will be placed in the archive, and meanwhile those solutions dominated by the new solution will be deleted from the archive. Algorithmic steps incurred by adding an archive in SMS-EMOA are given as follows. In Algorithm~\ref{alg:sms}, an empty set $A$ is initialized in line~1, the set $A$ instead of $P$ is returned in the last line, and
the following lines are added after line~4:
\begin{framed}\vspace{-0.8em}
  \begin{algorithmic}
  \IF{$\not \exists \bmz \in A$ such that $\bmz \succeq \bmx'$}
  \STATE $A \leftarrow (A \setminus\{\bmz \in A \mid \bmx' \succ \bmz\}) \cup \{\bmx'\}$
  \ENDIF
\end{algorithmic}\vspace{-0.8em}
\end{framed}\noindent

\subsection{Reusing the Archive}

Reusing the archive means allowing the solutions stored in it to re-participate in the evolutionary process. It can be materialized in different ways. Here we consider a straightforward one. During the parent selection, SMS-EMOA uniformly randomly selects a parent solution from population $ P $ with probability $ 1/2 $, and otherwise selects a parent solution from archive $ A $. Specifically, the following lines replace line~3 in Algorithm~\ref{alg:sms}:

\begin{framed}\vspace{-0.8em}
  \begin{algorithmic}
  \STATE sample $u$ from uniform distribution over $[0,1]$;
  \IF{$u<1/2$}
  \STATE select a solution $\bm{x}$ from $P$ uniformly at random;
  \ELSE
  \STATE select a solution $\bm{x}$ from $A$ uniformly at random;
  \ENDIF
\end{algorithmic}\vspace{-0.8em}
\end{framed}\noindent
In this paper, we set the archive reusing rate to $ 1/2 $, which can be adjusted in practice.

\subsection{Benchmark Problems}

Next, we introduce benchmark problems \ojzj\ and its variant \aojzj, studied in this paper. The \ojzj\ problem as presented in Definition~\ref{def:ojzj} is constructed based on the Jump problem~\cite{doerr2020theory}, and has been widely used in MOEAs' theoretical analyses~\cite{doerr2021ojzj,doerr2023nsgaojzj,lu2024imoea,ren2024multimodel}. Its first objective is the same as the Jump problem, which aims to maximize the number of 1-bits of a solution, except for a valley around $1^n$ (the solution with all 1-bits) where the number of 1-bits should be minimized. The second objective is isomorphic to the first, with the roles of 1-bits and 0-bits reversed. The Pareto front of the \ojzj \ problem is
$ \{(c, n+2k-c)\mid c\in[2k..n]\cup\{k, n+k\}\}$, whose size is $n-2k+3$, and the Pareto optimal solution corresponding to $(c, n+2k-c)$ is any solution with $(c-k)$ 1-bits. We use $S_I^* = \{ \bmx \mid |\bmx|_1 \in [k..n-k] \}$ and $F_{I}^*=\{(c, n+2k-c)\mid c\in[2k..n]\}$ to denote the internal part of Pareto set and Pareto front, respectively.

\begin{definition}[Doerr and Zheng~\shortcite{doerr2021ojzj}]\label{def:ojzj}
	The \ojzj\ problem is to find $n$ bits binary strings which maximize
    % \vspace{-0.3em}
	\[ f_1(\bmx) = \begin{cases}
		k+|\bmx|_1, & \text{if }|\bmx|_1 \leq n-k\text{ or } \bmx=1^n,\\
		n-|\bmx|_1, & \text{else},
	\end{cases}\]
    % \vspace{-0.4em}
	\[f_2(\bmx) = \begin{cases}
		k+|\bmx|_0, & \text{if }|\bmx|_0 \leq n-k\text{ or } \bmx=0^n,\\
		n-|\bmx|_0, & \text{else},
	\end{cases}\]
	where $k\in \mathbb{Z} \wedge 2\le k<n/2$, and $|\bmx|_1$ and $|\bmx|_0$ denote the number of 1-bits and 0-bits in $\bmx \in \{0,1\}^n$, respectively.
\end{definition}

\begin{figure}\centering
        \begin{minipage}[c]{0.48\linewidth}\centering
		\includegraphics[width=1\linewidth]{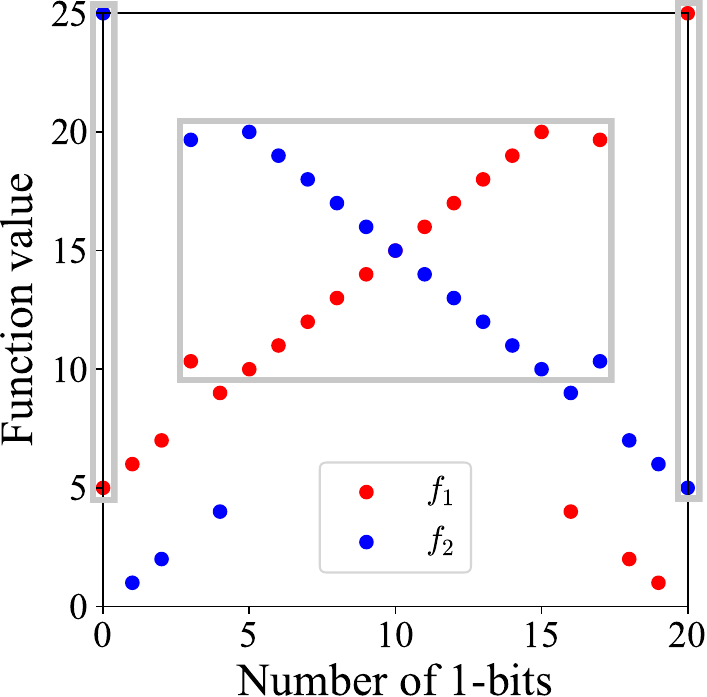}
	\end{minipage}
        \hspace{0.1em}
        \begin{minipage}[c]{0.48\linewidth}\centering
		\includegraphics[width=1\linewidth]{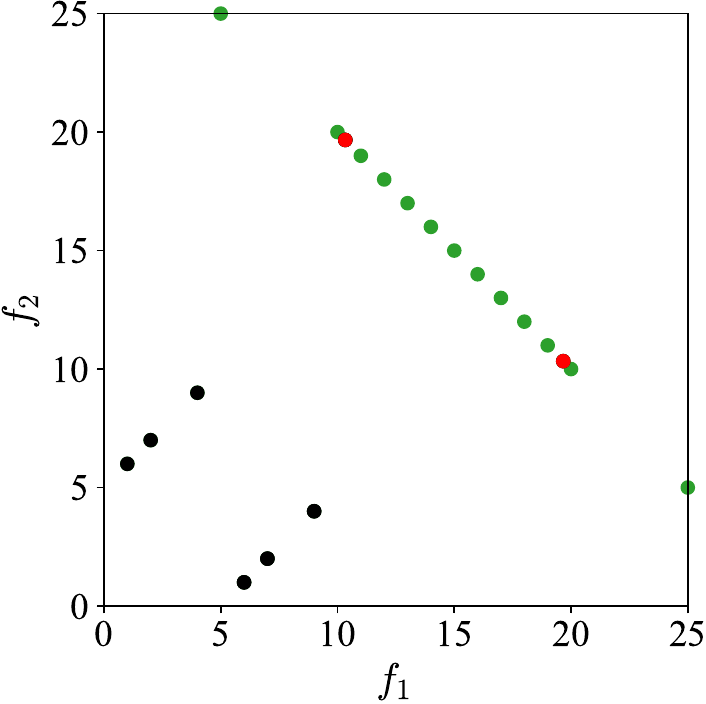}
	\end{minipage}
	\caption{Illustration of the \aojzj\  problem when $n=20$, $k=5$ and $a=2$.  The left subfigure: the function values $f_1$ and $f_2$ vs. the number of 1-bits of a solution; the right subfigure: $f_2$ vs. $f_1$.}\label{fig:ojzj}
\end{figure}

In this paper, we propose a new variant of \ojzj, denoted as \aojzj\ (\ojzj\ with stepping stone), as presented in Definition~\ref{new_ojzj}. Compared to \ojzj, the difference lies in that, among solutions whose number of 1-bits falls in the intervals $[1,\ldots,k-1]$ and $[n-k+1,\ldots,n-1]$, the solutions with $k-a$ and $n-(k-a)$ 1-bits are non-dominated. They lie between the internal Pareto front and the extremal points (i.e., $0^n$ and $1^n$) in the solution space, but their corresponding objective vectors are in a crowded region of the objective space. The right subfigure of Figure~\ref{fig:ojzj} shows an example of the objective vectors and the Pareto front. The red points indicate the two newly introduced Pareto front points, which we denote as $G = \left\{ \left(n - 1/n, 2k + 1/n \right), \left(2k + 1/n, n - 1/n \right) \right\}$. We also use $S_I^* = \{ \bmx \mid |\bmx|_1 \in [k..n-k] \}$ and $F_{I}^*=\{(c, n+2k-c)\mid c\in[2k..n]\}$ to denote the internal part of Pareto set and Pareto front. The left subfigure of Figure~\ref{fig:ojzj} shows the values of $f_1$ and $f_2$ with respect to the number of $1$-bits of a solution. The boxed regions represent the Pareto front. We use \aojzj\ to characterize a class of problems that some non-dominated solutions are sparse in the solution space, making them helpful for exploration, but crowded in the objective space, reflecting scenarios where the solution space and objective space exhibit inconsistency.

\begin{definition} \label{new_ojzj}
The \aojzj\ problem is to find $n$ bits binary strings which maximize
    % \vspace{-0.3em}
\begin{align}
    &f_1(\bmx) = \begin{cases}
        2k+1/n, & \text{if }|\bmx|_1 = k-a,\\
        n-1/n, & \text{else if }|\bmx|_1 = n-(k-a),\\
		k+|\bmx|_1, & \text{else if }|\bmx|_1 \le n-k  \text{ or } \bmx=1^n,\\
		n-|\bmx|_1, & \text{else},
	\end{cases} \\
&f_2(\bmx) = f_1(\bar{\bmx}),
\end{align}
	where $a,k\in \mathbb{Z},  3\le k<n/2, 2\le a < k$, and $\bar{\bmx} = (1-x_1,\cdots,1-x_n)$.
\end{definition}

\section{Provable Benefits of Reusing the Archive}

In this section, we prove that reusing the archive is beneficial for SMS-EMOA solving \ojzj\ and \aojzj, compared to using an archive without reuse. 
% Through analysis, we show that reusing the archive enhances Pareto front coverage and promotes exploration capability. 
Note that SMS-EMOA generates and evaluates only one offspring solution in each generation, thus its running time is equivalent to the number of generations. 
We give Stirling's formula in Lemma~\ref{lemma:sti} that will be frequently used in the proofs.
\begin{lemma}[Robbins~\shortcite{robbins1955remark}]\label{lemma:sti}
    For a positive integer $n$, we have Stirling's formula as $\sqrt{2\pi n}(n/e)^n \le n! \le e^{1/12}\sqrt{2\pi n}(n/e)^n$
\end{lemma}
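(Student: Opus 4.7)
The plan is to derive Stirling's formula by studying the auxiliary sequence $a_n := n!/\bigl(\sqrt{n}\,(n/e)^n\bigr)$ and proving that it converges to $\sqrt{2\pi}$ from above, together with a quantitative tail bound on the convergence. First I would form the ratio $a_n/a_{n+1}$ and take logarithms, which after simplification yields
\[
\ln\!\bigl(a_n/a_{n+1}\bigr) \;=\; \bigl(n+\tfrac{1}{2}\bigr)\ln\!\bigl(1+\tfrac{1}{n}\bigr)\;-\;1.
\]
Plugging in the Taylor expansion $\ln(1+x) = x - x^2/2 + x^3/3 - \cdots$ with $x=1/n$, the constant and $1/n$ terms cancel, leaving a positive series whose leading term is $O(1/n^2)$. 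This proves that $\{a_n\}$ is strictly decreasing and bounded below, hence convergent to some positive constant $C$.

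Next I would pin down $C$ using Wallis's product $\prod_{k=1}^{\infty}\frac{(2k)^2}{(2k-1)(2k+1)} = \pi/2$. Rewriting the partial product as a ratio of factorials and substituting the ansatz $n! \sim C\sqrt{n}\,(n/e)^n$ on both sides yields, after cancellation, an equation that forces $C = \sqrt{2\pi}$. This establishes the asymptotic constant and, since $a_n$ decreases to its limit, gives the lower bound $a_n \ge \sqrt{2\pi}$ immediately, i.e.\ $n! \ge \sqrt{2\pi n}\,(n/e)^n$.

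For the upper bound, I would estimate the tail $\ln(a_n/\sqrt{2\pi}) = \sum_{k=n}^{\infty}\ln(a_k/a_{k+1})$ more carefully. Using the Taylor expansion above, each summand can be bounded by a telescoping quantity of the form $\tfrac{1}{12}\bigl(\tfrac{1}{k}-\tfrac{1}{k+1}\bigr)$ — the factor $1/12$ arising from the specific coefficients $\tfrac{1}{3}-\tfrac{1}{4}+\tfrac{1}{5}-\cdots$ of the alternating series, grouped as a positive-term series with controllable remainder. Summing telescopically gives $\ln(a_n/\sqrt{2\pi}) \le 1/(12n)$, whence $a_n \le \sqrt{2\pi}\,e^{1/(12n)} \le \sqrt{2\pi}\,e^{1/12}$ for all $n \ge 1$. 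Multiplying through by $\sqrt{n}\,(n/e)^n$ yields the stated two-sided bound.

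The main obstacle is obtaining the sharp constant $1/12$ in the upper bound: the mere convergence $a_n \to \sqrt{2\pi}$ is elementary, but extracting the precise coefficient requires either a careful integral-remainder form of the Euler--Maclaurin formula applied to $\ln n! = \sum_{k=1}^{n}\ln k$ with Bernoulli number $B_2 = 1/6$, or the alternating-series telescoping argument sketched above, where one must verify that the positive grouping dominates the desired $1/(12k(k+1))$ term for every $k \ge 1$ rather than only asymptotically. Since this lemma is invoked only as a standard black-box tool in the subsequent running-time analyses, the paper sensibly cites Robbins directly rather than redeveloping the proof.
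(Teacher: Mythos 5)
The paper does not prove this lemma at all: it is quoted verbatim from Robbins (1955) and used purely as a black-box tool, so there is no in-paper argument to compare against. Your sketch is essentially Robbins' own proof and is sound in outline: the identity $\ln(a_n/a_{n+1})=(n+\tfrac12)\ln(1+\tfrac1n)-1$ is correct, monotonicity plus Wallis gives $a_n\downarrow\sqrt{2\pi}$ and hence the lower bound, and the telescoping tail estimate gives the $e^{1/(12n)}\le e^{1/12}$ factor. The one place you rightly flag as delicate --- showing $\ln(a_k/a_{k+1})\le\tfrac{1}{12}\bigl(\tfrac1k-\tfrac1{k+1}\bigr)$ for every $k\ge1$, not just asymptotically --- is handled most cleanly not by regrouping the alternating series of $\ln(1+\tfrac1k)$ (where the sign pattern makes uniform domination awkward) but by Robbins' substitution $x=(2k+1)^{-1}$ in $\tfrac12\ln\tfrac{1+x}{1-x}=\sum_{j\ge0}\tfrac{x^{2j+1}}{2j+1}$, which turns the summand into the manifestly positive series $\sum_{j\ge1}\tfrac{1}{(2j+1)(2k+1)^{2j}}$ bounded above by the geometric sum $\tfrac13\sum_{j\ge1}(2k+1)^{-2j}=\tfrac{1}{3((2k+1)^2-1)}=\tfrac{1}{12k(k+1)}$; this also yields positivity (hence monotonicity) for free. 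With that substitution your argument closes completely, and it in fact proves the sharper bound $e^{1/(12n+1)}\le a_n/\sqrt{2\pi}\le e^{1/(12n)}$, of which the lemma as stated is a weakening.
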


\subsection{Archive Reuse Enhances Pareto Front Coverage}\label{sec:coverage}

First, we theoretically show that reusing the archive could enhance Pareto front coverage. In Theorem~\ref{thm:ojzj1}, we prove that when using a small population size $\mu=2$ and an archive without reuse, the expected number of generations for SMS-EMOA solving \ojzj\ is $n^{\Omega(n)}$. We use this example to illustrate a scenario where, during the early stages of the search, a small population may miss some regions easy to reach, and re-finding these regions later can become difficult as the population becomes more uniformly distributed and stabilizes. The landscape of OneJumpZeroJump (consisting of a plain and two valleys) is simpler than those of real-world problems, and our analysis relies on a population size $\mu=2$. For more complex, real-world problems, similar issues may arise with larger populations. 

\begin{theorem}\label{thm:ojzj1}
  For SMS-EMOA solving OneJumpZeroJump with $k \le n/8$, if using a population size $\mu=2$ and an archive without reuse, then the expected running time for finding the Pareto front is $n^{\Omega(n)}$.
\end{theorem}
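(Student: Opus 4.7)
My plan is to lower-bound the runtime by the expected time to generate a single, specific internal Pareto optimum---the one with exactly $n/2$ ones (objective vector $(n/2+k,n/2+k)\in F_I^*$ since $k\le n/8<n/2$). Without archive reuse, every offspring is produced from one of the at most two parents in $P$, so it suffices to control the 1-counts of those two parents throughout the whole run.

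The key structural ingredient will be a monotonicity lemma: if $c_1^{(t)}\le c_2^{(t)}$ denote the sorted 1-counts of $P$ after generation $t$, then $c_1^{(t)}$ is non-increasing and $c_2^{(t)}$ is non-decreasing. For the proof I would do a short case analysis of the $\mu=2$ update with the boundary-preserving hypervolume selector on bi-objective inputs. An offspring is either (i) Pareto-dominated by an incumbent---this happens, for instance, whenever both incumbents sit in the internal Pareto region $[k,n-k]$ and the offspring falls into a valley---and hence is discarded by non-dominated sorting; (ii) mutually non-dominated with both incumbents and \emph{middle} in objective space, so the unique solution with finite hypervolume contribution, which is then removed; or (iii) strictly more extreme in one objective than the corresponding extreme incumbent, in which case it replaces that incumbent. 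Only case~(iii) alters $(c_1,c_2)$, and always outward. Initial solutions lie in $[k,n-k]$ with probability $1-2^{-\Omega(n)}$ by Chernoff, and this remains true forever by induction using~(i).

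I would then condition on the good initialization event $E_G$ that both initial 1-counts lie in $[k,n-k]$ and satisfy $c_1^{(0)}\le n/2-a\sqrt n$ and $c_2^{(0)}\ge n/2+a\sqrt n$ for a small absolute constant $a>0$. Standard anti-concentration for the binomial distribution on $n$ bits gives $\Pr[E_G]=\Omega(1)$. Let $\tau^{\ast}$ be the first generation with $c_1^{(t)}\le n/4$ and $c_2^{(t)}\ge 3n/4$; since the per-generation probability of a unit decrement of $c_1$ is $\Omega(1)$ whenever $c_1\ge n/4$ (flip a single 1-bit, rate $\Theta(c_1/n)=\Theta(1)$), and analogously for $c_2$, a short drift plus concentration argument gives $\tau^{\ast}=O(n\log n)$ with probability $1-o(1)$.

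The central quantitative inequality is that bit-wise mutation of a parent with $c$ ones produces an offspring with exactly $n/2$ ones with probability at most $2/|n/2-c|!$, since that event requires at least $|n/2-c|$ specific bits to be flipped and $\Pr[\text{at least }m\text{ bits flipped}]\le 2/m!$. Under $E_G$ and monotonicity, this per-generation probability is bounded by $2/(a\sqrt n)!=e^{-\Omega(\sqrt n\log n)}$ throughout $[0,\tau^{\ast}]$ and by $2/(n/4)!=n^{-\Omega(n)}$ (via Stirling, Lemma~\ref{lemma:sti}) afterwards. Hence the expected number of $n/2$-offspring generated in the first $\tau^{\ast}=O(n\log n)$ generations is $o(1)$, while each later generation contributes at most $n^{-\Omega(n)}$. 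Conditional on $E_G$ the expected time until the middle Pareto optimum enters the archive is therefore $n^{\Omega(n)}$, and unconditionally $\expct[T]\ge \Pr[E_G]\cdot n^{\Omega(n)}=n^{\Omega(n)}$. I expect the monotonicity lemma to be the main obstacle: one must rule out every tie-breaking behaviour of the boundary-preserving hypervolume selector that could let $c_1$ jump upward or $c_2$ downward---especially when an offspring shares its 1-count with an incumbent, and when the initial population sits in a valley and needs a single base step before entering the monotone regime.
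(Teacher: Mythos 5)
Your proposal is correct, but it takes a genuinely different route from the paper's proof. The paper's argument conditions on two events: $A_1$, that the two initial 1-counts lie in $[k..n-k]$ on opposite sides of $n/2$ (probability about $1/2$), and $A_2$, that the population actually reaches $0^n$ and $1^n$ before any $n/2$-ones solution is produced; $A_2$ is established via a race argument, multiplying conditional probabilities $1/(1+a_i)$ along the outward trajectory and bounding $\sum_i a_i\le 4e(e-1)$, plus a separate bound showing the $k$-bit valley jump to $0^n$ wins its race (this is where $k\le n/8$ is really used). The bottleneck is then the $n/2$-bit jump from $0^n$ or $1^n$. You instead never require the extremal points to be reached: you make the outward monotonicity of the two sorted 1-counts an explicit lemma (the paper uses the same boundary-preservation property only implicitly, via ``the maximum $f_1$/$f_2$ value never decreases''), condition on an initial $\Theta(\sqrt n)$ separation from $n/2$ via binomial anti-concentration, union-bound the $e^{-\Omega(\sqrt n\log n)}$ per-step probability of hitting the centre over the $O(n\log n)$ generations needed for the counts to pass $n/4$ and $3n/4$, and only then invoke the $n^{-\Omega(n)}$ jump bound. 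This buys you a cleaner constant (no $e^{-8e(e-1)}$ factor), removes the delicate race computation and the dependence on entering the valleys, and would in fact work for any $k<n/4$ with $n-2k=\Omega(n)$; the cost is the extra anti-concentration step and the phase-length concentration argument, whereas the paper's three-event decomposition pinpoints the bottleneck more explicitly. Your monotonicity lemma does hold under the stated tie-breaking (a duplicate objective vector has zero hypervolume contribution and is the one removed, and valley offspring are always dominated by any incumbent in $S_I^*\cup\{0^n,1^n\}$ on the appropriate side), so the case analysis you flag as the main obstacle goes through; just note that the phrase ``finite hypervolume contribution'' in your case (ii) should read ``non-boundary, hence the solution removed.''
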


\begin{proof}
We first show that $A_1$, which denotes the event that the two initial solutions belong to $S_I^*$ (i.e., solutions with number of 1-bits between $k$ and $n - k$) and are located on opposite sides of the central region of the Pareto front (i.e., solutions with $\lfloor n/2\rfloor$ 1-bits), occurs with probability at least $1 - e^{-\Omega(n)} - 1/\Omega(\sqrt{n})$.
Then conditioned on the occurrence of event $A_1$, we prove that $A_2$, which denotes the event that the central point on the Pareto front (i.e., solutions with $\lfloor n/2\rfloor$ 1-bits) can only be generated from $0^n$ and $1^n$, occurs with probability at least $e^{-8e(e-1)-1/\Omega(n)}$. Finally, conditioned on the occurrence of events $A_1$ and $A_2$, we show that the expected number of generations for finding the whole Pareto front is at least $n^{\Omega(n)}$. 

Let $A_1$ denote the event that the two initial solutions belong to $S_I^*$ (i.e., solutions with number of 1-bits between $k$ and $n - k$) and are located on opposite sides of the central region of the Pareto front. Specifically, for even $n$, the two initial solutions $\bm{x}$ and $\bm{y}$ satisfy $ k \le |\bm{x}|_1 < n/2$ and $ n/2 < |\bm{y}|_1 \le n-k$; for odd $n$, the two initial solutions $\bm{x}$ and $\bm{y}$ satisfy $k \le |\bm{x}|_1 < (n - 1)/2$ and $(n -1)/2 < |\bm{y}|_1 \le n-k$. Then, we show that event $A_1$ occurs with probability at least $1/2- e^{-\Omega(n)} -1/\Omega(\sqrt{n})$. For an initial solution $\bm{x}$, it is generated uniformly at random, i.e., each bit in $\bm{x}$ can be $1$ or $0$ with probability $1/2$, respectively. Then, by Chernoff bound and $n-2k=\Omega(n)$, $\bmx$ has number of 1-bits between $k$ and $n-k$ with probability at least $1-e^{\Omega(n)}$. For even $n$, the number of 1-bits in $\bm{x}$ is exactly $n/2$ with probability of 
\begin{align}\label{eq:binom_stiling1}
    \binom{n}{n/2} \cdot \frac{1}{2^n} &= \frac{n!}{(n/2)!(n/2)!}\cdot \frac{1}{2^n}\\
    &\le \frac{e^{1/12} \sqrt{2\pi n}(n/e)^n}{\pi n (n/(2e))^n}\cdot \frac{1}{2^n}\\
    &= \sqrt{\frac{2e^{1/6}}{\pi n}},
\end{align}
where the inequality holds by Stirling's formula in Lemma~\ref{lemma:sti}. Thus, by symmetry, we have $\Pr[k \le |\bm{x}|_1 < n/2] = \Pr[n/2 < |\bm{x}|_1 \le n-k]\ge (1-e^{-\Omega(n)} - \sqrt{2e^{1/6}/(\pi n)})/2$. Then, we can derive that 
\begin{align}\label{Eq:lower_A0}
    \Pr[A_1] =& \Pr[(k \le |\bm{x}|_1 < n/2) \wedge (n/2 < |\bm{y}|_1 \le n-k)] \\&+ \Pr[(k \le |\bm{y}|_1 < n/2) \wedge (n/2 < |\bm{x}|_1 \le n-k)]\\
    \ge&  2 \cdot \Big(\Big(1- e^{-\Omega(n)} -\sqrt{\frac{2e^{1/6}}{\pi n}}\Big)/2\Big)^2\\
    \ge& \frac{1}{2} - e^{\Omega(n)}-\frac{1}{\Omega(\sqrt{n})}.
\end{align}
For odd $n$, the number of 1-bits in $\bm{x}$ is exactly $(n-1)/2$ with probability of
\begin{align}\label{eq:binom_stiling2}
    &\binom{n}{(n-1)/2} \cdot \frac{1}{2^n} = \frac{n!}{((n-1)/2)!((n+1)/2)!}\cdot \frac{1}{2^n}\\
    &\le \frac{(n-1)!}{((n-1)/2)!((n-1)/2)!}\cdot 2 \cdot \frac{1}{2^n}\\
    &\le \frac{e^{1/12} \sqrt{2\pi (n-1)}((n-1)/e)^{(n-1)}}{\pi (n-1) ((n-1)/(2e))^{(n-1)}}\cdot \frac{1}{2^{n-1}}\\
    &= \sqrt{\frac{2e^{1/6}}{\pi (n-1)}},
\end{align}
where the second inequality also holds by Stirling's formula. Then, by symmetry, we have $\Pr[|\bmx|_1 = (n+1)/2] \le \sqrt{2e^{1/6}/(\pi(n-1))}$. Thus, similar to the analysis in Eq.~\ref{Eq:lower_A0}, we can derive that
\begin{align}
    \Pr&[A_1] \\={}& \Pr[(k \le |\bm{x}|_1 < (n-1)/2) \wedge ((n-1)/2 < |\bm{y}|_1 \le n-k)] \\+& \Pr[(k \le |\bm{y}|_1 < (n-1)/2) \wedge ((n-1)/2 < |\bm{x}|_1 \le n-k)]\\
\ge{}& \Pr[(k \le |\bm{x}|_1 < (n-1)/2) \wedge ((n+1)/2 < |\bm{y}|_1 \le n-k)] \\{}+& \Pr[(k \le |\bm{y}|_1 < (n-1)/2) \wedge ((n+1)/2 < |\bm{x}|_1 \le n-k)]\\
    \ge{}&  2 \cdot \Big(\Big(1-e^{-\Omega(n)}-2\sqrt{\frac{2e^{1/6}}{\pi (n-1)}}\Big)/2\Big)^2\\
    ={}& \frac{1}{2}- e^{-\Omega(n)} -\frac{1}{\Omega(\sqrt{n})}.
\end{align}
Thus, event $A_1$ occurs with probability at least $1/2 - e^{-\Omega(n)}- 1/\Omega(\sqrt{n})$. 

Conditioned on the occurrence of $A_1$, let $A_2$ denote the event that the central point on the Pareto front (i.e., solutions with $n/2$ 1-bits for even $n$ and $(n-1)/2$ 1-bits for odd $n$) can only be generated from $0^n$ or $1^n$. Then, we show that $A_2$ occurs with probability at least $e^{-8e(e - 1)} \cdot e^{-1/\Omega(n)}$. We note that in the original SMS-EMOA~\cite{beume2007sms}, when solving bi-objective problems, the algorithm directly preserves the two boundary points, i.e., the two solutions with the maximum $ f_1 $ and $ f_2 $ values respectively. Thus, the maximum $f_1$ and $f_2$ value among the Pareto optimal solutions in $P\cup \{\bm{x}'\}$ will not decrease, where $\bm{x}'$ is the offspring solution generated in each generation. 
 Assume that $n$ is even and the solution with the maximum $f_2$ value in the population is $\bm{x}^*$. According to the definition of \ojzj\ in Def~\ref{def:ojzj}, $\bm{x}^*$ has the minimum number of 1-bits. Suppose that in some generation, $\bmx^*$ such that $|\bmx^*|_1 = i$, is selected as the parent solution. By flipping a single 1-bits with probability $(i/n)\cdot (1-1/n)^{n-1} \ge i/(en)$, the maximum $f_2$ value of population increases. Or by flipping $n/2-i$ 0-bits with probability $(1-1/n)^{n/2-i}\cdot \binom{n-i}{n/2-i}/ n^{n/2-i} \le  \binom{n-i}{n/2-i}/( n^{n/2-i})$, a solution with $n/2$ 1-bits is found. Thus, if choosing $\bmx^*$ as the parent, the conditional probability of increasing $f_2$ value rather than finding a solution with $n/2$ 1-bits is at least
 \begin{align}\label{eq:decrease_by_1}
     &\frac{i/(en)}{i/(en) + \binom{n-i}{n/2-i}/(n^{n/2-i})} = \frac{1}{1 + (en/i) \binom{n-i}{n/2-i}/n^{n/2-i}}\\
     &= \frac{1}{1+(en/i)\cdot \frac{(n-i)!}{(n/2-i)!(n/2)! n^{n/2-i}} } \ge \frac{1}{1+ \frac{en}{i(n/2-i)!}},
 \end{align}
where the inequality holds by $(n-i)!/(n/2)! < n^{n/2-i}$. Let $a_i := en/((n/2-i)!i)$. Then, when choosing $\bmx^*$ as the parent solution, $f_2$ decreases to $n$ (i.e., the solutions with $k$ 1-bits is found) before finding the solution with $n/2$ 1-bits with probability at least
\begin{align}
    \prod_{i=k+1}^{n/2-1} \frac{1}{1+a_i} \ge e^{-\sum_{i=k+1}^{n/2-1} a_i},
\end{align}
where the inequality holds by $1+b\le e^b$ for any $b\in \mathbb{R}$. For $\sum_{i=k+1}^{n/2-1}a_i$, we have
\begin{align}
    \sum_{i=k+1}^{n/2-1} a_i &=  \sum_{i=\lceil n/4 \rceil}^{n/2-1} a_i + \sum_{i= k+1 }^{\left\lceil n /4 \right\rceil -1} a_i\\
    &= \sum_{i=\lceil n/4 \rceil}^{n/2-1} \frac{en}{i(n/2-i)!} + \sum_{i= k+1 }^{\left\lceil n /4 \right\rceil -1} \frac{en}{i(n /2 -i)!}\\
    &\le \sum_{i=\lceil n/4 \rceil}^{n/2-1} 4e\cdot \frac{1}{(n/2-i)!} + \sum_{j = \left\lfloor n /4 \right\rfloor + 1 }^{n /2 - k -1} \frac{en}{j!}\\
    &\le 4e\sum_{j=1}^{\lfloor n/4 \rfloor} \frac{1}{j!} + 4e \sum_{j = \left\lfloor n/ 4 \right\rfloor + 1 }^{n /2 - k -1}  \frac{1}{(j-1)!}\\
    &= 4e(\sum_{j=1}^{n /2 - k -2} \frac{1}{j!} + \frac{1}{\lfloor n/4 \rfloor}!)\\
    &\le 4e(e-1+\frac{1}{\lfloor n/4 \rfloor !}) \approx 4e(e-1),
\end{align}
where the last inequality holds by $\sum_{j=1}^{\infty} 1/j! = e-1$. Thus, when choosing $\bmx^*$ as the parent, a solution with $k$ 1-bits is found before the solutions with $n/2$ 1-bits with probability at least $e^{-4e(e-1)}$. Then, similar to the analysis in Eq.~\eqref{eq:decrease_by_1}, when choosing $\bmx^*$ with $k$ 1-bits as the parent solution, the probability of finding $0^n$ before the solution with $n/2$ 1-bits is at least
\begin{align}
    &\frac{1/(en^k)}{1/(en^k) + \binom{n-k}{n/2-k}/(n^{n/2-k})} = \frac{1}{1 + en^k \binom{n-k}{n/2-k}/n^{n/2-k}} \\
    &= \frac{1}{1+en^k\cdot \frac{(n-k)!}{(n/2-k)!(n/2)! n^{n/2-k}}}\\
    &\ge \frac{1}{1+ \frac{en^k}{(n/2-k)!}}\ge  \exp\Big(-\frac{en^k}{(n/2-k)!}\Big) \\
    &\ge \exp\Big(-\frac{en^k}{\sqrt{2\pi(n /2-k)}((n /2 -k)/e )^{n /2-k}}\Big)\\
    &\ge e^{-1/\Omega(n)},
\end{align}
where the first inequality holds by $(n-k)!/(n/2)! < n^{n/2-k}$, the second inequality holds by $1+b\le e^b$ for any $b\in \mathbb{R}$, the third inequality holds by Stirling's formula $\sqrt{2\pi n} (n/e)^{n} \le n!$, and the last inequality holds by $k\le n/8$. In summary, when the algorithm selects the solution $\bm{x}^*$ with the maximum $f_2$ value as the parent, the probability that it finds $0^n$ before finding a solution with $n/2$ 1-bits is at least  $e^{-4e(e-1)} \cdot e^{-1/\Omega(n)}$. Symmetrically, when the solution $\bm{y}^*$ with the maximum $f_1$ value is selected as the parent, the algorithm reaches $1^n$ before finding a solution with $n/2$ 1-bits with the same probability. Thus, the event $A_2$ (i.e., a solution with $n/2$ 1-bits is first generated from $0^n$ or $1^n$) happens with probability at least $(e^{-4e(e-1)} \cdot e^{-1/\Omega(n)})^2 = e^{-8e(e-1)-1/\Omega(n)}$. 
For odd $n$, the analysis is similar to the case above, by replacing the central point on the Pareto front from solutions with $n/2$ 1-bits to solutions with $(n - 1)/2$ 1-bits.

Finally, conditioned on the occurrence of events $A_1$ and $A_2$, we show that the expected number of generations for finding the whole Pareto front is at least $n^{\Omega(n)}$. In each generation, the SMS-EMOA can find the solution with $n/2$ 1-bits (for even $n$) by selecting $0^n$ (or $1^n$) from the population, flipping $n/2$ 1-bits (or 0-bits) with probability
\begin{align}
    &(1-1/n)^{n/2} \cdot \frac{\binom{n}{n/2}}{n^{n/2}} \le \frac{1}{\sqrt{e}} \cdot \frac{e^{1/12} \sqrt{2\pi n}(n/e)^n}{\pi n (n/(2e))^n} \cdot \frac{1}{n^{n/2}} \\
    &\le  \sqrt{\frac{2}{e^{5/6}\pi n}} \cdot \Big(\frac{4}{n}\Big)^{n/2} = 1/n^{\Omega(n)},
\end{align}
where the first inequality holds by Stirling's formula. For odd $n$, the analysis is similar. Combining the analysis, the expected number of generations for finding the whole Pareto front is at least $(1/2- e^{-\Omega(n)} -1/\Omega(\sqrt{n}))\cdot e^{-8e(e-1)-1/\Omega(n)} \cdot n^{\Omega(n)} = n^{\Omega(n)}$. Thus, the theorem holds.
\end{proof}

Through analysis of Theorem~\ref{thm:ojzj1}, we find that using a small population may miss the central region of the Pareto front during the early stages of search, and re-finding these solutions becomes increasingly difficult as the search progresses. 
% In contrast, reusing the archive could fixed this issue by revisiting the previously missed regions, thereby obtaining a better covered Pareto front. 
In Theorem~\ref{thm:ojzj2}, we prove that when reusing the archive, the expected running time
for SMS-EMOA solving \ojzj\ is $O(\mu n^k)$.

\begin{theorem}\label{thm:ojzj2}
  For SMS-EMOA solving OneJumpZeroJump with $k\ge 3$ and $n-2k = \Omega(n)$, if using a population size $\mu\ge 2$ and reusing the archive, then the expected running time for finding the Pareto front is $O(\mu n^k)$.
\end{theorem}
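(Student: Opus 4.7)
The plan is to decompose the analysis into three overlapping phases and bound each by $O(\mu n^k)$. For the initialization phase, because $n-2k=\Omega(n)$, a Chernoff bound shows that each of the $\mu$ initial solutions lies in $S_I^*$ with probability $1-e^{-\Omega(n)}$; thus, with high probability, every initial solution is Pareto optimal and is admitted to the archive, so the internal Pareto front is reached at the start. Even in the unlikely event that some initial solution falls outside $S_I^*$, a standard fitness-level argument (both $f_1$ and $f_2$ strictly improve on moving toward $S_I^*$ from either extreme end) brings the population into $S_I^*$ in $O(\mu n \log n)$ expected generations, which is absorbed by the final bound for $k\ge 3$.

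For the second phase, I will show that the solution in the population with the maximum $f_2$ (resp.\ the maximum $f_1$) drifts to $|\bmx|_1=k$ (resp.\ $n-k$) within $O(\mu n \log n)$ expected generations. The key observation is that bi-objective SMS-EMOA permanently preserves the two solutions with maximum $f_1$ and maximum $f_2$ in the population. Before $0^n$ is found, the max-$f_2$ incumbent must lie on the internal Pareto front (any other solution has $f_2<n$), and if the incumbent has $j$ 1-bits, then selecting it from the population with probability $1/(2\mu)$ and flipping one of its 1-bits with probability at least $j/(en)$ produces a new solution with strictly larger $f_2$ that replaces the incumbent as the boundary. Multiplicative drift on $j$ then yields the claimed bound, with a symmetric argument for the max-$f_1$ side. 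During this drift, every intermediate value of $|\bmx|_1$ in $[k..n-k]$ reached by a single-bit mutation is inserted into the archive; values skipped by rare multi-bit mutations can be filled in from an archived neighbor in total time $O(n^3)\le O(n^k)$ via archive reuse. This is where archive reuse is essential: unlike the setting of Theorem~\ref{thm:ojzj1}, once an internal-front solution appears it is never lost and can always be selected as a mutation parent to patch any gap.

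For the third phase, I bound the time of the two jumps. Once the solution with $|\bmx|_1=k$ is in the population as the max-$f_2$ boundary, which persists until $0^n$ is produced, the probability of generating $0^n$ in one generation is at least $(1/2)\cdot(1/\mu)\cdot(1/n)^k(1-1/n)^{n-k}=\Omega(1/(\mu n^k))$, obtained by selecting this boundary from the population and flipping all $k$ of its 1-bits. The expected waiting time for the $0^n$ jump is therefore $O(\mu n^k)$. After $0^n$ is discovered, the max-$f_1$ boundary at $|\bmx|_1=n-k$ is still preserved in the population until $1^n$ appears, and a symmetric argument bounds the time to produce $1^n$ by $O(\mu n^k)$. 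Summing the three phases then gives the claimed $O(\mu n^k)$.

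The main obstacle I expect is rigorously justifying that the max-$f_2$ boundary in the population sits exactly at $|\bmx|_1=k$ throughout the first jump, and analogously for the max-$f_1$ boundary during the second jump. This needs the OJZJ dominance structure (no solution other than $0^n$ has $f_2>n$) together with the bi-objective boundary-preservation rule of SMS-EMOA to show that $|\bmx|_1=k$ cannot be displaced in $f_2$ until $0^n$ enters the population. A secondary technicality is the gap-filling: showing that rare multi-bit mutations do not leave an $\omega(n^k)$-cost backlog in the archived internal front, which is handled by the archive-reuse argument above.
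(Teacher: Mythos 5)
Your proposal is correct and follows essentially the same route as the paper's proof: reach the internal front after initialization, use the bi-objective boundary-preservation rule to drift the max-$f_1$/max-$f_2$ incumbents to $n-k$ and $k$ 1-bits in $O(\mu n\log n)$ generations, wait $O(\mu n^k)$ for each jump to $1^n$/$0^n$, and cover the remaining internal Pareto-front points by single-bit mutations of archived neighbors (the paper bounds this last step by $O(n^2\log n)$ rather than your $O(n^3)$, but both are absorbed for $k\ge 3$). The only cosmetic difference is that the paper treats the internal-front filling as a separate final phase rather than interleaving it with the boundary drift.
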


\begin{proof}
We divide the running process into three phases. The first phase starts after initialization and finishes until one point on $F_I^*$ (i.e., objective vector $\bmf(\bmx)$ such that $|\bmx|_1\in [k..n-k]$) is found; the second phase starts after the first phase and finishes until $1^n$ and $0^n$ are found; the third phase starts after the second phase and finishes until the whole internal Pareto front $F_I^*$ is found.

For the first phase, we prove that the expected number of generations for finding one point on $F_I^*$ is $O(1)$. During population initialization, by the Chernoff bound and $n-2k=\Omega(n)$, all solutions in the initial population has at most $k-1$ 1-bits or at least $n-k+1$ 1-bits with probability at most $\exp(-\Omega(n))^\mu$. Without loss of generality, we assume that one solution $\bmy$ has at most $k-1$ 1-bits. Then similar to the analysis in~\cite{bian23stochastic}, selecting $\bmy$ and flipping $k-|\bmy|_1$ 0-bits can generate a point on $F_I^*$, whose probability is at least
\begin{align}
    &\frac{1}{\mu}\cdot \frac{\binom{n-|\bmy|_1}{k-|\bmy|_1}}{n^{k-|\bmy|_1}} \cdot \Big(1-\frac{1}{n}\Big)^{n-k+|\bmy|_1} \ge \frac{1}{e\mu}\cdot \frac{\binom{n-|\bmy|_1}{k-|\bmy|_1}}{n^{k-|\bmy|_1}}\\
    &= \frac{1}{e\mu} \cdot \frac{(n-|\bmy|_1)!}{(k-|\bmy|_1)!(n-k)!n^{k-|\bmy|_1}} \ge \frac{1}{e\mu} \cdot \frac{n!}{k!(n-k!)n^k}\\
    &= \frac{\binom{n}{k}}{e\mu n^k} \ge \Big(\frac{n}{k}\Big)^k  \cdot \frac{1}{e\mu n^k} = \frac{1}{e\mu k^k},
\end{align}
where the second inequality holds by $n!/(n-|\bmy_1|)! \le n^{|\bmy|_1}$. Thus, the expected number of generations for finding a point on $F_I^*$ is at most $\exp(-\Omega(n))^\mu \cdot e\mu k^k = O(1)$.

For the second phase, we show that the expected number of generations for finding $1^n$ is $O(\mu n^k)$, and by symmetry, the same bound holds for finding $0^n$. We note that in the original SMS-EMOA~\cite{beume2007sms}, when solving bi-objective problems, the algorithm omits the reference point and directly preserves the two boundary points, i.e., the two solutions with the maximum $ f_1 $ and $ f_2 $ values respectively (with ties broken randomly in the case of identical objective vectors), and computes the hypervolume contribution of the remaining solutions based on these two points. Thus, the maximum $f_1$ value among the Pareto optimal solutions in $P\cup \{\bm{x}'\}$ will not decrease, where $\bm{x}'$ is the offspring solution generated in each generation. We assume that $1^n$ has not been found; otherwise, the second phase finishes. Now, we consider the increase of the maximum $f_1$ value among the Pareto optimal solutions found. In each generation, SMS-EMOA chooses the population $P$ as the parent population with probability $1/2$, the solution $\bm{x}$ with the maximum $f_1$ value is selected with probability $1/\mu$, and generates a solution with more 1-bits that only one of the 0-bits is flipped by bit-wise mutation with probability $((n-|\bm{x}|_1)/n)\cdot (1-1/n)^{n-1}$. This implies that the probability of generating a solution with more than $|\bm{x}|_1$ 1-bits in one generation is at least
\begin{equation}\label{eq:ojzj1}
 \frac{1}{2}\cdot\frac{1}{\mu}\cdot \frac{n-|\bm{x}|_1}{n}\cdot \Big(1-\frac{1}{n}\Big)^{n-1}\ge \frac{n-|\bm{x}|_1}{2e\mu n}.
\end{equation}
Thus, the expected number of generations for increasing the maximum $f_1$ value to $n$, i.e., finding a solution with $n-k$ 1-bits, is at most $\sum_{i=k}^{n-k-1} 2e\mu n/(n-i)=O(\mu n \log n)$. Then, a solution $\bm{x}$ with $n - k$ 1-bits remains preserved in the population $P$ until $1^n$ is discovered, due to having the highest $f_1$ value. In each generation, SMS-EMOA selects population $P$ as the parent population with probability $1/2$, selects the solution $\bm{x}$ from $P$ with probability $1/\mu$, and generates $1^n$ by flipping all remaining 0-bits with probability $(1/n^k)\cdot (1-1/n)^{n-k}$. The probability of these events happening in a single generation is at least $(1/2)\cdot (1/\mu) \cdot(1/n^k)\cdot (1-1/n)^{n-k} \ge 1/(2e\mu n^k)$. Thus, the expected number of generations for finding $1^n$ after the first phase is at most $O(\mu n\log n+  2e\mu n^k) = O(\mu n^k)$.

For the third phase, we show that the expected number of generations for finding the whole internal Pareto front $F_I^*$ is $O(n^2\log n)$. After the first phase, at least one point on $F_I^*$ (i.e., a solution with $k$ 1-bits and a solution with $n-k$ 1-bits) is found. Since both are Pareto optimal, they are preserved in the archive. Let $D = \{\bm{x} \in A \mid (H^+(\bm{x}) \cap A = \emptyset \wedge H^+(\bm{x}) \cap S_I^* \neq \emptyset) \vee (H^-(\bm{x}) \cap A = \emptyset \wedge H^-(\bm{x}) \cap S_I^* \neq \emptyset)  \}$, where $H^+(\bm{x}):=\{\bmx' \mid |\bmx'|_1 = |\bmx|_1 +1\}$ and $H^-(\bm{x}):=\{\bmx' \mid |\bmx'|_1 = |\bmx|_1 -1\}$ denote the Hamming neighbors of $\bmx$ with one more 1-bit and one less 1-bit, respectively. Intuitively, $D$ denotes the set of solutions in $A$ whose Hamming neighbor is Pareto optimal but not contained by $A$. Note that the size of archive is no greater than $n-2k+3$, i.e., the size of Pareto front. Then, by choosing the archive $A$ as parent population (whose probability is $1/2$), selecting a solution $\bmx \in D$ (whose probability is at least $1/(n-2k+3)$), and flipping one of the 0-bits (or 1-bits) with probability $(n-|\bmx|_1/n)\cdot (1-1/n)^{n-1}$ (or $(|\bmx|_1/n)\cdot (1-1/n)^{n-1}$), a new point on $F_I^*$ can be found. Thus, in each generation, the probability of generating a new point on $F_I^*$ is at least $(1/2)\cdot (n-|\bmx|_1/n)\cdot (1-1/n)^{n-1}/(n-2k+3)\ge (n-|x|_1)/(2e(n-2k+3)n)$. Since all solutions corresponding to points on $F^*_I$ are non-dominated, at least one solution corresponding to the new point will be preserved in the archive. Then, by repeating the above procedure, the whole internal Pareto front can be found. Thus, the total expected number of generations for finding the whole $F_I^*$ is at most $\sum_{i=k+1}^{|\bmx|_1} 2e(n-2k+3)n/i + \sum_{i=k+1}^{n-|\bmx|_1} 2e(n-2k+3)n/i = O(n^2\log n)$.

Combining the three phases, the total expected number of generations is $O(1+\mu n^k +n^2\log n) = O(\mu n^k)$, where $k\ge 3$. Thus, the theorem holds.
\end{proof}

Comparing with the result in Theorem~\ref{thm:ojzj1}, the running time for SMS-EMOA solving \ojzj\ reduces from $n^{\Omega(n)}$ to $O(\mu n^k)$ if reusing the archive. Theorem~\ref{thm:ojzj2} reveals that when using a small population for exploration, reusing the archive allows the algorithm to revisit previously missed regions, namely finding a better-covered Pareto front.

\subsection{Archive Reuse Helps Exploration}\label{sec:Exploration}

In the last subsection, we have theoretically shown that reusing the archive helps re-find regions missed during early search stages, thus enhances Pareto coverage. Next, we show that reusing the archive can also benefit exploration, thereby finding regions relatively hard to reach. In Theorem~\ref{thm:archive_benchmark}, we prove that when using a small population size $\mu\le n-2k+1$ and an archive without reuse, the expected running time for SMS-EMOA solving \aojzj\ is $\Omega(n^k)$. For practical MOEAs, e.g., SMS-EMOA, diversity in the objective space always plays an important role in the population update process (i.e., among non-dominated solutions, those in crowded regions are more likely to be removed). This example shows a scenario where some non-dominated solutions are sparse in the solution space and helpful for exploration, but are crowded in the objective space. With a small population and no archive reuse, these useful solutions may not be preserved in the population, limiting the exploration.

\begin{theorem}\label{thm:archive_benchmark}
  For SMS-EMOA solving \aojzj\ with $k\ge 13$, $a\ge k/4$ and $n-2k = \Omega(n)$, if using a population size $\mu\le n-2k+1$ and an archive without reuse, then the expected running time for finding the Pareto front is $\Omega(n^k)$.
\end{theorem}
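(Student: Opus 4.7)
The plan is to derive the lower bound by showing that discovering $0^n$ alone already requires $\Omega(n^k)$ expected generations (by symmetry the same holds for $1^n$), since $0^n$ lies on the Pareto front and must be placed in the archive for the Pareto front to be completely found. The core mechanism is that, without archive reuse, the stepping-stone solution with $k-a$ 1-bits cannot persist in the population long enough to act as an intermediate bridge to $0^n$: it is either killed by the hypervolume-based update in the same generation it is produced, or temporarily survives only in a short transient window that contributes $o(1)$ to the chance of ever reaching $0^n$. Consequently, generating $0^n$ essentially requires a direct $k$-bit jump from an internal Pareto-front solution in $P$, which occurs with probability at most $1/n^k$ per generation.

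The first step is to handle initialization and an initial polynomial phase. By the Chernoff bound and $n - 2k = \Omega(n)$, every one of the $\mu$ initial random solutions has $|\bmx|_1 \in [k, n-k]$ with probability $1 - e^{-\Omega(n)}$, so each is already on the internal Pareto front. Then, adapting the level-based analysis of Bian et al.~\shortcite{bian23stochastic} for SMS-EMOA on \ojzj, I would show that within $T_0 = O(\text{poly}(n)) = o(n^k)$ generations and with constant probability, the solution with exactly $k$ 1-bits (giving objective vector $(2k, n)$) enters the population, and that throughout this polynomial phase $0^n$ is not produced (the per-generation success probability from parents with $|\bmx|_1 \ge k$ is at most $1/n^k$, so in $T_0$ generations the probability is $o(1)$).

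The key step is to establish the invariant that, once $(2k, n)$ enters the population, it persists as the max-$f_2$ boundary point (which SMS-EMOA preserves) until $0^n$ is produced. Under this invariant I verify the following for every subsequent generation: a dominated offspring $\bmx'$ with $|\bmx'|_1 \in [1, k-1] \setminus \{k-a\}$ has objective vector $(k+|\bmx'|_1, |\bmx'|_1)$, which is strictly dominated by $(2k, n)$, so $\bmx'$ sits alone in a non-dominated layer $R_v$ with $v \ge 2$ and is removed in line~7 of Algorithm~\ref{alg:sms}; a stepping-stone offspring $\bmx'$ with $|\bmx'|_1 = k-a$ is non-dominated and hence lies in $R_1$, but a direct calculation of its hypervolume contribution with left neighbor $(2k, n)$ and right neighbor $(2k+1, n-1)$ gives $(1/n)(1-1/n) \approx 1/n$, while every other internal Pareto-front point in $P$ has contribution $\Omega(1)$. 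Hence the stepping stone is the $\arg\min$ of $\Delta_{\bmr}$ within $R_1$ and is removed in the same generation. The invariant therefore forces every population member to satisfy $|\bmx|_1 \ge k$, which upper-bounds the per-generation probability of producing $0^n$ by $\max_{\bmx \in P}(1/n)^{|\bmx|_1}(1-1/n)^{n-|\bmx|_1} \le 1/n^k$, and a standard waiting-time argument yields $\Omega(n^k)$.

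The main obstacle lies in the transient windows during the initial phase when the solution with $k$ 1-bits is not yet in the population: in such states the stepping stone could temporarily become the max-$f_2$ boundary itself and be preserved for a short duration, during which it might be selected as a parent (with probability $1/\mu$) and jump to $0^n$ via $k-a$ flips (with probability $\Theta(1/n^{k-a})$). To handle this, I would use the hypotheses $k \ge 13$ and $a \ge k/4$ to bound both the expected duration of such windows (via a recovery-time argument showing that a low-$|\bmx|_1$ internal point is quickly re-established by one-bit flips from neighbors already in $P$) and the cumulative probability mass of reaching $0^n$ through this alternative route across all windows in the first $c\, n^k$ generations for a small constant $c$, showing this route contributes only $o(1)$ success probability and therefore does not affect the leading-order $\Omega(n^k)$ bound. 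A symmetric argument applied to $1^n$ completes the proof.
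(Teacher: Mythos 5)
Your overall strategy matches the paper's: show that the stepping-stone solution with $k-a$ one-bits cannot survive the hypervolume-based removal, so that $0^n$ can only be reached by a direct $k$-bit jump from $S_I^*$, and the same hypervolume comparison (contribution roughly $1/n$ versus roughly $1$ for internal points) is the engine in both arguments. However, there are two genuine gaps. The more serious one is that your invariant step assumes every other point of $R_1$ in the population has hypervolume contribution $\Omega(1)$, which fails whenever the population contains two solutions with the same objective vector: such a duplicate has contribution exactly $0$, so line~6 of Algorithm~\ref{alg:sms} removes the duplicate rather than the stepping stone, and the stepping stone then persists in $P$. Duplicated objective vectors are very likely in the random initial population (most initial solutions have close to $n/2$ one-bits), and once the stepping stone survives it can be selected with probability $1/\mu$ and mutated into $0^n$ with probability $\Theta(1/n^{k-a})$, giving an $O(\mu n^{k-a}) = o(n^k)$ route that would destroy the claimed lower bound. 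The paper devotes a dedicated event ($B_2$: all duplicate objective vectors are eliminated before any stepping stone is created, shown to hold with probability $e^{-1/\Omega(n^{1/4})}$) to rule this out; your proof needs an analogous argument and currently has none.

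The second gap is that your treatment of the ``transient windows'' --- the phase before the solution with exactly $k$ one-bits enters the population, during which a stepping stone with $f_2 = n-1/n$ would itself be the maximal-$f_2$ boundary point and hence be preserved --- is only announced, not carried out, yet this is exactly where the hypotheses $k\ge 13$ and $a\ge k/4$ are consumed. The paper resolves it by defining the event $B_1$ that the two boundary solutions of $F_I^*$ are found before any solution in $\{\bmz \mid |\bmz|_1 = k-a \vee |\bmz|_1 = n-k+a\}\cup\{0^n,1^n\}$ is ever generated, and bounding $\Pr[B_1]$ from below via a product of conditional probabilities (progress toward the boundary by single-bit flips at rate at least $1/(e\mu n)$ versus creation of a stepping stone at rate at most $4\binom{k}{a}/n^a$); conditioning on $B_1$ makes the transient window vacuous because no stepping stone exists during it, which is considerably cleaner than the recovery-time argument you sketch. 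A minor further point: your claimed contribution $(1/n)(1-1/n)$ for the stepping stone presupposes that the right neighbor $(2k+1,n-1)$ is already in the population; the paper instead upper-bounds the contribution by $1-2k/n$ using only the left neighbor $(2k,n)$ and compares it with the lower bound $1-1/n$ for internal points, which is the comparison you actually need when the front is not yet fully populated.
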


\begin{proof}
We first prove that $B_1$, which denotes the event that the two boundary solutions of $F_I^*$ (i.e., the solutions with $k$ and $n - k$ 1-bits) is found before a solution $\bmx \in  \{ \bmz \mid |\bmz|_1 = k-a \vee |\bmz|_1 = n-k+a\} \cup \{0^n, 1^n\}$, occurs with probability at least 
$(1-e^{-\Omega(n)})\cdot e^{-1/\Omega(n^{1/4})}$. Then conditioned on the occurrence of event $B_1$, we prove that $B_2$, which denotes the event that no duplicate objective vectors are maintained in the population before finding a solution $\bmx \in  \{ \bmz \mid |\bmz|_1 = k-a \vee |\bmz|_1 = n-k+a\} \cup \{0^n, 1^n\}$, occurs with probability at least $e^{-1/\Omega(n^{1/4})}$. Finally, conditioned on the occurrence of events $B_1$ and $B_2$, we show that the expected number of generations for solving \aojzj\ is $\Omega(n^k)$.

% We divide the analysis into three phases.  
% In the first phase, we show that the probability of finding the two boundary solutions of $F_I^*$ (i.e., the solutions with $k$ and $n - k$ 1-bits) before finding a solution $\bmx \in  \{ \bmz \mid |\bmz|_1 = k-a \vee |\bmz|_1 = n-k+a\} \cup \{0^n, 1^n\}$ is at least $e^{-1/\Omega(n^{1/4})}$.
% In the second phase, we show that the probability that no duplicate objective vectors are maintained in the population before finding a solution $\bmx \in  \{ \bmz \mid |\bmz|_1 = k-a \vee |\bmz|_1 = n-k+a\} \cup \{0^n, 1^n\}$ is at least $e^{-1/\Omega(n^{1/4})}$.
% Finally, we show that, conditioned on the occurrence of the above events, the expected number of generations for solving \aojzj\ is $\Omega(n^k)$.

Let $B_1$ denotes the event that the two boundary solutions of $F_I^*$ (i.e., the solutions with $k$ and $n - k$ 1-bits) is found before a solution $\bmx \in  \{ \bmz \mid |\bmz|_1 = k-a \vee |\bmz|_1 = n-k+a\} \cup \{0^n, 1^n\}$. Then, we show that $B_1$ occurs with probability at least $(1-e^{-\Omega(n)})\cdot e^{-1/\Omega(n^{1/4})}$.
% For the first phase, we show that the probability of finding the two boundary solutions of $F_I^*$ (i.e., the solutions with $k$ and $n - k$ 1-bits) before finding a solution $\bmx \in  \{ \bmz \mid |\bmz|_1 = k-a \vee |\bmz|_1 = n-k+a\} \cup \{0^n, 1^n\}$ is at least $(1-e^{-\Omega(n)})\cdot e^{-1/\Omega(n^{1/4})}$.
During population initialization, by the Chernoff bound and $n-2k = \Omega(n)$, all initial solutions have number of 1-bits between $k$ and $n-k$ with probability at least $(1- e^{-\Omega(n)})^\mu \ge 1-\mu e^{-\Omega(n)} =  1-e^{\Omega(n)}$, where the equality holds by $\mu \le n-2k+1$. Then, assume that all solutions in the initial population have number of 1-bits between $k$ and $n-k$. Since dominated solutions cannot be preserved in the population, solutions $\bmx \in  \{ \bmz \mid |\bmz|_1 = k-a \vee |\bmz|_1 = n-k+a\} \cup \{0^n, 1^n\}$ can only be generated from a solution with a number of 1-bits in the range $[k..n-k]$. In particular, $0^n$ can be generated by selecting a solution and flipping all 1-bits, whose probability is at most $1/n^k$. A solution $\bmx \in \{ \bmz \mid |\bmz|_1 = k-a\}$ can be generated by selecting a solution $\bm{y}$ such that $|\bmy|_1 = l \ge k$, and flipping $|\bmy| + a-k$ 1-bits, whose probability is at most 
\begin{align}
&\frac{\binom{l}{l + a-k}}{n^{l + a-k}} =\frac{l!}{(l+a-k)!(k-a)! }\cdot \frac{1}{n^{l+a-k}}\\
&\le \frac{l!a!}{(l+a-k)! k!\cdot  n^{l-k}} \cdot \frac{k!}{a!(k-a)!\cdot n^a}  \\
&= \frac{a!}{(l+a-k)\cdots (l+1-k)} \cdot \frac{\binom{l}{l-k}}{n^{l-k}} \cdot \frac{\binom{k}{a}}{n^a}\le \frac{\binom{k}{a}}{n^a}.
\end{align}
The analysis of finding $1^n$ and a solution $\bmy\in \{|\bmz|_1 = n-k+a\}$ follows similarly from the above. Thus the probability of finding a solution $\bmx \in  \{ \bmz \mid |\bmz|_1 = k-a \vee |\bmz|_1 = n-k+a\} \cup \{0^n, 1^n\}$ is at most $2\cdot (1/n^k + \binom{k}{a}/n^a) \le 4\binom{k}{a}/n^a$.
Let $\bmx_{\min}$ and $\bmx_{\max}$ denote the solutions in the population with the minimum and maximum number of 1-bits, respectively, and assume that $k \le |\bmx_{\min}|_1 \le |\bmx_{\max}|_1 \le n - k$.
In each generation, SMS-EMOA selects $\bmx_{\min}$ (or $\bmx_{\max}$) with probability at least $1/\mu$, generates a new solution $\bmx'$ with more 0-bit (or 1-bit) by flipping a single 1-bit (or 0-bit) with probability at least $(1/n)(1 - 1/n)^{n - 1} \ge 1/(en)$. The offspring $\bmx'$ is then preserved in the population if $k \le |\bmx'|_1 \le n - k$, since SMS-EMOA always preserves the boundary solutions that maximize $f_1$ and $f_2$, respectively. Thus, the conditional probability of generating a solution closer to the boundary solutions of $F_I^*$ (i.e., the solutions with $k$ and $n - k$ 1-bits) before finding a solution $\bmx \in  \{ \bmz \mid |\bmz|_1 = k-a \vee |\bmz|_1 = n-k+a\} \cup \{0^n, 1^n\}$ is at least
\begin{align}\label{eq:conditional_before_G}
    \frac{1/(e\mu n)}{1/(e\mu n) + 4\binom{k}{a}/n^a} = \frac{1}{1+4e\mu\binom{k}{a}/n^{a-1}}.
\end{align}
Then, by repeating the above procedure at most $n-2k$ times, two boundary solutions of $F_I^*$ can be found before finding $\bmx \in  \{ \bmz \mid |\bmz|_1 = k-a \vee |\bmz|_1 = n-k+a\} \cup \{0^n, 1^n\}$ with probability at least
\begin{align}\label{eq:prod_conditional_G}
    &\prod_{i=0}^{n-2k-1} \frac{1}{1+4e\mu\binom{k}{a}/n^{a-1}} \ge \exp(-(n-2k)\cdot \frac{4e\mu \binom{k}{a}}{n^{a-1}})\\
    &\ge \exp(-n^2\cdot \frac{4e\mu \cdot \binom{k}{\lfloor k/2\rfloor}}{n^{a}}) \ge \exp(-\frac{n^2 \cdot 4\sqrt{2}e^{13/12}\mu\cdot 2^k }{\sqrt{\pi (k-1)}n^a})\\
    &\ge \exp(-n^2\cdot \frac{4\sqrt{2}e^{13/12}\mu}{\sqrt{\pi (k-1)} (n/16)^{k/4}}) = e^{-1/\Omega(n^{1/4})},
\end{align}
where the first inequality holds by $1+b\le e^b$ for any $b\in \mathbb{R}$, the second inequality holds by $\binom{k}{a}\le \binom{k}{\lfloor k/2\rfloor}$ for all $a\in [0..k]$, the third inequality holds by $\binom{k}{\lfloor k/2\rfloor}\le 2^k\sqrt{2e^{1/6}/\pi (k-1)}$ as analyzed in Eq.~\eqref{eq:binom_stiling1} and~\eqref{eq:binom_stiling2}, the fourth inequality holds by $a\ge k/4$, and the last equality holds by $k\ge 13$ and $\mu \le n-2k+1$. Since the probability that all initial solutions belong to $S_I^*$ is $1-e^{-\Omega(n)}$, event $B_1$ occurs with probability at least $(1-e^{-\Omega(n)})\cdot e^{-1/\Omega(n^{1/4})}$.

Conditioned on the occurrence of $B_1$, let $B_2$ denote the event that no duplicate objective vectors are maintained in the population before finding a solution $\bmx \in  \{ \bmz \mid |\bmz|_1 = k-a \vee |\bmz|_1 = n-k+a\} \cup \{0^n, 1^n\}$. Then, we show that $B_2$ occurs with probability at least $e^{-1/\Omega(n^{1/4})}$.
% For the second phase, we show that the probability that no duplicate objective vectors are maintained in the population before finding a solution $\bmx \in  \{ \bmz \mid |\bmz|_1 = k-a \vee |\bmz|_1 = n-k+a\} \cup \{0^n, 1^n\}$ is at least xxx. 
Since $B_1$ occurs, two boundary points of $F_I^*$ have been found and all solutions in the population belong to $S_I^*$. Because solutions with duplicate objective vectors have a hypervolume contribution $\Delta_{\bmr} = 0$, they are removed from the population before non-dominated solutions with unique objective vectors. Thus, once the algorithm finds a solution corresponding to a unique point on the Pareto front, it will always be preserved in the population until all duplicates are removed. Let $D = \{\bm{x} \in P \mid (H^+(\bm{x}) \cap P = \emptyset \wedge H^+(\bm{x}) \cap S_I^* \neq \emptyset) \vee (H^-(\bm{x}) \cap P = \emptyset \wedge H^-(\bm{x}) \cap S_I^* \neq \emptyset)  \}$, where $H^+(\bm{x}):=\{\bmx' \mid |\bmx'|_1 = |\bmx|_1 +1\}$ and $H^-(\bm{x}):=\{\bmx' \mid |\bmx'|_1 = |\bmx|_1 -1\}$ denote the Hamming neighbors of $\bmx$ with one more 1-bit and one less 1-bit, respectively. Intuitively, $D$ denotes the set of solutions in $P$ whose Hamming neighbor is belong to $S_I^*$ but not contained by $P$. Then a new unique point on the Pareto front can be generated by selecting a solution in $D$, and flipping a single 1-bit or 0-bit with probability at least $(1/\mu)\cdot (1/n)(1-1/n)^{n-1} \ge 1/(e\mu n)$. Then, the analysis proceeds similarly to that of the probability of event $B_1$, the probability of finding a solution $\bmx \in  \{ \bmz \mid |\bmz|_1 = k-a \vee |\bmz|_1 = n-k+a\} \cup \{0^n, 1^n\}$ with probability at most $4\binom{k}{a}/n^a$. Thus, the conditional probability is the same as in Eq.~\eqref{eq:conditional_before_G}, i.e., $1/(1+4e\mu\binom{k}{a}/n^{a-1})$. Since the population size $\mu \le n - 2k + 1 = |F_I^*|$, by repeating the above procedure at most $\mu - 2 \le n - 2k-1$ times, no duplicate objective vectors are maintained in the population. Then, similar to the analysis in Eq.~\eqref{eq:prod_conditional_G}, event $B_2$ occurs with probability at least $e^{-1/\Omega(n^{1/4})}$.

Finally, we show that conditioned on the occurrence of events $B_1$ and $B_2$, the expected number of generations for solving \aojzj\ is $\Omega(n^k)$. We first show that after events $B_1$ and $B_2$ occur, solutions $\bmx \in  \{ \bmz \mid |\bmz|_1 = k-a \vee |\bmz|_1 = n-k+a\}$ can't be preserved in the population before finding $1^n$ or $0^n$. Suppose that in some generation, the algorithm generates a solution $\bmx'$ such that $\bmf(\bmx') = (2k+1/n,n-1/n)$ (i.e., $|\bmx'|_1 = k-a$). Then, for any objective vector $\bmv = (b,n+2k-b)$ with $b\in [2k+1..n-1]$, the region covered by $\bmv$ is larger than the hyper-rectangle defined by
\begin{align}
\{ \bm{f} \in \mathbb{R}^2 \mid &(b - 1 + 1/n < f_1 \le b) \\ &\wedge (n + 2k - b - 1 < f_2 \le n + 2k - b) \},
\end{align}
whose area is $(b-b+1-1/n)\cdot (n+2k-b-(n+2k-b-1)) = 1-1/n$. Specifically, this hyper-rectangle can only be achieved when $\bmv=(k+1,n-1)$ with neighboring points $(2k+1/n,n-1/n)$ and $(k+2,n-2)$. And for objective vector $\bmf(\bmx') = (2k+1/n,n-1/n)$, the region covered by $\bmf(\bmx')$ is less than the hyper-rectangle defined by
\begin{align}
    \{ \bm{f} \in \mathbb{R}^2 \mid &(2k < f_1 \le 2k+1/n) \\&\wedge (2k < f_2 \le n-1/n) \},
\end{align}
whose area is $(2k-2k+1/n)\cdot (n-1/n-2k) = (n-1/n-2k)/n < 1-2k/n$. After the events $B_1$ and $B_2$ occurs, there are no duplicate objective vectors in the population, and all solutions belong to $S_I^*$. Hence, for any solution $\bmz \in P$ with $|\bmz|_1 \in [k+1..n-k-1]$, its hypervolume contribution satisfies $\Delta_{\bm{r}}(\bmz, R_1) \ge 1 - \frac{1}{n} > 1 - \frac{2k}{n} > \Delta(\bmx', R_1)$. 
In addition, SMS-EMOA always preserves the boundary solutions $\bmz$ with $|\bmz|_1 \in \{k, n - k\}$. Therefore, $\bmx'$ cannot be preserved in the population before the algorithm finds $0^n$ or $1^n$. The same argument applies to $\bmy'$ with $|\bmy|_1 = n - k + a$. Consequently, $0^n$ and $1^n$ can only be generated by selecting a solution $\bmx \in S_I^*$ and flipping at least $k$ 1-bits or 0-bits, which occurs with probability at most $1/n^k$. Thus, the expected number of generations for finding $0^n$ or $1^n$ is at least $\Omega(n^k)$.

Combining the above analysis, the expected number of generations for SMS-EMOA solving \aojzj\ is at least $(1-e^{-\Omega(n)})\cdot (e^{-1/\Omega(n^{1/4})})^2 \cdot \Omega(n^k) = \Omega(n^k)$. Thus, the theorem holds.
\end{proof}

Through the analysis of Theorem~\ref{thm:archive_benchmark}, we observe that using a small population size makes it difficult to retain solutions with objective vectors $(2k+1/n, n-1/n)$ or $(n-1/n, 2k+1/n)$, due to their small $\Delta_{\bmr}$ values. This limits the exploration ability to find $0^n$ and $1^n$. 
% In contrast, reusing the archive could fix this issue by preserving all non-dominated solutions, allowing the algorithm to visit these helpful solutions, thereby benefits exploration. 
In Theorem~\ref{thm:reusing_benchmark}, we prove that if reusing the archive, the expected number of generations for SMS-EMOA solving \aojzj\ is $O(\max\{\mu n^a/k, n^{k-a+1}, n^2\log n\})$.

\begin{theorem}\label{thm:reusing_benchmark}
  For SMS-EMOA solving \aojzj\ with $n-2k = \Omega(n)$, if using a population size $\mu\ge 2$ and reusing the archive, then the expected running time for finding the Pareto front is $O(\max\{ \mu n^a/k, n^{k-a+1},n^2\log n\})$.
\end{theorem}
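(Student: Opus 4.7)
The plan is to decompose the run of SMS-EMOA on \aojzj\ into four phases, in analogy with the three-phase decomposition of Theorem~\ref{thm:ojzj2}, with an extra phase for the new stepping-stone Pareto optimal solutions: (i) reach $F_I^*$; (ii) cover all of $F_I^*$ in the archive; (iii) find the two stepping stones with $|\bmx|_1\in\{k-a,\,n-k+a\}$; and (iv) find $0^n$ and $1^n$. Because the archive never discards a Pareto optimal solution and satisfies $|A|\le n-2k+5$, everything found in one phase remains permanently available to drive the later ones. Phases~(i) and~(ii) can be lifted essentially verbatim from Theorem~\ref{thm:ojzj2}: a Chernoff bound together with $n-2k=\Omega(n)$ yields $O(1)$ expected generations to reach the first point of $F_I^*$, and the archive-based coupon-collector argument -- sampling $A$ with probability $1/2$, picking a solution in $A$ whose Hamming neighbor in $F_I^*$ is still missing, and flipping the relevant single bit -- yields $O(n^2\log n)$ generations to cover all of $F_I^*$.

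The key new step is Phase~(iii). The structural observation is that while $0^n$ and $1^n$ are absent, the $F_I^*$ boundary solutions with $|\bmx|_1=k$ and $|\bmx|_1=n-k$ are the unique maximizers of $f_2$ and $f_1$ in $P$: inspecting the four branches of $f_1$ in Definition~\ref{new_ojzj} shows that every $\bmx\ne 1^n$ satisfies $f_1(\bmx)\le n$, with equality only at $|\bmx|_1=n-k$, so the bi-objective boundary-preservation rule of SMS-EMOA keeps that solution in $P$; the symmetric statement preserves $|\bmx|_1=k$. Conditioned on this, sampling one of the boundaries from $P$ (probability $(1/2)(1/\mu)$) and flipping exactly $a$ of its $1$-bits or $0$-bits produces a stepping stone with probability at least $\binom{k}{a}/(en^a)\ge k/(en^a)$, using $k\ge 3$ and $a\ge 2$, so each stepping stone is found in $O(\mu n^a/k)$ expected generations. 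For Phase~(iv), once the stepping stone with $|\bmx|_1=k-a$ sits in $A$, the archive path samples it with probability $(1/2)(1/|A|)=\Omega(1/n)$ and a bit-wise mutation flips all of its remaining $k-a$ $1$-bits to $0$ with probability $\ge 1/(en^{k-a})$, producing $0^n$ with per-generation probability $\Omega(1/n^{k-a+1})$; $1^n$ is handled symmetrically, so Phase~(iv) costs $O(n^{k-a+1})$. Summing the four phase bounds yields $O(1+n^2\log n+\mu n^a/k+n^{k-a+1})=O(\max\{\mu n^a/k,\,n^{k-a+1},\,n^2\log n\})$.

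The main obstacle I expect is the boundary-preservation claim used in Phase~(iii): verifying $f_1(\bmx)\le n$ for all $\bmx\ne 1^n$ requires a small but problem-specific case analysis over the four branches of $f_1$, and one must also check that a stepping stone that temporarily enters $P$ cannot displace $|\bmx|_1=n-k$ as the $f_1$-maximizer (it cannot, since $n-1/n<n$ and $2k+1/n<n$ for $n>2k+1$). Once the preservation statement is in hand, the remaining pieces -- bounding $|A|$, estimating the per-generation probabilities of the archive- and population-based moves, and summing the independent phase estimates -- follow the templates already established in Theorems~\ref{thm:ojzj2} and~\ref{thm:archive_benchmark}.
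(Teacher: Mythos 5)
Your decomposition, the per-phase mechanisms, and the probability estimates are essentially identical to the paper's: the paper also uses four phases (reach $F_I^*$ in $O(1)$; reach the stepping stones from the $F_I^*$ boundary kept in $P$ by flipping $a$ bits, $O(\mu n^a/k)$; jump between a stepping stone and the corresponding extreme via the archive, $O(n^{k-a+1})$; cover $F_I^*$ via the archive, $O(n^2\log n)$), with the same boundary-preservation argument you sketch. The one substantive difference is in how the middle phases are delimited: the paper's second phase terminates as soon as \emph{either} element of each pair $\{0^n,\;|\bmx|_1=k-a\}$ (resp.\ $\{1^n,\;|\bmx|_1=n-k+a\}$) is found, and its third phase then completes each pair via the archive in whichever direction is needed; your phase~(iii) instead insists on finding the stepping stones first and conditions on ``$0^n$ and $1^n$ absent,'' which leaves unhandled the (low-probability but nonzero) event that an extreme point appears first and becomes the new $f_1$- or $f_2$-boundary of $P$, invalidating the $1/(2\mu)$ selection probability your $O(\mu n^a/k)$ bound relies on. Adopting the paper's ``one element of each pair'' phase definition closes this without changing anything else in your argument.
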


\begin{proof}
We divide the running process into four phases. The first phase starts after initialization and finishes until one point on $F_I^*$ (i.e., the objective vector $\bmf(\bmx)$ such that $|\bmx|_1\in [k..n-k]$) is found; the second phase starts after the first phase and finishes until one solution $\bmx\in \{1^n\}\cup \{\bmz\mid |\bmz|_1 =n-k+a\}$ and one solution $\bmy\in \{0^n\}\cup \{\bmz\mid |\bmz|_1 = k-a\}$ are found; the third phase starts after the second phase and finishes until $0^n$, $1^n$ and all points on $G$ are found; the fourth phase starts after the third phase and finishes until the whole internal Pareto front $F_I^*$ is found.

For the first phase, the analysis is the same as that of first phase of Theorem~\ref{thm:ojzj2}. Thus, the expected number of generations for finding one point on $F^*_I$ is at most $O(1)$.

For the second phase, we show that the expected number of generations for finding a solution with $k-a$ 1-bits is $O(\mu n^{a})$ (i.e., objective vector $(2k+1/n, n-1/n)\in G$), and by symmetry, the same bound holds for finding a solution $\bmy$ with $n-k+a$. We assume that no solution $\bm{x} \in \{0^n\} \cup \{\bm{z} \mid |\bm{z}|_1 = k - a\}$ has been found after the first phase; otherwise, the second phase finishes. Since SMS-EMOA directly preserves the two boundary points, i.e., the two solutions with the maximum $f_1$ and $f_2$ values respectively. Thus, the maximum $f_2$ value in $P\cup {\bmx'}$ will not decrease, where $\bmx'$ is the offspring solution generated in each generation. Then similar to the analysis in Eq.~\eqref{eq:ojzj1}, the probability of generating a solution with more 0-bits in each generation is at least $|\bmx_{\min}|/2e\mu n$, where $\bmx_{\min} = \arg\min_{\bmx\in P} |\bmx|_1$. Thus, the expected number of generations for increasing the maximum $f_2$ value to $n$, i.e., finding a solution with $k$ 1-bits, is at most $\sum_{i=k+1}^{n-k} 2e\mu n/i = O(\mu n \log n)$. Then, a solution $\bmx$ with $k$ 1-bits remains preserved in the population $P$ until a solution $\bmx'\in \{0^n\} \cup \{\bm{z} \mid |\bm{z}|_1 = k - a\}$ is covered. Now, we consider finding $\bmx'$ with $k-a$ 1-bits. In each generation, SMS-EMOA selects $\bmx$ with $k$ 1-bits from $P$ with probability $1/\mu$, and generates $\bmx'$ by flipping $a$ 1-bits with probability $(\binom{k}{a}/n)(1-1/n)^{n-a}\ge k/(en^a)$. Then the probability of generating $\bmx'$ in each generation is at least $(1/\mu) \cdot (k/en^a) = k/(e\mu n^a)$. Thus, the expected number of generations for finding $\bmx'$, i.e., finishing the second phase, is at most $e\mu n^a/k = O(\mu n^a/k)$.  

For the third phase, we show that the expected number of generation for finding $0^n$ and the objective vector $(2k+1/n, n-1/n) \in G$ is $O(n^{k-a+1})$, and by symmetry, the same bound holds for finding $1^n$ and $(n-1/n, 2k+1/n)\in G$. After the second phase, either $0^n$ or $(2k+1/n, n-1/n)$ is found. Thus, we separately consider the running time of generating $0^n$ from $(2k+1/n, n - 1/n)$, and the running time of generating $(2k+1/n, n - 1/n)$ from $0^n$. In each generation, $0^n$ can be generating by selecting $\bmx$ with $k-a$ 1-bits in the archive $A$ (whose probability is $(1/2)\cdot (1/|A|)$), and flipping $k-a$ 1-bits with probability at least $(1/n^{k-a})(1-1/n)^{n-(k-a)}\ge 1/(en^{k-a})$. Since the size of archive $A$ is no greater than $n-2k+5$, i.e., the size of Pareto front. The probability of generating $0^n$ from a solution with $n-a$ 1-bits in each generation is at least $(1/2|A|) \cdot 1/(en^{k-a}) \ge 1/(2en^{k-a+1})$. Thus the expected number of generations for finding $0^n$ is at most $2en^{k-a+1} = O(n^{k-a+1})$. For the case of generating $(2k+1/n, n - 1/n)$ from $0^n$, the analysis is similar to that presented above. Thus, the expected number of generations for the third phase is $O(n^{k-a+1})$.

For the fourth phase, the analysis is the same as that of third phase of Theorem~\ref{thm:ojzj2}. Thus, the expected number of generations for finding the whole internal Pareto front $F^*_I$ is $O(n^2\log n)$.

Combining the four phases, the total expected number of generations is $O(1) + O(\mu n^{a}/k) + O(n^{k-a+1}) + O(n^2\log n) = O(\max\{ \mu n^a/k, n^{k-a+1}, n^2\log n\})$. Thus, the theorem holds.
\end{proof}

Comparing with the result in Theorem~\ref{thm:archive_benchmark}, when reusing the archive, the expected running time for SMS-EMOA solving \aojzj\ reduces from $\Omega(n^k)$ to $O(\max\{\mu n^a/k, n^{k-a+1},n^2 \log n\})$. Theorem~\ref{thm:reusing_benchmark} reveals that in scenarios where the distribution of solutions in decision space and objective space is inconsistent, using a small population may lead to the loss of some solutions helpful for exploration, while reusing the archive enables revisiting these helpful solutions, thereby enhancing exploration.

\section{Reusing Archive vs. Using A Large Population}

In the last section, we theoretically show that reusing the archive is beneficial compared to using an archive without reuse. Now, we show that reusing the archive with a small population can be better than directly using a large population. Specifically, we further show that when reusing the archive, the upper bound on the expected running time for SMS-EMOA solving \aojzj, \ojzj, \omm, and \lotz\ is lower than that of using a large population size. We prove in Theorem~\ref{thm:population_benchmark} that the expected running time for SMS-EMOA solving \aojzj\ is $O(\mu \cdot\max\{n^a/k,n^{k-a}\})$ if using a population size $\mu \ge n-2k+5$. The proof idea is to divide the optimization procedure into three phases, where the first phase aims at finding the whole internal Pareto front $F_I^*$, the second phase aims at finding the two points on $G$, and the third phase aims at finding $1^n$ and $0^n$. In Theorem~\ref{thm:ojzj3}, Bian et al.~\shortcite{bian23stochastic} showed that the expected running time for SMS-EMOA solving \ojzj\ is $O(\mu n^k)$ if using a population size $\mu \ge n-2k+3$.

\begin{theorem}\label{thm:population_benchmark}
  For SMS-EMOA solving \aojzj\ with $n-2k = \Omega(n)$, if using a population size $\mu\ge n-2k+5$, then the expected running time for finding the Pareto front is $O(\mu \cdot \max\{ n^{a}/k, n^{k-a} \})$.
\end{theorem}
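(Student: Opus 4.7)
The plan is to follow the three-phase decomposition sketched before the statement and to exploit the fact that $\mu\ge n-2k+5$ equals the full Pareto front size of \aojzj\ ($|F_I^*|=n-2k+1$, plus the two points in $G$ and the two extremal points $0^n,1^n$). The enabling observation is that SMS-EMOA's hypervolume-based survivor selection can never evict a Pareto optimal solution with a unique objective vector once found: any dominated solution lies in a strictly worse non-dominated layer, and any duplicate objective vector contributes $\Delta_{\bmr}=0$, so the solution removed in lines~6--7 of Algorithm~\ref{alg:sms} is always non-Pareto-optimal whenever such a solution is present in $P\cup\{\bmx'\}$. This monotonicity of Pareto front coverage drives the phase-by-phase analysis.

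\textbf{Phase 1: covering $F_I^*$.} By a Chernoff bound together with $n-2k=\Omega(n)$, at least one initial solution lies in $S_I^*$ with probability $1-e^{-\Omega(n)}$, and reaching a first point of $F_I^*$ then costs $O(1)$ expected generations, exactly as in the first phase of Theorem~\ref{thm:ojzj2}. Thereafter I would mimic the third phase of Theorem~\ref{thm:ojzj2}: track the ``frontier'' set $D\subseteq P$ of Pareto optimal members whose Hamming neighbour is a still-uncovered point of $F_I^*$; while $D\neq\emptyset$, selecting a member of $D$ (probability $\ge 1/\mu$) and flipping a single suitable bit (probability $\Omega(1/n)$) extends the covered region of $F_I^*$, yielding an expected Phase 1 time of $O(\mu n^2)$ by summing over the at-most $n-2k+1$ extensions.

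\textbf{Phases 2 and 3: reaching $G$, then $0^n$ and $1^n$.} After Phase 1 the population contains solutions with exactly $k$ and exactly $n-k$ 1-bits. In Phase 2, selecting the $k$-bit solution (probability $1/\mu$) and flipping exactly $a$ of its 1-bits and nothing else produces the $G$-point $(2k+1/n,n-1/n)$ in one generation with probability $(1/\mu)\binom{k}{a}n^{-a}(1-1/n)^{n-a}=\Omega(\binom{k}{a}/(\mu n^a))$; using $\binom{k}{a}\ge k$ for $1\le a\le k-1$ gives expected cost $O(\mu n^a/k)$, and the symmetric $G$-point is handled analogously from the $(n-k)$-bit solution. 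In Phase 3, the preserved $(k-a)$-bit solution yields $0^n$ by simultaneously flipping all its $k-a$ 1-bits, with one-generation probability $\Omega(1/(\mu n^{k-a}))$; $1^n$ is obtained symmetrically from the $(n-k+a)$-bit solution. Hence Phase 3 costs $O(\mu n^{k-a})$.

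Summing the three phases yields $O(\mu n^2+\mu n^a/k+\mu n^{k-a})=O(\mu\cdot\max\{n^a/k,n^{k-a}\})$: the $\mu n^2$ term is merely a constant-factor match in big-O when $k=3,a=2$, and is strictly dominated by $\mu n^{k-a}$ when $k-a\ge 2$ or by $\mu n^a/k$ for larger $a$, using $k<n/2$. I expect the main obstacle to be the Phase 1 bookkeeping: formally verifying that under $\mu\ge n-2k+5$ hypervolume-based selection never evicts a uniquely-represented Pareto optimal solution (including after the boundary objective vector of a covered region of $F_I^*$ shifts as new points are added), so that the covered portion of $F_I^*$ can only grow. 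Once this monotonicity is established, Phases 2 and 3 reduce to elementary jump-style one-generation probability estimates completely analogous to the $1^n$-step already handled in the second phase of Theorem~\ref{thm:ojzj2}.
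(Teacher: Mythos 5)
Your proposal matches the paper's proof in all essentials: the same three-phase decomposition (cover $F_I^*$, reach $G$, then reach $0^n$ and $1^n$), the same key retention argument that $\mu\ge n-2k+5$ (the Pareto front size) guarantees every discovered Pareto-front objective vector survives hypervolume-based removal, and the same one-generation jump probabilities $\Omega(\binom{k}{a}/(\mu n^a))$ and $\Omega(1/(\mu n^{k-a}))$ for Phases 2 and 3. The only difference is that your Phase 1 bound of $O(\mu n^2)$ is slightly coarser than the paper's $O(\mu n\log n)$, but as you correctly argue it is still absorbed by $O(\mu\cdot\max\{n^a/k,\,n^{k-a}\})$, so the result is unaffected.
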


\begin{proof}
We divide the running process into three phases. The first phase starts after initialization and finishes until the internal Pareto front $F_I^*$ is found; the second phase starts after the first phase and finishes when the two points on $G$ are both found; the third phase starts after the second phase and finishes until $1^n$ and $0^n$ are both found.

Firstly, we show that once an objective vector $\bm{f}^*$ on the Pareto front has been found, it will always be maintained in the population. Because only one solution is removed in each generation by Algorithm~\ref{alg:sms}, we only need to consider the case that exactly one solution (denoted as $\bm{x}^*$) corresponds to $\bm{f}^*$. Since $\bm{x}^*$ is the Pareto optimal solution, we have $\bm{x}^* \in R_1$ in the non-dominated sorting procedure. Moreover, We have $\Delta_{\bm{r}}(\bm{x}^*,R_1)=HV_{\bm{r}}(R_1)-HV_{\bm{r}}(R_1\setminus \{\bm{x}^*\})>0$ because $\bm{f}^*$ can't be covered by any objective vector in $\bm{f}(\{0,1\}^n) \backslash \bm{f}^*$. If at least two solutions in $R_1$ have the same objective vector, they must have a zero $\Delta$-value, because removing one of them will not decrease the hypervolume covered. Thus, there exist at most $n-2k+5$ solutions (i.e., the size of Pareto front) in $R_1$ with $\Delta>0$. Note that the removed solution is from $P \cup \{\bm{x}'\}$, which contains at least $n-2k+6$ solutions, as $|P| \geq n-2k+5$. Thus, $\bm{x}^*$ will not be removed because it is one of the best $n-2k+5$ solutions. Then, for the first phase, the analysis is similar to that of the first and third phase of Theorem~\ref{thm:ojzj2}. The expected number of generations for finding one point on $F_I^*$ is $O(1)$ (analogous to the first phase of Theorem~\ref{thm:ojzj2}).
Subsequently, considering selecting solution from the population (whose probability is $1/\mu$) rather than from the archive, the expected number of generations for finding the whole $F^*_I$ is $O(\mu n \log n)$ (analogous to the third phase of Theorem~\ref{thm:ojzj2}). Thus, the expected number of generations for the first phase is $O(1 + \mu n\log n) = O(\mu n\log n)$.

For the second phase, we show that the expected number of generations for finding $(2k+1/n, n-1/n)$ on $G$ is $O(\mu n^{a}/k)$, and by symmetry, the same bound holds for finding $(n-1/n,2k+1/n)$. A solution $\bm{x}$ with $k$ 1-bits is found and preserved in the population after the first phase. In each generation, SMS-EMOA selects $\bm{x}$ from $P$ with probability $1/\mu$, and generates a solution $\bm{x}'$ with $k-a$ 1-bits such that $\bm{f}(\bm{x}') = (2k+1/n, n-1/n)$ by flipping $a$ 1-bits with probability $(\binom{k}{a}/n^{a})(1-1/n)^{n-a} \ge k/(en^a)$. The probability of these events happening in a single generation is at least $(1/\mu)\cdot(k/en^a) = k/(e\mu n^a)$. Thus, the expected number of generations for finding $(2k+1/n, n-1/n)$ is at most $e\mu n^a/k = O(\mu n^a/k)$.

For the third phase, we show that the expected number of generations for finding $0^n$ is $O(\mu n^{k-a})$, and by symmetry, the same bound holds for finding $1^n$. A solution $\bm{x}$ with $k-a$ 1-bits is found after the second phase. In each generation, SMS-EMOA selects $\bm{x}$ from $P$ with probability $1/\mu$, and generates a solution $0^n$ by flipping $k-a$ 1-bits with probability $(1/n^{k-a})(1-1/n)^{n-(k-a)} \ge 1/(en^{k-a})$. The probability of these events happening in a single generation is at least $(1/\mu)\cdot(1/en^{k-a}) = 1/(e\mu n^{k-a})$. Thus, the expected number of generations for finding $0^n$ is at most $e\mu n^{k-a} = O(\mu n^{k-a})$.

Combining the three phases, the total expected number of generations is $O(\mu n\log n + \mu n^a/k + \mu n^{k-a}) = O(\mu \cdot \max\{ n^{a}/k, n^{k-a} \})$. Thus, the theorem holds.
\end{proof}

\begin{theorem}[Bian et al.~\shortcite{bian23stochastic}]\label{thm:ojzj3}
    For SMS-EMOA solving \ojzj\ with $n-2k = \Omega(n)$, if using a population size $\mu\ge n-2k+3$, then the expected running time for finding the Pareto front is $O(\mu n^k)$.
\end{theorem}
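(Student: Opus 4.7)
The plan is to follow the three-phase decomposition used in Theorem~\ref{thm:population_benchmark}, adapted to \ojzj\ whose Pareto front has size $n-2k+3$. The enabling observation is a population-preservation lemma: once a Pareto optimal objective vector has been produced, at least one solution mapping to it survives in $P$ in every subsequent generation. The argument is identical in spirit to the one embedded in the proof of Theorem~\ref{thm:population_benchmark}. A Pareto optimal $\bmx^*$ always sits in the first non-dominated layer $R_1$, and when its objective vector is unique in $P\cup\{\bmx'\}$ its hypervolume contribution $\Delta_{\bmr}(\bmx^*,R_1)$ is strictly positive because no other reachable objective vector covers it; duplicates contribute zero and are removed first. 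Since $\mu \ge n-2k+3$ and $|P\cup\{\bmx'\}|\ge n-2k+4$, there is always at least one solution with $\Delta=0$ or one dominated solution to discard, so the set of represented Pareto front points can only grow.

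First I would handle Phase 1, reaching one point of the internal Pareto front $F_I^*$. Because $\mu \ge n-2k+3=\Omega(n)$ and $n-2k=\Omega(n)$, a Chernoff bound yields that with probability $1-e^{-\Omega(n)}$ the initial population already contains some $\bmx$ with $|\bmx|_1\in[k..n-k]$. In the small complementary event, standard drift along the $f_1$ or $f_2$ slope moves a solution into $S_I^*$ in $O(n\log n)$ expected generations using a single bit flip each step. Either way, Phase 1 contributes $O(\mu n\log n)$.

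Second, Phase 2 covers the whole $F_I^*$. By the preservation lemma the covered subset of $F_I^*$ grows monotonically. At any point before completion, some already covered solution has a Hamming neighbor that is Pareto optimal but not yet represented; selecting that solution (probability $1/\mu$) and flipping the single required bit (probability $\ge 1/(en)$) adds a new front point. A coupon-collector-style sum over the $n-2k+1$ internal front points gives expected $O(\mu n\log n)$ generations for this phase.

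Third, Phase 3 reaches $0^n$ and $1^n$. After Phase 2 the solutions with $k$ and $n-k$ 1-bits are permanently present in $P$. Selecting the $k$-1-bit solution (probability $1/\mu$) and flipping exactly its $k$ 1-bits while leaving all 0-bits fixed has probability $\ge 1/(en^k)$, so $0^n$ is found in $O(\mu n^k)$ expected generations; by symmetry the same bound holds for $1^n$. Summing the three phases yields $O(\mu n\log n)+O(\mu n\log n)+O(\mu n^k)=O(\mu n^k)$, as claimed. The main obstacle is Phase 3: the $k$-bit jump is the bottleneck and sets the final bound, and it is precisely the preservation lemma of the first paragraph that keeps the cost multiplicative in $\mu$ rather than introducing an extra Pareto-front-size factor on top of the $n^k$ term.
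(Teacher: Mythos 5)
Your proof is correct and follows exactly the template the paper itself uses for the analogous results: the population-preservation argument via positive hypervolume contribution and the count $\mu\ge n-2k+3$ is the same as in the proof of Theorem~\ref{thm:population_benchmark}, and the three phases (reach $F_I^*$, cover $F_I^*$ by single-bit steps in $O(\mu n\log n)$, then the $k$-bit jumps to $0^n$ and $1^n$ in $O(\mu n^k)$) mirror the structure of Theorems~\ref{thm:ojzj2} and~\ref{thm:population_benchmark}. Note that the paper does not prove this statement itself but imports it from Bian et al.\ (2023); your reconstruction is consistent with that cited argument and with the bound claimed.
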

 
Comparing the results in Theorems~\ref{thm:reusing_benchmark} and~\ref{thm:population_benchmark}, as well as those in~\ref{thm:ojzj2} and~\ref{thm:ojzj3}, the upper bound on the expected running time for SMS-EMOA solving \aojzj\ and \ojzj\ can be reduced by a factor of $\Theta(n)$. The reason is that, with archive reuse, the population size can be constant, rather than the size of the Pareto front (i.e., $n - 2k + 5$ or $n-2k+3$). On \aojzj, this improvement becomes evident only when $a > k/2$.

% The expected number of generations for SMS-EMOA (without using an archive) solving \ojzj\ has been shown to be $O(\mu n^k)$ in Theorem~\ref{thm:ojzj3}, where the population size $\mu\ge n-2k+3$~\cite{bian23stochastic}. Our result in Theorem~\ref{thm:ojzj2} shows that if a constant population size is used for SMS-EMOA with the archive reuse, the upper bound can be reduced by a factor of $\Theta(n)$. 

Bian et al.~\shortcite{bian2024archive} showed that for SMS-EMOA solving \omm\ and \lotz, if using a small population and an archive without reuse, the upper bound on the expected running time can be reduced by a factor of $\Theta(n)$, compared to using a large population size $\mu\ge n+1$ (i.e., the size of Pareto front). Next, we will show that archive reuse does not diminish this advantage. 
Note that Bian et al.~\shortcite{bian2024archive} considered one-point crossover, so the archive reuse method is adapted to select each parent solution for recombination with probability $1/2$ from the population and $1/2$ from the archive.
% Then, the proof idea is similar to that in~\cite{bian2024archive}. Since the archive reuse mechanism selects each parent with probability $1/2$ from the population, 
Then, we only need to consider the case where both parents are selected from the population (whose probability is $1/4$), and the remaining analysis is the same as that in~\cite{bian2024archive}. The definition of \omm\ and \lotz, and full proofs are given in the supplementary.

\begin{theorem}[Zheng et al.~\shortcite{zheng2024sms}]\label{thm:omm}
    Consider using the SMS-EMOA with $\mu\ge n+1$ to optimize the \omm\ problem. Then after at most $2e\mu n(\ln n+1)$ iterations in expectation, the Pareto front is covered.
\end{theorem}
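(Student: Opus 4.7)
The plan is to combine two ingredients: (i) a preservation property ensuring no already-covered Pareto point is ever lost under $\mu\ge n+1$, and (ii) a two-sided fitness-level argument tracking how the $|\bmx|_1$-range spanned by the population expands.

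First, I would establish that once an objective vector $(i,n-i)$ has been produced, it stays in the population forever. Every bitstring is Pareto optimal for \omm, so non-dominated sorting places all of $P\cup\{\bmx'\}$ into a single front $R_1$. A solution whose objective vector coincides with another member of $R_1$ has hypervolume contribution $\Delta_{\bmr}=0$, whereas a solution with a unique objective vector contributes strictly positively. Since $|P\cup\{\bmx'\}|=\mu+1\ge n+2$ while the Pareto front contains only $n+1$ distinct points, pigeonhole forces at least one duplicate, and the removal step of SMS-EMOA targets such a zero-contribution duplicate. Consequently, every already-covered objective vector survives each generation.

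Second, I would track the extremes $M_t=\max_{\bmx\in P_t}|\bmx|_1$ and $m_t=\min_{\bmx\in P_t}|\bmx|_1$. In the bi-objective SMS-EMOA variant described in the preliminaries, the two solutions maximizing $f_1$ and $f_2$ respectively are directly preserved, so $M_t$ is non-decreasing and $m_t$ is non-increasing. To raise $M_t$ by one, it suffices to select a parent realizing $M_t$ (probability $\ge 1/\mu$) and to flip exactly one of its $n-M_t$ zero-bits while leaving the others unchanged (probability $\ge (n-M_t)/(en)$), for a per-generation success probability at least $(n-M_t)/(e\mu n)$. By the standard fitness-level / variable-drift bound, the expected time to drive $M_t$ to $n$ is at most
\[
\sum_{j=M_0}^{n-1}\frac{e\mu n}{n-j} \;\le\; e\mu n\,H_n \;\le\; e\mu n(\ln n+1),
\]
and a symmetric computation bounds the expected time to drive $m_t$ to $0$ by the same quantity. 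Using $\max(T_1,T_2)\le T_1+T_2$ to combine the two directions gives the claimed upper bound $2e\mu n(\ln n+1)$.

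The only step that requires real care is the preservation claim; the rest is a routine fitness-level argument exploiting the bi-objective tie-breaking rule. A minor subtlety is that the two directions of expansion share the same stochastic process rather than running independently, but the additive bound on the maximum of the two hitting times sidesteps any need to disentangle them, and no further conditioning is needed because the single-step success probabilities $(n-M_t)/(e\mu n)$ and $m_t/(e\mu n)$ hold regardless of the state of the opposite extreme.
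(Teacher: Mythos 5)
First, a framing note: the paper does not prove this statement itself --- it is imported from Zheng et al.~(2024) as a known result --- so your attempt can only be compared against the standard argument and the analogous proofs this paper gives elsewhere (the third phase of Theorem~\ref{thm:ojzj2} and the proof of Theorem~\ref{thm:sms-arc-omm}). Your preservation step is correct and is exactly the standard one: since every solution of \omm\ is Pareto optimal, $R_1=P\cup\{\bmx'\}$ has $\mu+1\ge n+2$ members but only $n+1$ possible objective vectors, so a duplicate with $\Delta_{\bmr}=0$ always exists, the removed solution is such a duplicate, and no covered objective vector is ever lost.

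The gap is in your second step. Driving $M_t=\max_{\bmx\in P_t}|\bmx|_1$ to $n$ and $m_t$ to $0$ only shows that $1^n$ and $0^n$ are eventually found; the theorem asserts that all $n+1$ points $(b,n-b)$ are covered, and reaching the two extremes does not imply this. The raw maximum can increase by more than one in a single generation: conditional on selecting a solution realizing $M_t$ with $n-M_t=\Theta(n)$, a mutation flipping exactly two of its $0$-bits has probability $\Theta(1)$, comparable to the unit-increment event, so intermediate values of $|\bmx|_1$ are skipped with constant probability per improving step and your process never returns to cover them (the initial population, being concentrated around $n/2$, also need not cover a contiguous range). The fix is standard and cost-free: track instead the right endpoint $R_t$ of the maximal contiguous block of covered values containing some initially covered value $c$ (and symmetrically the left endpoint $L_t$). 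By your preservation lemma a solution with $|\bmx|_1=R_t$ is always present in $P$; selecting it (probability at least $1/\mu$) and flipping exactly one of its $n-R_t$ zero-bits (probability at least $(n-R_t)/(en)$) covers the value $R_t+1$, so $R_t$ advances with probability at least $(n-R_t)/(e\mu n)$, and the same two harmonic sums give at most $e\mu n(\ln n+1)+e\mu n(\ln n+1)=2e\mu n(\ln n+1)$ in expectation. This is precisely how the paper itself handles the analogous spreading phases (via the set $D$ of covered solutions with an uncovered Hamming neighbor). With that substitution your proof is complete; as written, it establishes only the weaker statement that the two extreme solutions are found within the stated time.
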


\begin{theorem}\label{thm:sms-arc-omm}
	For SMS-EMOA solving \omm, if using a population size $\mu\ge 2$, one-point crossover and reusing the archive, then the expected running time for finding the Pareto front is $O(\mu n \log n)$.
\end{theorem}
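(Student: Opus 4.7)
The plan is to reduce Theorem~\ref{thm:sms-arc-omm} to the already-established analysis of SMS-EMOA with one-point crossover and an archive \emph{without} reuse on \omm, as given by Bian et al.~\shortcite{bian2024archive}. The only algorithmic difference between the two settings lies in the parent-selection step: in the reusing version each of the two parents is drawn independently with probability $1/2$ from $P$ and with probability $1/2$ from $A$, so both parents are selected from $P$ with probability exactly $1/4$. Conditioned on that event, a single generation of the reusing algorithm is distributed identically to a single generation of the non-reusing baseline studied in~\cite{bian2024archive}, because the mutation operator, the hypervolume-based environmental selection and the archive-update rules are unchanged.

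With this reduction in mind, the first step I would carry out is to recall that on \omm\ every solution is Pareto optimal, the Pareto front has size $n+1$, and the archive $A$ therefore coincides with the set of distinct objective vectors discovered so far; in particular $A$ grows monotonically throughout the run, both in the reusing and non-reusing algorithms. The second step is to import the one-generation progress bound from~\cite{bian2024archive}: as long as $A$ does not yet cover the entire Pareto front, the probability of discovering a new Pareto point in a baseline generation is bounded below by a quantity of order $\Theta\!\left(\tfrac{1}{\mu n}\right)$ per missing boundary level, obtained by selecting an extremal individual and flipping a single bit of the appropriate type. Multiplying this probability by the $1/4$ factor for ``both parents from $P$'' only changes the constant, preserving the order.

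The third step is to assemble a standard fitness-level / additive-drift summation over the missing $f_1$-values on the Pareto front. Summing the $O(\mu n / i)$ expected waiting times for $i$ ranging from $1$ to $n$, as in the original argument, yields $O(\mu n \log n)$, and the factor $4$ slowdown from the reuse mechanism is absorbed into the $O(\cdot)$. Handing off the remainder of the calculation to~\cite{bian2024archive} then completes the bound.

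The main obstacle is to justify that the ``both parents from $P$'' generations really do make progress at the same asymptotic rate as in the non-reusing baseline, despite the fact that the intervening ``archive-assisted'' generations perturb the distribution of $P$. This is resolved by noting three monotonicity facts: (i) SMS-EMOA's environmental selection is unaffected by the origin of the parents, (ii) any new Pareto point entering $P$ via an archive-assisted generation can only help subsequent progress (it either enlarges the coverage directly or, in the worst case, is re-sampled later), and (iii) the archive itself is monotone, so parents drawn from $A$ are never worse informed than those drawn from $P$. Hence the one-generation progress of the reusing algorithm dominates $1/4$ times that of the baseline, which is all that is needed to transfer the $O(\mu n \log n)$ bound.
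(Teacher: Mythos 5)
Your reduction is, at its core, the same argument the paper makes: condition on the event that the parent(s) are drawn from $P$ rather than $A$ (probability $1/2$ per parent, hence $1/4$ for a crossover step), observe that the invariants driving the baseline analysis --- the boundary points $1^n$ and $0^n$ are never removed from $P$ once found, the maximum $f_1$ and $f_2$ values never decrease, and the archive only grows --- are unaffected by archive-assisted generations, and conclude that each per-generation progress probability loses only a constant factor. The paper's appendix proof does exactly this, re-deriving the baseline progress bounds under that conditioning and summing them as you describe, so your three ``monotonicity facts'' are an adequate (if terse) stand-in for what the paper spells out.

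The one place your write-up would fail if taken literally is the description of the per-generation progress as ``selecting an extremal individual and flipping a single bit of the appropriate type.'' That mechanism suffices only for the first phase (driving the extremes to $1^n$ and $0^n$). For covering the interior of the Pareto front with a constant population size it breaks down: once $1^n$ and $0^n$ are found, environmental selection retains only the boundary points, so for $\mu=2$ the population is $\{0^n,1^n\}$ forever; a single-bit flip of a population member then only produces solutions with $1$ or $n-1$ ones, which are immediately evicted, and you can never climb level by level toward the middle of the front from $P$ alone. The step that actually covers the interior --- both in the paper's proof and in the baseline you invoke --- is one-point crossover between an arbitrary first parent and $1^n$ or $0^n$: sweeping the crossover point realizes every intermediate number of $1$-bits, so each missing Pareto point is generated with probability $\Omega\left(p_c/(\mu n)\right)$ in one generation, and the coupon-collector sum over the $n+1$ levels gives $O(\mu n\log n)$. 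Provided you import that crossover-based bound (rather than the single-bit-flip one) from the baseline, your $1/4$-conditioning argument coincides with the paper's and the theorem follows.
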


\begin{theorem}[Zheng et al.~\shortcite{zheng2024sms}]\label{thm:lotz}
    Consider using the SMS-EMOA with $\mu\ge n+1$ to optimize the \lotz\ problem. Then after at most $2e\mu n^2$ iterations in expectation, the Pareto front is covered.
\end{theorem}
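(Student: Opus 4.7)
The plan is to follow the standard two-phase ``reach-then-spread'' argument that has been used for SMS-EMOA on \lotz\ in the cited prior work, tailored so that the constant $2e\mu n^2$ falls out. The key preliminary observation is the preservation lemma already rehearsed in the proof of Theorem~\ref{thm:population_benchmark}: because $\mu\ge n+1$ equals the size of the \lotz\ Pareto front, any Pareto-optimal objective vector represented in $P$ has a strictly positive hypervolume contribution, while duplicate representatives contribute zero. The set $P\cup\{\bmx'\}$ has $\mu+1\ge n+2$ members, so the solution removed in line~7 of Algorithm~\ref{alg:sms} can always be chosen outside a fixed system of unique Pareto-optimal representatives. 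Hence the set of Pareto-optimal objective vectors covered by $P$ is monotone non-decreasing across generations.

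For the first phase I would use the standard potential $M(P)=\max_{\bmx\in P}(f_1(\bmx)+f_2(\bmx))$, noting $M\le n$ with equality iff $P$ contains a Pareto-optimal solution. A solution $\bmx$ attaining the current $M$ has the form $1^{f_1(\bmx)}\,0\,\ast\cdots\ast\,1\,0^{f_2(\bmx)}$, so flipping either the $0$ at position $f_1(\bmx)+1$ or the $1$ at position $n-f_2(\bmx)$ strictly increases $M$, and the resulting offspring cannot be removed by the selection step since it lies in the first non-dominated front with strictly positive hypervolume contribution against its neighbours. The probability of selecting this parent and performing one such specific one-bit flip is at least $(1/\mu)\cdot(1/n)(1-1/n)^{n-1}\ge 1/(e\mu n)$ per flip, and there are two such flips. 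Thus $M$ increases at a rate of at least $2/(e\mu n)$, and since $M$ rises at most $n$ times, the expected number of generations to reach $M=n$ is at most $e\mu n^2/2$.

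For the second phase, let $C(P)$ denote the number of distinct Pareto-optimal objective vectors covered by $P$. By the preservation argument, $C$ is non-decreasing once it is positive, and a short induction (or a convexity argument using that Pareto-optimal solutions are uniquely of the form $1^i 0^{n-i}$) shows that the covered indices form a contiguous interval $[\ell..r]\subseteq[0..n]$. As long as $C<n+1$, at least one boundary Pareto-optimal solution $1^\ell 0^{n-\ell}$ or $1^r 0^{n-r}$ sits in $P$, and flipping its unique interface bit produces the adjacent still-uncovered Pareto-optimal objective vector. This event occurs with probability at least $(1/\mu)\cdot(1/n)(1-1/n)^{n-1}\ge 1/(e\mu n)$ per generation, so $C$ rises by one within an expected $e\mu n$ generations. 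Summing over the at most $n$ required increases yields a bound of $e\mu n^2$ generations for spreading, and combining with the first phase gives the claimed $2e\mu n^2$.

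The main obstacle is controlling the second phase cleanly: one must guarantee that the specific ``interface-bit'' offspring, once produced, is actually kept, and that the population always contains at least one extremal Pareto-optimal solution whose mutation extends $[\ell..r]$. I would handle both points by invoking the preservation lemma applied to $P\cup\{\bmx'\}$ (the newly produced boundary Pareto-optimal representative has strictly positive $\Delta_{\bmr}$ against its sole neighbour, so it survives selection) and by exploiting the fact that Pareto-optimal solutions on \lotz\ are in bijection with their $f_1$ value, so the extremal representatives in $P$ are uniquely identified and their ``next-bit-to-flip'' is deterministic. The constant $2e\mu n^2$ then follows by adding the two $e\mu n^2$ (or $e\mu n^2/2$) phase bounds, tightened by the factor of two coming from two admissible single-bit flips in phase one.
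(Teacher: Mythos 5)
This theorem is imported from Zheng et al.\ (2024) and the paper states it without proof, so there is nothing in the paper itself to compare against; judged on its own, your argument is the standard two-phase analysis for SMS-EMOA on \lotz\ and is essentially sound, with the constants adding up to $e\mu n^2/2 + e\mu n^2 \le 2e\mu n^2$. Two remarks. First, your survival argument is correctly anchored: with $\mu\ge n+1$ the multiset $P\cup\{\bmx'\}$ has at least $n+2$ members while any antichain of distinct \lotz\ objective vectors has size at most $n+1$, so either a dominated solution sits in a later front or a duplicate with $\Delta_{\bmr}=0$ exists, and the unique maximizer of $f_1+f_2$ (phase one) or a uniquely represented Pareto-optimal vector (phase two) is never the one removed; this is exactly the preservation argument the paper rehearses in Theorem~\ref{thm:population_benchmark}. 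Second, your claim that the covered Pareto indices form a contiguous interval $[\ell..r]$ is not actually true in general (a multi-bit mutation of $1^j0^{n-j}$ can land on $1^{j'}0^{n-j'}$ with $|j'-j|>1$, and the ``short induction'' would not close), but this is harmless: all you need is that whenever $0<C<n+1$ some covered index has an uncovered neighbour, its representative $1^j0^{n-j}$ is preserved in $P$, and the corresponding single interface-bit flip has probability at least $1/(e\mu n)$. With that small repair the proof is correct and matches the route one would expect from the cited source.
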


\begin{theorem}\label{thm:sms-arc-lotz}
	For SMS-EMOA solving \lotz, if using a population size $\mu\ge 2$, one-point crossover and reusing the archive, then the expected running time for finding the Pareto front is $O(\mu n^2+\mu^2 n\log n)$.
\end{theorem}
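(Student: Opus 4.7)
The plan is to directly leverage the analysis of Bian et al.~\shortcite{bian2024archive} for SMS-EMOA with an archive (without reuse) on \lotz, and to show that the archive-reuse mechanism only introduces a constant-factor overhead. Under the proposed reuse scheme, when one-point crossover is used each of the two parent solutions is independently drawn from $P$ with probability $1/2$ and from $A$ with probability $1/2$. Consequently, in any single generation the event $E$ that \emph{both} parents are selected from the population $P$ occurs with probability exactly $1/4$, independently across generations.

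I would next focus on the conditional dynamics restricted to the generations in which $E$ occurs. Conditioned on $E$, the distribution over parent pairs, the subsequent one-point crossover and bit-wise mutation, and the SMS-EMOA population update step are identical to those of the archive-without-reuse variant analyzed in~\cite{bian2024archive}. The archive $A$ may acquire additional non-dominated solutions during non-$E$ generations, but since $A$ is monotonically non-decreasing in the Pareto-dominance sense, these extra solutions can only accelerate coverage of the Pareto front, never delay it. Hence any upper bound proved in the archive-without-reuse setting on the number of $E$-generations needed to cover the Pareto front applies verbatim here.

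Converting this into a bound on total generations is then routine. By~\cite{bian2024archive}, the expected number of generations for the archive-without-reuse SMS-EMOA with one-point crossover to cover the Pareto front of \lotz\ is $O(\mu n^2 + \mu^2 n \log n)$. Since each generation independently triggers $E$ with probability $1/4$, a standard geometric waiting-time argument shows that the expected number of generations until enough $E$-events have occurred is at most four times this quantity, which is still $O(\mu n^2 + \mu^2 n \log n)$.

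The main obstacle is conceptual rather than computational: one must argue carefully that the extra solutions accumulating in $A$ during non-$E$ generations cannot interfere with the progress bound, i.e., restricting attention to $E$-generations yields a \emph{lower} bound on the rate of progress compared with the archive-without-reuse algorithm. This relies on the fact that SMS-EMOA's population update in Algorithm~\ref{alg:sms} depends only on $P\cup\{\bm{x}'\}$ (hence is unaffected by $A$), and that adding Pareto-optimal solutions to $A$ only shortens the remaining work. Once this monotonicity is established, the theorem follows directly.
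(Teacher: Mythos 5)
Your high-level strategy is exactly the one the paper takes: the main text states that one ``only need[s] to consider the case where both parents are selected from the population (whose probability is $1/4$), and the remaining analysis is the same as that in~\cite{bian2024archive}'', and the appendix then carries this out as a two-phase analysis (mutation-only steps to reach $1^n$ and $0^n$ in $O(\mu n^2)$ generations, then crossover of $1^n$ and $0^n$ to fill in the front in $O(\mu^2 n\log n)$ generations), with the extra constant factor for selecting the parent(s) from $P$ absorbed into each per-generation success probability.

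There is, however, a gap in your black-box reduction as stated. You justify non-interference only for the archive $A$ (``adding Pareto-optimal solutions to $A$ only shortens the remaining work''), but the non-$E$ generations do more than grow $A$: they also generate an offspring and run the population update, so $P$ itself is modified between consecutive $E$-generations. Consequently the sequence of populations observed at $E$-generations is \emph{not} distributed as the population sequence of the archive-without-reuse algorithm, and you cannot transfer the bound on the number of its generations ``verbatim'' to the number of $E$-generations without opening up the cited proof. What saves the argument — and what the paper's explicit phase analysis supplies — is that every progress measure used is monotone under \emph{arbitrary} SMS-EMOA updates: the maximum $f_1$ (resp.\ $f_2$) value among Pareto optimal solutions in $P$ never decreases because boundary solutions are always preserved, so $1^n$ and $0^n$ stay in $P$ once found, and the set of Pareto-front points covered by $A$ is non-decreasing. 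You need to state and use these invariants (which requires identifying the phase structure of the cited analysis), rather than a generic coupling. A smaller imprecision: the event ``both parents from $P$ with probability exactly $1/4$'' only applies to crossover generations; the first phase of the LOTZ analysis relies on mutation-only steps, where the relevant event is ``single parent from $P$ and no crossover'', with probability $(1-p_c)/2$. Both are constants, so the final bound is unaffected, but the uniform ``probability $1/4$'' framing does not literally cover the whole proof.
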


The expected running time for SMS-EMOA (without using an archive) solving \omm\ and \lotz\ has been shown to be $O(\mu n\log n)$ and $O(\mu n^2)$ in Theorems~\ref{thm:omm} and~\ref{thm:lotz}, respectively, where the population size $\mu\ge n+1$~\cite{zheng2024sms}. Thus, our results in Theorems~\ref{thm:sms-arc-omm} and~\ref{thm:sms-arc-lotz} show that if a constant population size is used for SMS-EMOA reusing the archive, the expected running time can be reduced by a factor of $\Theta(n)$. 

% The main reason for the above acceleration is that the archive preserves all non-dominated solutions generated during the search. This allows SMS-EMOA's population to focus on exploring solutions that are critical for finding the Pareto front.

\begin{figure*}[!t]
    \begin{flushright}
    \vspace{-10pt}
    \end{flushright}
	\renewcommand{\arraystretch}{0.1} % Adjust the value to control spacing
	% \fontsize{8.5pt}{10pt}\selectfont
	\begin{center}
		%\hspace*{-5pt}
         \begin{tabular}{@{}c@{}@{}c@{}@{}c@{}}
            % Row 1: KP
            \includegraphics[scale=0.28]{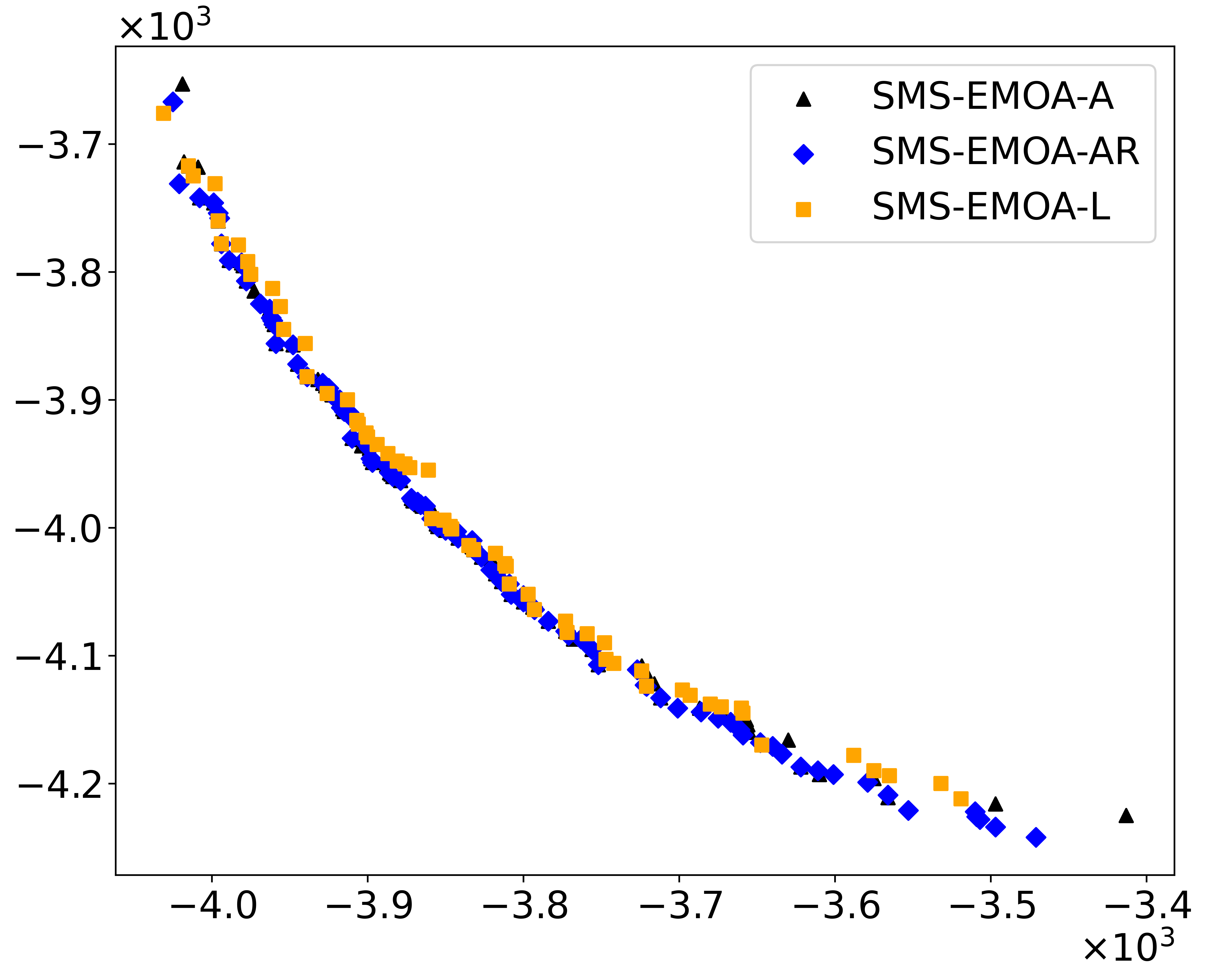} &
            \includegraphics[scale=0.28]{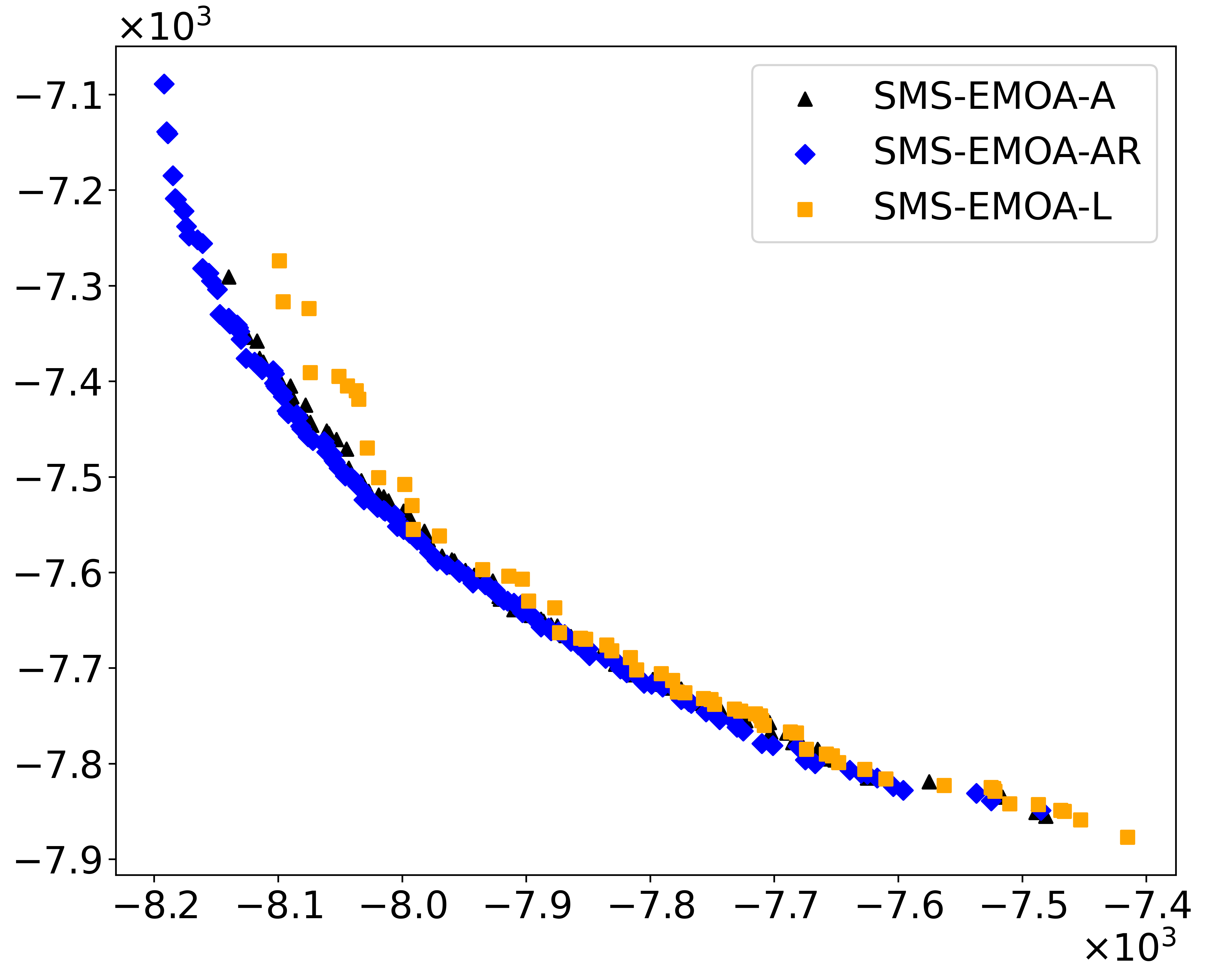} &
            \includegraphics[scale=0.28]{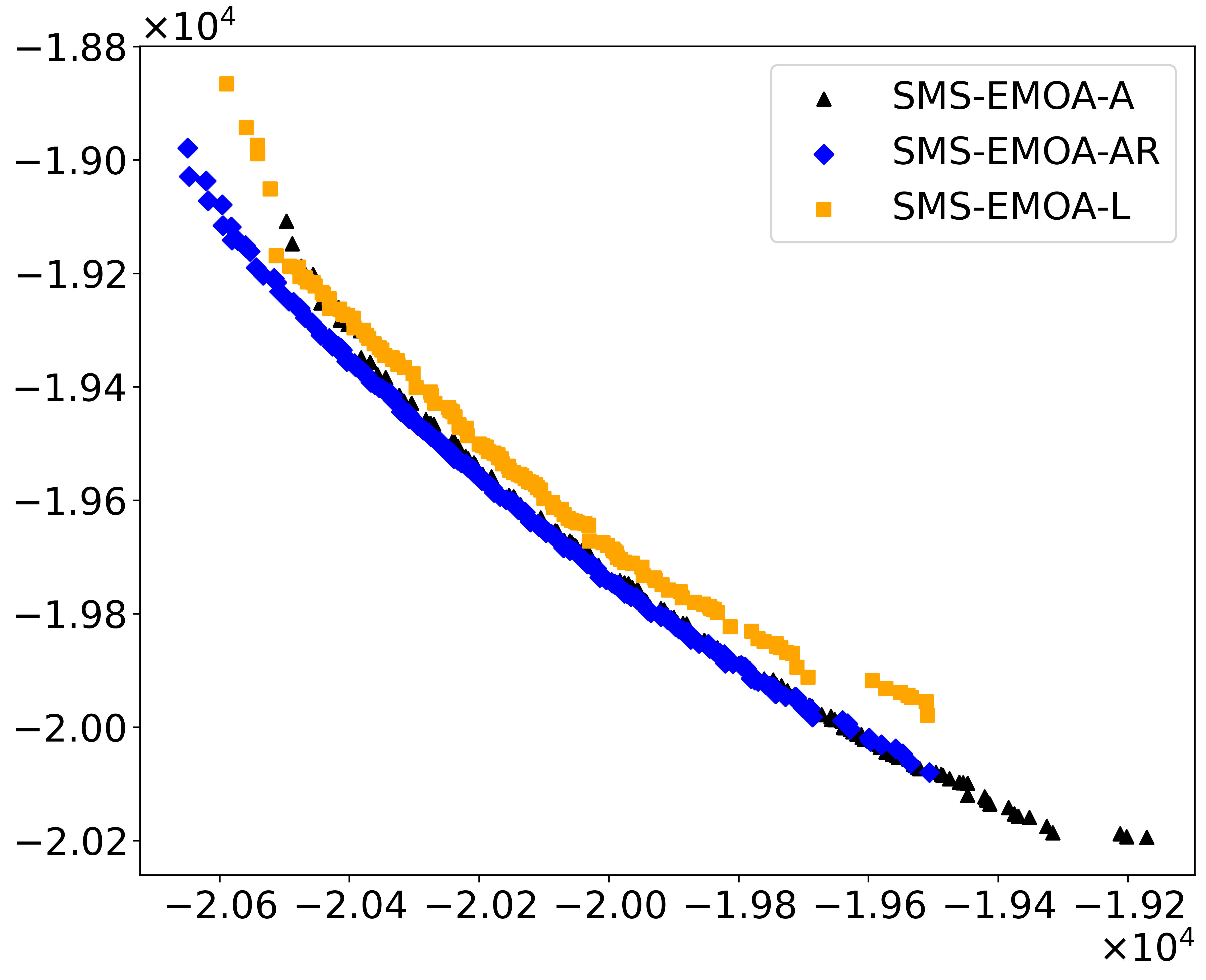} \\
            (a) KP 100D & (b) KP 200D & (c) KP 500D \\[6pt]

            % Row 2: NK
            \includegraphics[scale=0.28]{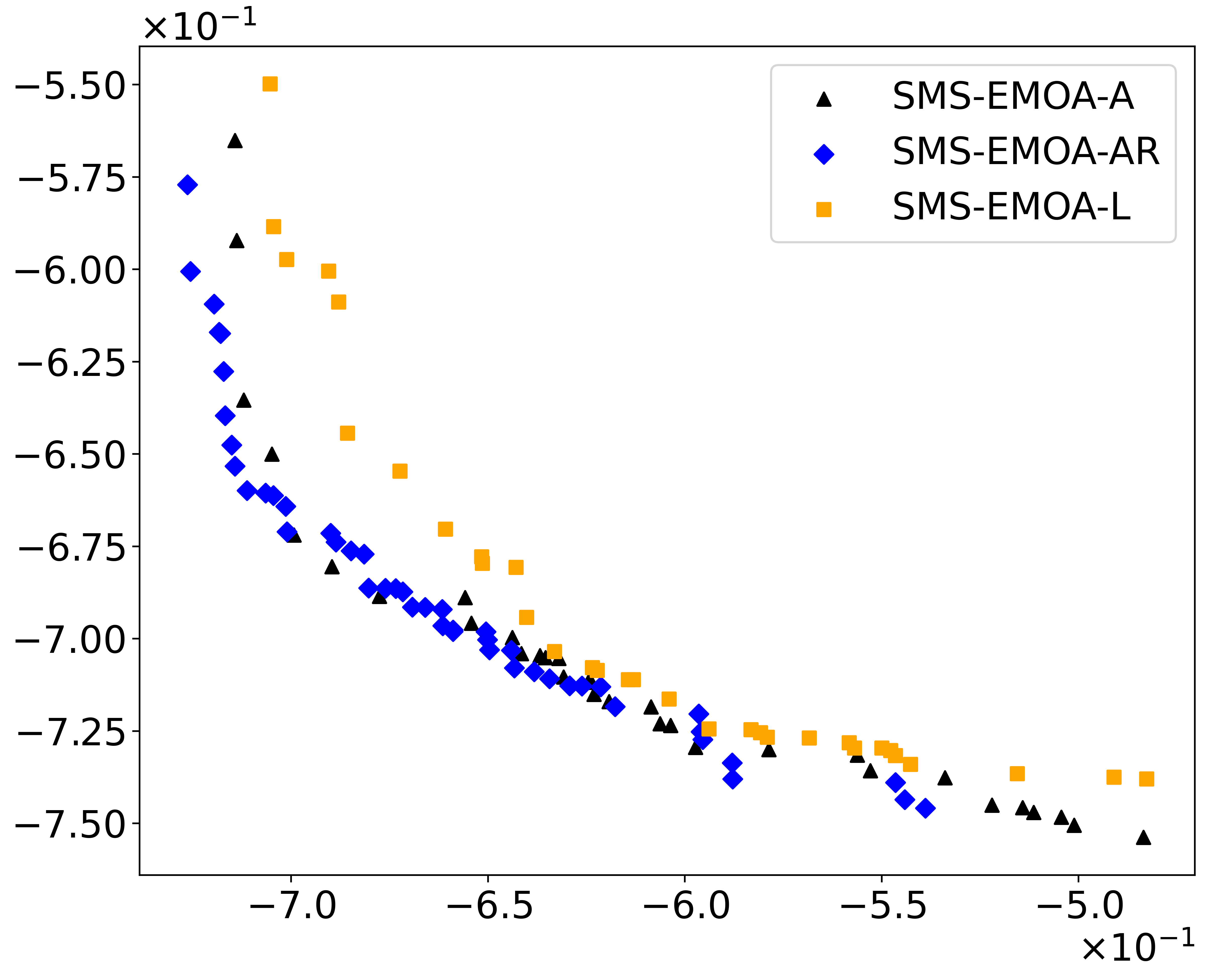} &
            \includegraphics[scale=0.28]{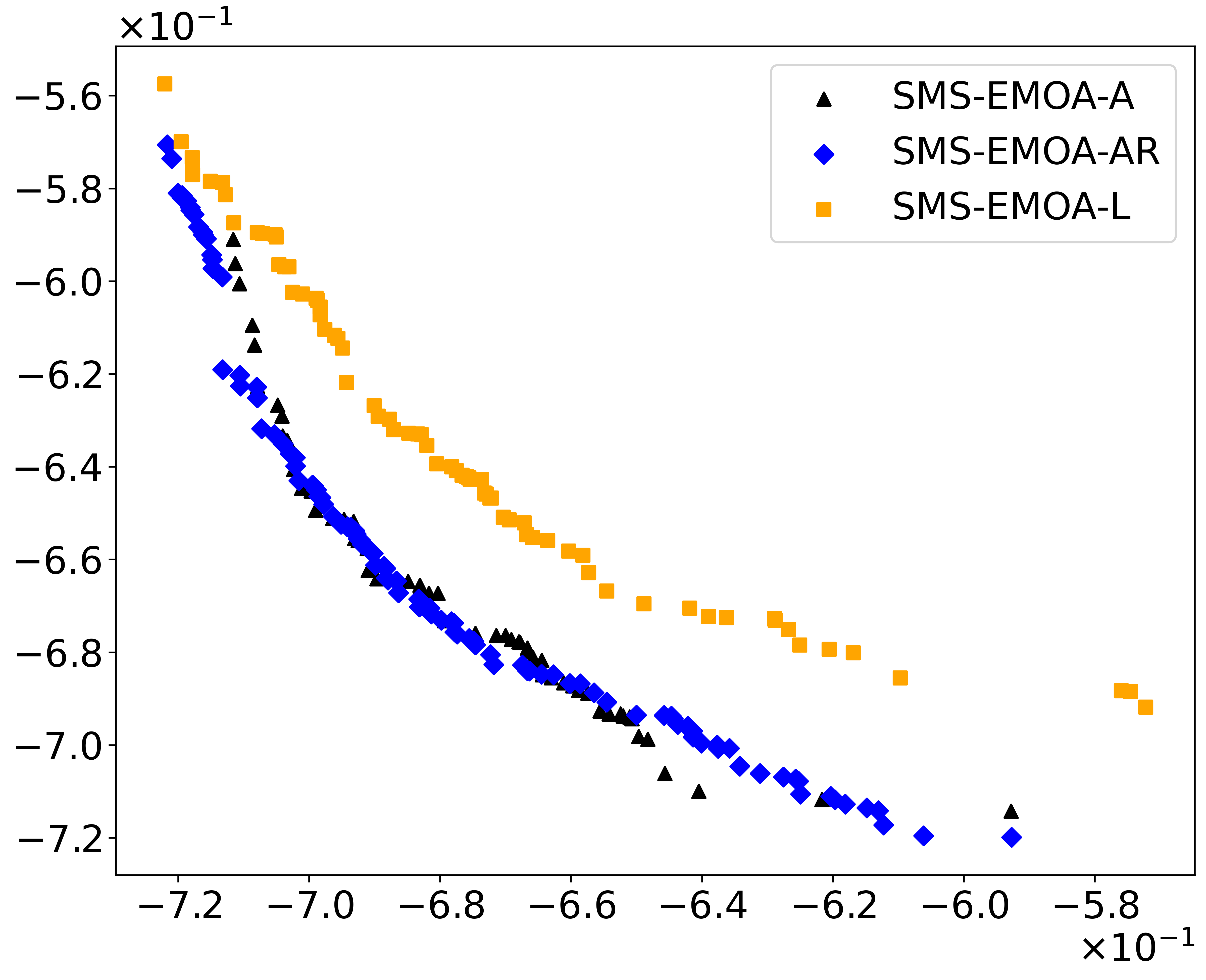} &
            \includegraphics[scale=0.28]{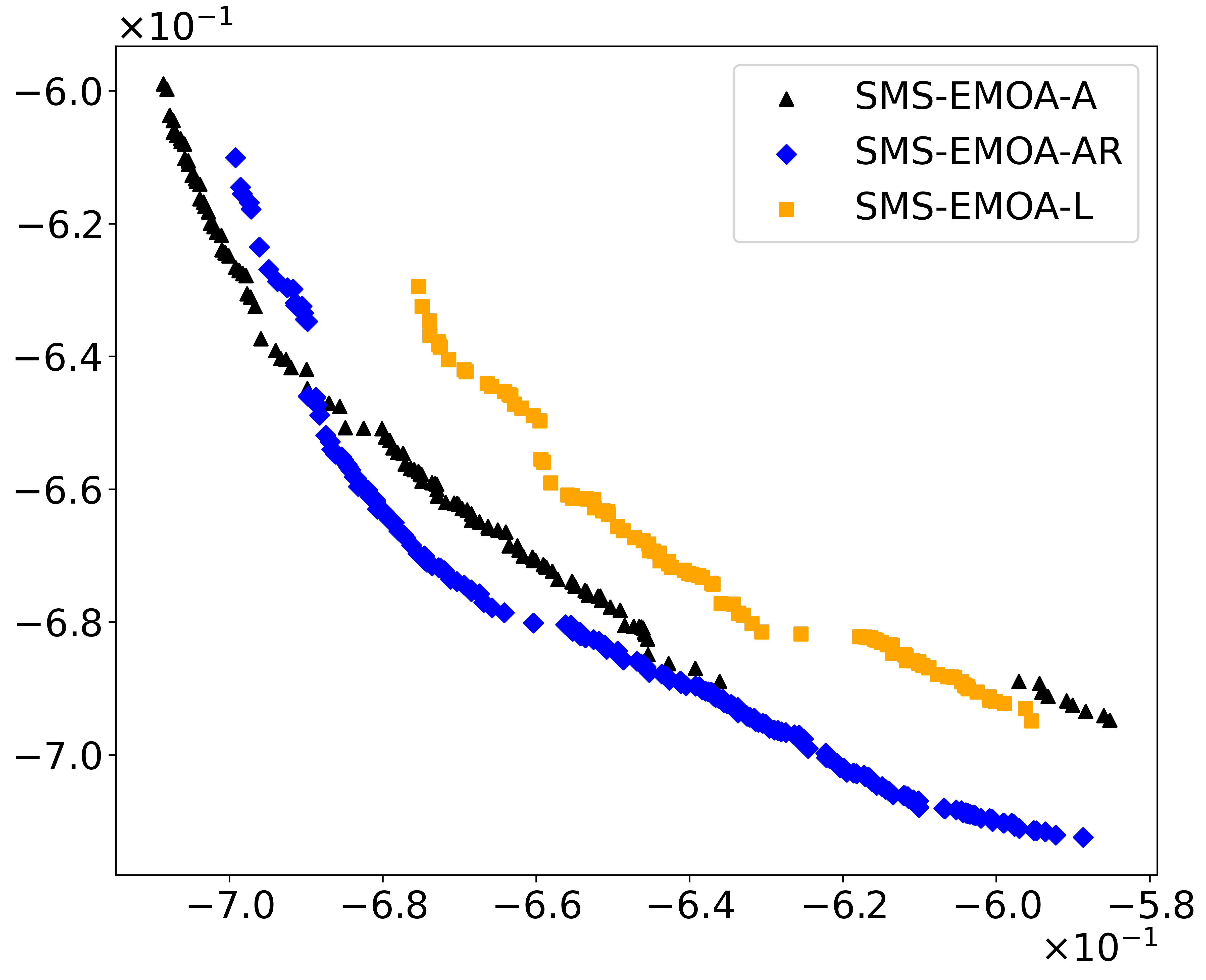} \\
            (d) NK 100D & (e) NK 200D & (f) NK 500D \\[6pt]

            % Row 3: TSP
            \includegraphics[scale=0.28]{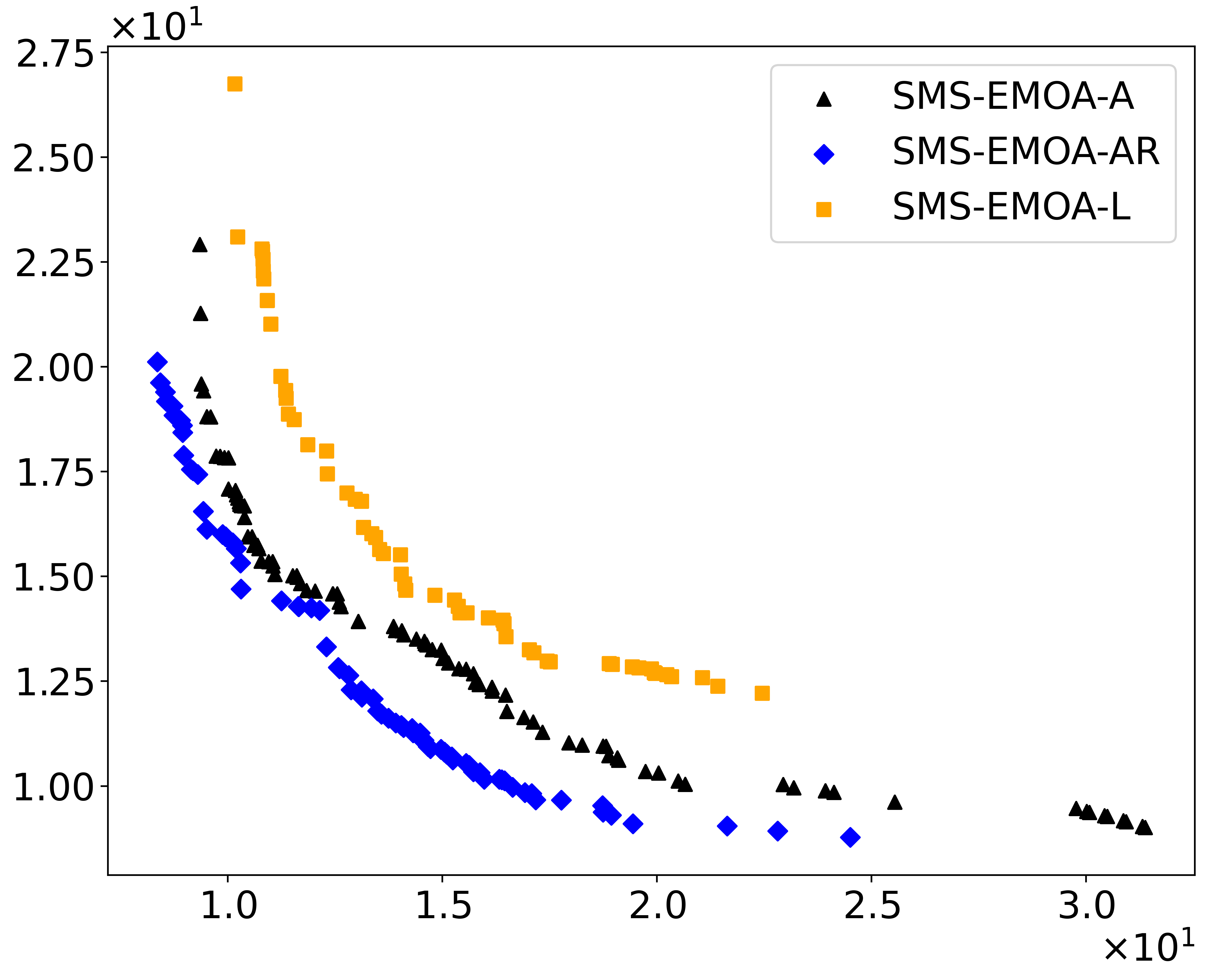} &
            \includegraphics[scale=0.28]{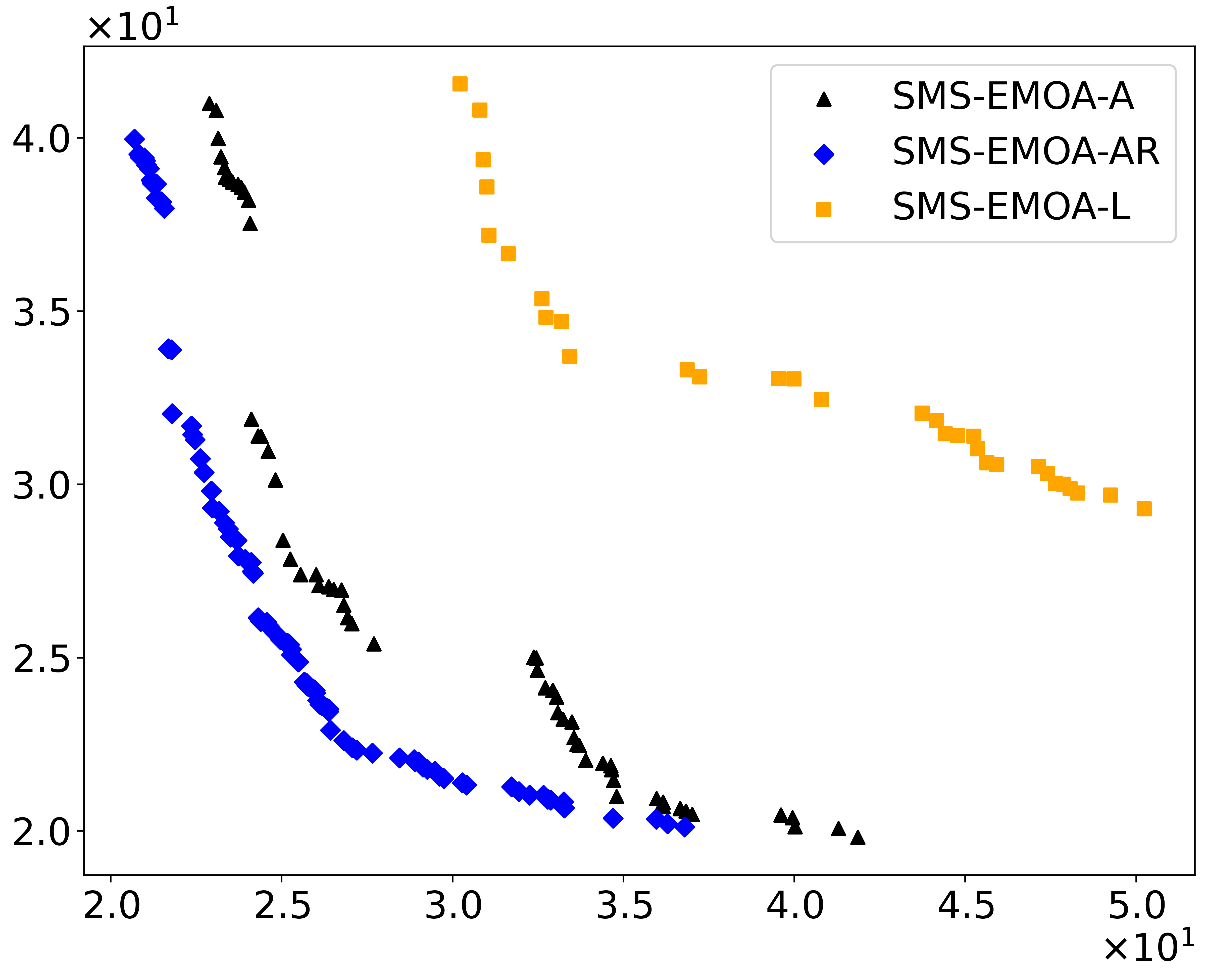} &
            \includegraphics[scale=0.28]{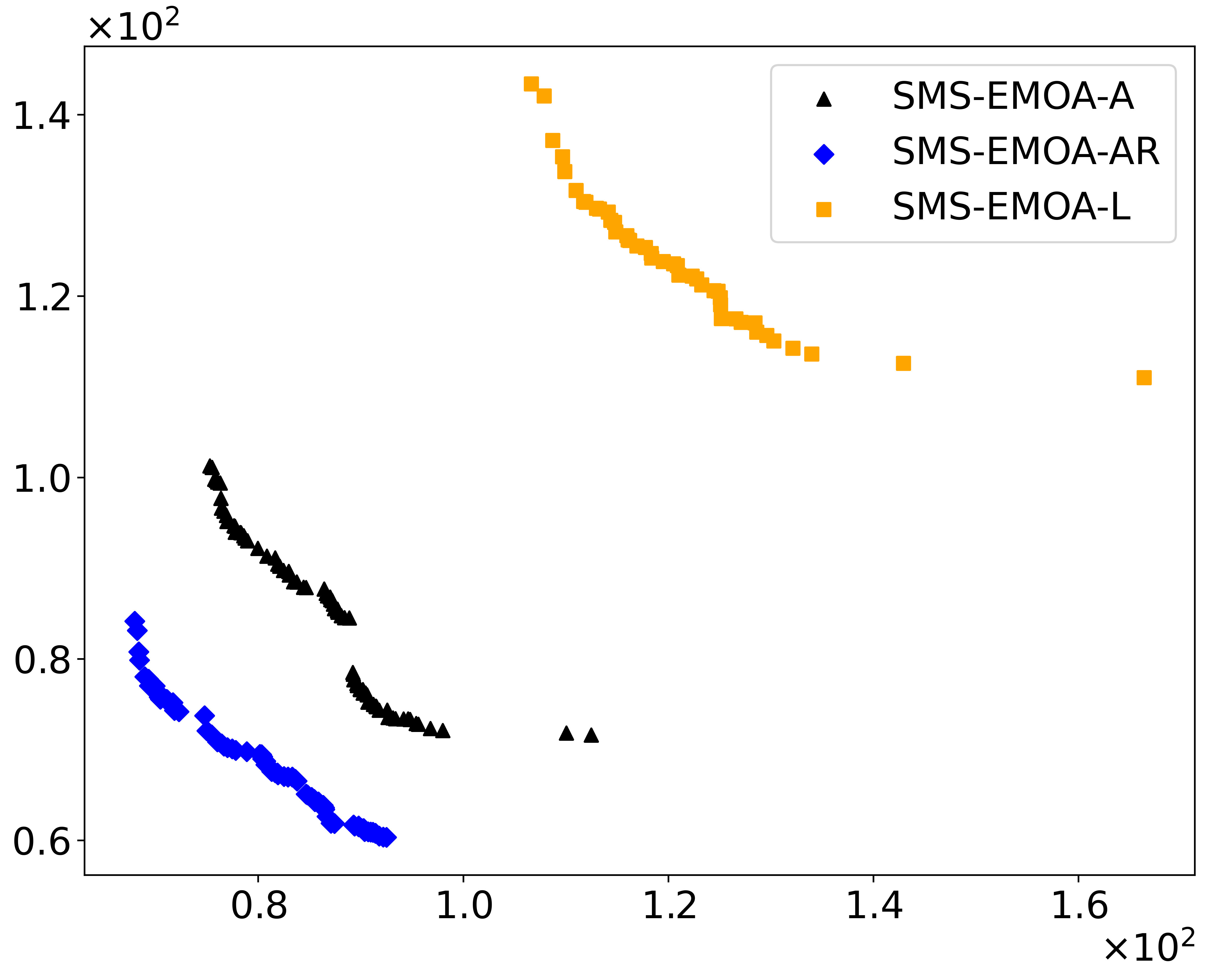} \\
            (g) TSP 100D & (h) TSP 200D & (i) TSP 500D \\[6pt]

            % Row 4: QAP
            \includegraphics[scale=0.28]{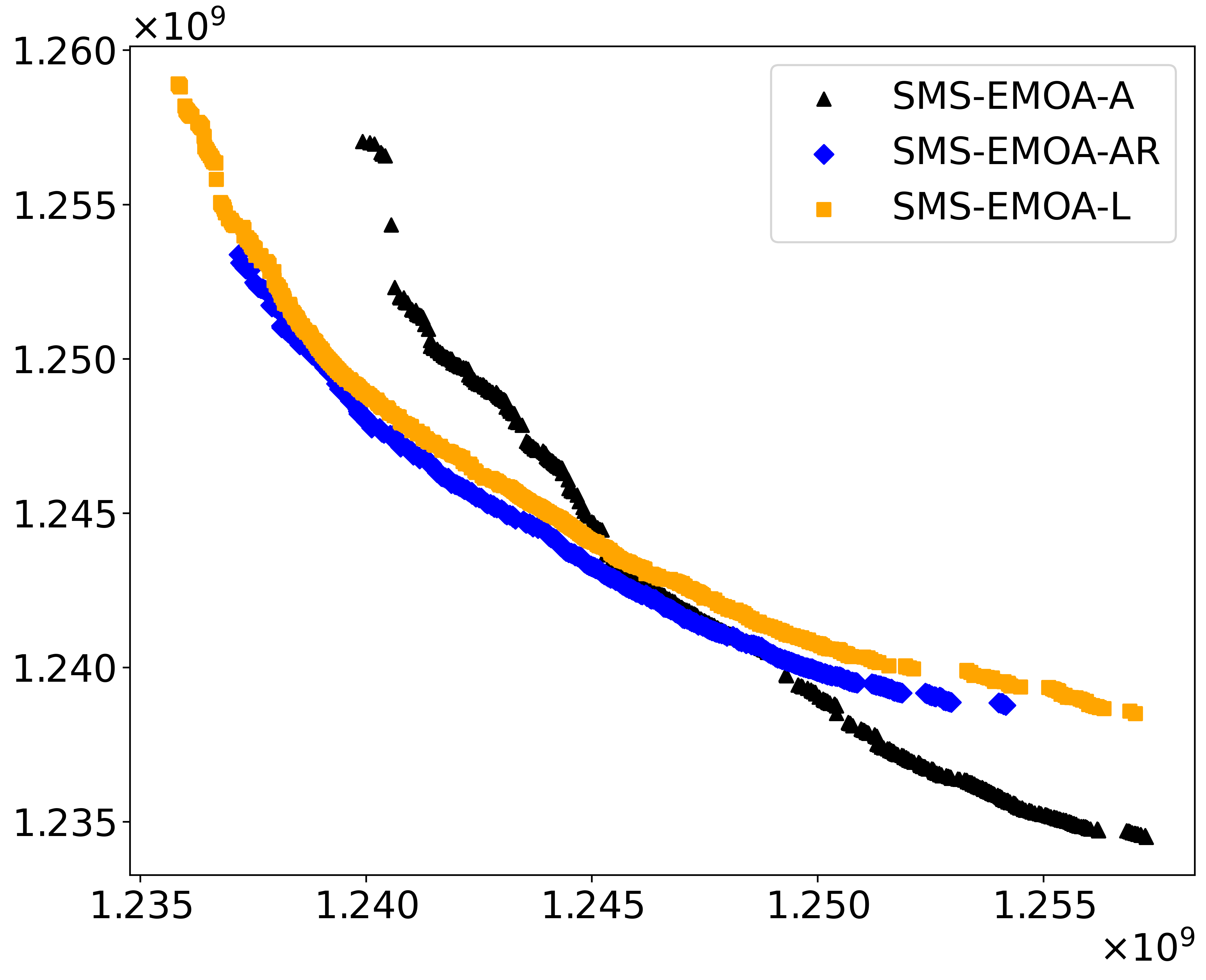} &
            \includegraphics[scale=0.28]{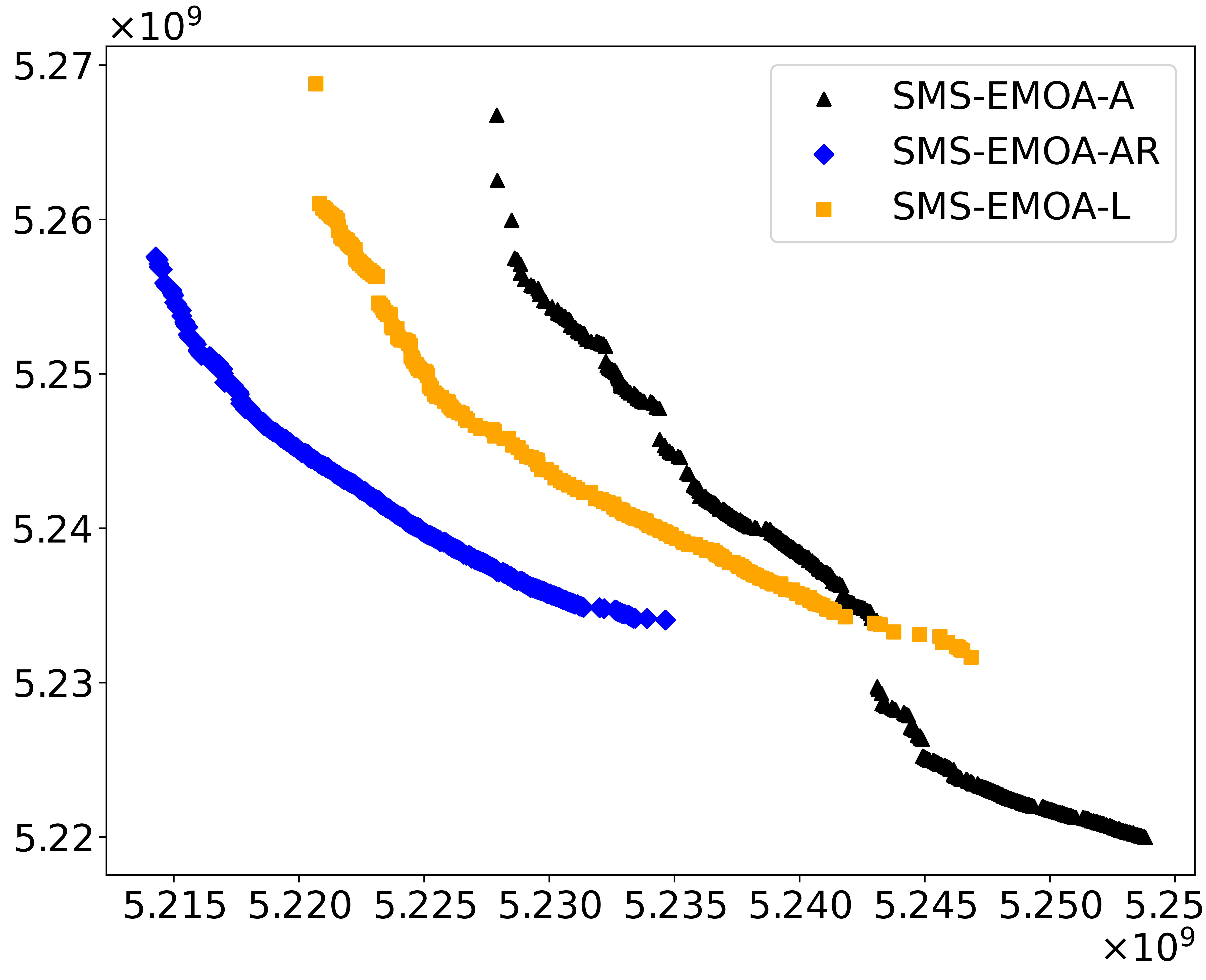} &
            \includegraphics[scale=0.28]{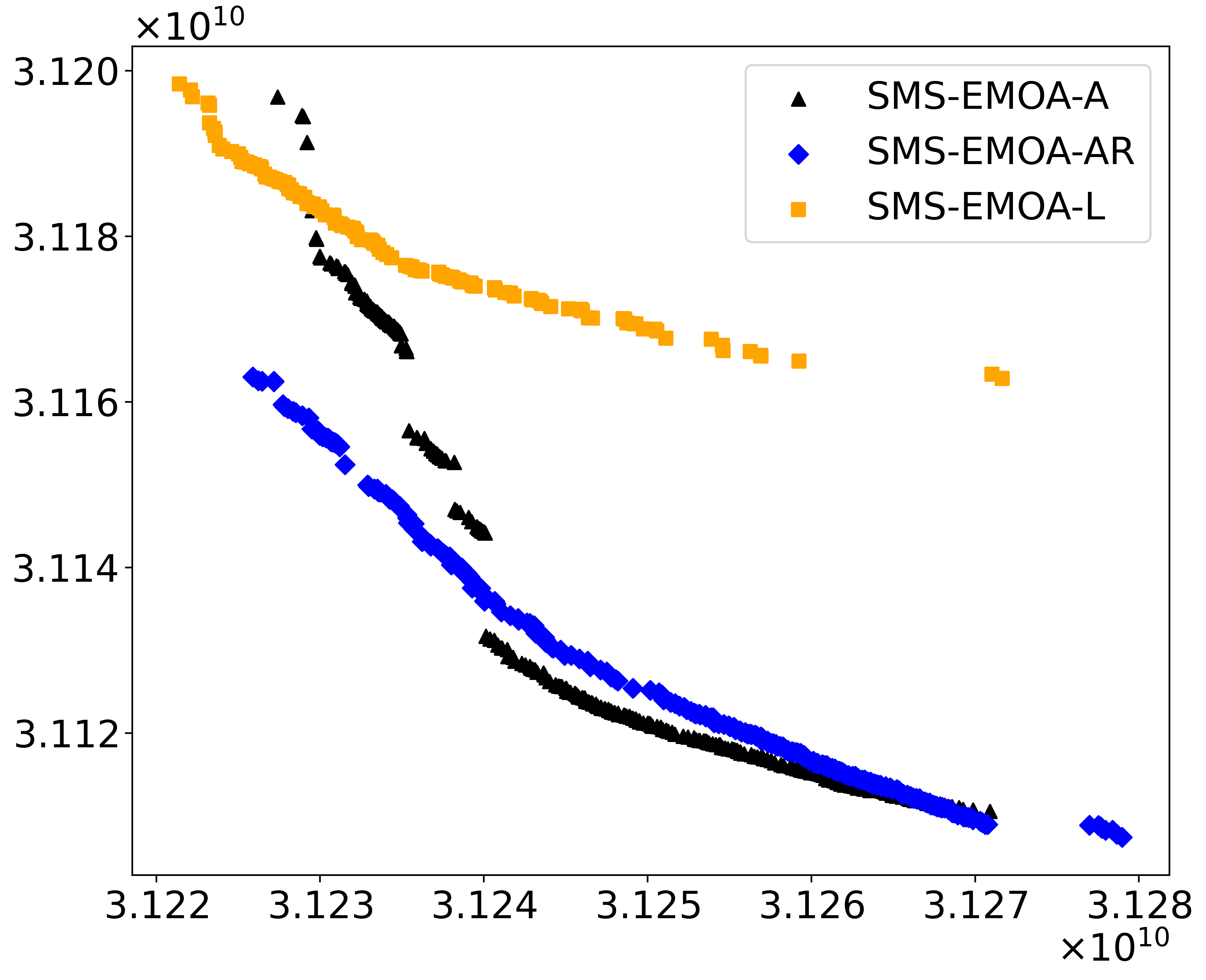} \\
            (j) QAP 100D & (k) QAP 200D & (l) QAP 500D \\
        \end{tabular}
	\end{center}
	\vspace{-10pt}
	\caption{Non-dominated solutions obtained by SMS-EMOA-A (archive only to store non-dominated solutions), SMS-EMOA-AR (archive with reuse), and SMS-EMOA-L (a large population size) on the multi-objective 0/1 Knapsack (KP), NK-Landscape ,(NK), TSP and QAP problems, with 100 (top panel), 200 (middle panel), and 500 (bottom panel) decision variables respectively. The Knapsack and NK-Landscape are maximization problems, and the TSP and QAP are minimization problems.}
	\label{Fig:obj}
\end{figure*}

\section{Experiments}

\begin{table}[t!] % Replace with your desired label
  \centering
  \begin{tabular}{@{}cccc@{}}
    \toprule
    size $n$ & SMS-EMOA-L & SMS-EMOA-A & SMS-EMOA-AR \\
    \midrule
    % 10 & 825.52 & 14081.22 & 935.22 \\
    15 & 3032.58 & 59244.63 & 2991.15 \\
    20 & 9984.90 & 141679.07 & 6075.74 \\
    25 & 20410.68 & 296777.68 & 9891.29 \\
    30 & 34534.04 & 517542.01 & 15485.38 \\
    \bottomrule
  \end{tabular}
  % \vspace{-0.2cm}
  \caption{Average number of generations over 1,000 independent runs for SMS-EMOA-L (a large population size), SMS-EMOA-A (archive only to store non-dominated solutions), and SMS-EMOA-AR (archive with reuse) solving \aojzj\ with $k=3$ and $a=2$.}\label{tab:aojzj}
  % \vspace{-0.4cm}
\end{table}

\begin{table}[t!] % Replace with your desired label
  \centering
  \begin{tabular}{@{}cccc@{}}
    \toprule
    size $n$ & SMS-EMOA-L & SMS-EMOA-A & SMS-EMOA-AR \\
    \midrule
    % 10 & 825.52 & 14081.22 & 935.22 \\
    15 & 8232.80 & 12611.38 & 2929.18 \\
    20 & 28375.07 & - & 5541.68 \\
    25 & 56074.58 & - & 8888.95 \\
    30 & 102073.98 & - & 13532.34 \\
    \bottomrule
  \end{tabular}
  % \vspace{-0.2cm}
  \caption{Average number of generations over 1,000 independent runs for SMS-EMOA-L (a large population size), SMS-EMOA-A (archive only to store non-dominated solutions), and SMS-EMOA-AR (archive with reuse) solving \ojzj\ with $k=2$.}\label{tab:ojzj}
\end{table}

In this section, we conduct experiments on two artificial problems \ojzj\ and \aojzj\ analyzed in this paper, and four well-known practical optimization problems: the multi-objective 0/1 knapsack (KP)~\cite{teghem_multi-objective_1994}, travelling salesman problem (TSP)~\cite{ribeiro2002study}, quadratic assignment problem (QAP)~\cite{knowles2003instance} and NK-landscapes (NK)~\cite{Aguirre2004}, following the settings in~\cite{li_empirical_2024}. 
Table~\ref{tab:aojzj} presents the results of SMS-EMOA solving \aojzj\ with problem size $n \in \{ 15, 20, 25, 30\}$, $k = 3$, and $a = 2$, and Table~\ref{tab:ojzj} presents the results of SMS-EMOA soving \ojzj\ with problem size $n\in \{ 15,20,25,30 \}$ and $k=2$.
We present the average number of generations over $1000$ runs for three variants: 1) SMS-EMOA with a large population size $\mu = n-2k+5$, 2) SMS-EMOA with an archive storing all non-dominated solutions ($\mu = 5$), and 3) SMS-EMOA with archive reuse ($\mu = 5$).
For SMS-EMOA-A, since its expected running time on \ojzj\ is at least $n^{\Omega(n)}$, we set the maximum number of iterations to $1,000,000$. For problem sizes $n = \{20, 25, 30\}$, a significant proportion of runs exceed this limit, thus these cases are marked with ``-''.
Results show that reusing the archive significantly reduces running time compared to using an archive only to store non-dominated solutions, and using a large population size.

As for the four practical problems, each problem is instantiated at three sizes: 100, 200, and 500 variables.
We conduct experiments using three SMS-EMOA variants: 1)~With large population size ($\mu=1000$), 2)~Archive without reuse ($\mu=200$), and 3)~Archive with reuse ($\mu=200$). These variants follow common parameter settings: 1) the parent selection uses uniform random selection; 2) the crossover and mutation operators are problem-specific.
For pseudo-Boolean problems (KP, NK), we use the uniform crossover~\cite{eiben2015introduction} with a rate of 1.0 and the bit-wise mutation~\cite{eiben2015introduction} with a rate of $1/n$, where $n$ is the number of variables.
For TSP, we use the order-based crossover and the 2-opt mutation~\cite{eiben2015introduction}.
For QAP, we use the cycle-based crossover and the 2-swap mutation~\cite{eiben2015introduction}. 
For both TSP and QAP, the crossover rate is 1.0 and the mutation rate is 0.05, following the practice in~\cite{li_empirical_2024}.
The budget for the algorithms is 10,000,000 fitness evaluations and the number of independent runs is set to 30. Table~\ref{tab:hv_results} presents the result of a widely used indicator, Hypervolume (HV)~\cite{zitzler1999multiobjective} of the three SMS-EMOA variants on the four problems.
To compute the HV reference point, we sample 100,000 random solutions from the decision space and use the worst objective values among the non-dominated ones.
Table~\ref{tab:hv_results} shows that reusing the archive can lead to a better mean HV across all problem instances.

Figure~\ref{Fig:obj} presents the non-dominated solutions obtained by the SMS-EMOA without archive reuse (black triangles), SMS-EMOA with archive reuse (blue diamonds), and SMS-EMOA with a large population size (yellow squares) on the multi-objective 0/1 Knapsack (KP), NK-Landscape (NK), TSP and QAP problems, with 100 (top panel), 200 (middle panel), and 500 (bottom panel) decision variables respectively.
As can be seen, SMS-EMOA-RA (reusing the archive) consistently outperforms or matches SMS-EMOA (not reusing the archive) across all problem instances. The advantage of archive reuse becomes more pronounced as the problem size increases, particularly on the TSP.

\begin{table}[t!]
  \centering
  \scriptsize
  ``$\dagger$'' indicates that SMS-EMOA$_{A}$ or SMS-EMOA$_{L}$ is significantly worse than SMS-EMOA$_{AR}$ according to the Wilcoxon rank-sum test ($\alpha = 0.05$).
  \small
  \setlength{\tabcolsep}{1mm}
  \begin{tabular}{@{}lccc@{}}
  \toprule
  Problem & SMS-EMOA$_{L}$ & SMS-EMOA$_{A}$ & SMS-EMOA$_{AR}$ \\
  \midrule
  KP$_{100}$     & 2.19e6 (3.2e4)$\dagger$ & 2.24e6 (9.3e3)$\dagger$ & \textbf{2.25e6} (7.7e3) \\
  KP$_{200}$     & 6.44e6 (8.4e4)$\dagger$ & 6.86e6 (2.9e4)$\dagger$ & \textbf{6.88e6} (3.7e4) \\
  KP$_{500}$     & 4.20e7 (4.9e5)$\dagger$ & 4.52e7 (3.1e5)$\dagger$ & \textbf{4.55e7} (3.3e5) \\
  NK$_{100}$  & 6.15e-2 (3.7e-3)$\dagger$ & 7.06e-2 (3.0e-3)$\dagger$ & \textbf{7.23e-2} (3.6e-3) \\
  NK$_{200}$  & 6.04e-2 (3.3e-3)$\dagger$ & 7.24e-2 (2.8e-3) & \textbf{7.36e-2} (2.6e-3) \\
  NK$_{500}$  & 4.45e-2 (2.3e-3)$\dagger$ & 5.91e-2 (1.7e-3) & \textbf{5.92e-2} (1.7e-3) \\
  QAP$_{100}$    & 2.80e15 (9.8e13)$\dagger$ & 3.03e15 (1.4e14)$\dagger$ & \textbf{3.16e15} (1.4e14) \\
  QAP$_{200}$    & 2.28e16 (6.3e14)$\dagger$ & 2.66e16 (8.3e14)$\dagger$ & \textbf{2.76e16} (8.7e14) \\
  QAP$_{500}$    & 2.29e17 (7.3e15)$\dagger$ & 3.16e17 (8.2e15)$\dagger$ & \textbf{3.30e17} (6.4e15) \\
  TSP$_{100}$    & 2.18e3 (4.0e1)$\dagger$ & 2.67e3 (1.7e1) & \textbf{2.68e3} (2.0e1) \\
  TSP$_{200}$    & 6.15e3 (1.8e2)$\dagger$ & 9.03e3 (9.2e1) & \textbf{9.05e3} (7.2e1) \\
  TSP$_{500}$    & 2.35e4 (1.1e3)$\dagger$ & 4.87e4 (8.4e2)$\dagger$ & \textbf{4.92e4} (6.7e2) \\
  \bottomrule
  \end{tabular}
\caption{Hypervolume~\protect\cite{zitzler1999multiobjective} results (mean and standard deviation) of SMS-EMOA$_{L}$ (a large population size), SMS-EMOA$_{A}$ (archive without reuse), and SMS-EMOA$_{AR}$ (archive with reuse) on practical problems. The number in each problem name denotes the number of variables. The best mean per row is highlighted in bold.}
  \label{tab:hv_results}
  \vspace{-0.4cm}
\end{table}

\section{Conclusion}

This paper analytically shows that apart from storing non-dominated solutions, the archive can be reused to benefit the search of MOEAs. We prove that for SMS-EMOA solving \ojzj\ and one of its variants, reusing the archive is significantly superior in terms of expected running time compared to merely using an archive to store non-dominated solutions.
The analysis shows that reusing the archive brings benefits in two ways: 1) it enables revisiting regions missed in the early stages of the search, thus helping to find a better-covered Pareto front; and 2) it alleviates the issue of small populations losing potentially promising non-dominated solutions due to inconsistencies of solution distribution between the decision and objective spaces, thereby enhancing exploration ability. 
We also prove that reusing archive solutions can be better than using a large population size directly.
% We also prove that, on \aojzj, \ojzj, \omm, and \lotz, SMS-EMOA with archive reuse and a small population achieves a lower upper bound on the expected running time compared to the original algorithm with a large population size. 
These theoretical findings are empirically validated on both artificial problems 
% (\ojzj\ and \aojzj)
and practical problems.
% (the multi-objective 0/1 knapsack, traveling salesman problem, quadratic assignment problem and NK-landscape).

\bibliography{aaai2026}

\clearpage

% \iffalse
\appendix
\section{Proof of Theorem~\ref{thm:sms-arc-omm} and Theorem~\ref{thm:sms-arc-lotz}}

\subsection{\omm\ and \lotz}

The \omm\ problem presented in Definition~\ref{def:OMM} aims to simultaneously maximize the number of 0-bits and the number of 1-bits of a binary bit string.  
The Pareto front is $\{(b, n-b)\mid b\in [0..n]\}$, whose size is $n+1$, and 
the Pareto optimal solution corresponding to $(b, n-b)$, $b\in [0..n]$, is any solution with $(n-b)$ 1-bits. We can see that any solution $\bmx\in\{0,1\}^n$ is Pareto optimal for this problem.
\begin{definition}[\omm~\cite{Giel10}]\label{def:OMM}
	The OneMinMax problem of size $n$ is to find $n$ bits binary strings which maximize
        $
		{\bm f}(\bmx)=\left(n-\sum\nolimits^n_{i=1}x_i, \sum\nolimits^{n}_{i=1} x_i\right)
        $,
	where $x_i$ denotes the $i$-th bit of $\bmx \in \{0,1\}^n$.
\end{definition}

The \lotz\  problem presented in Definition~\ref{def:LOTZ} aims to simultaneously maximize the number of leading 1-bits and the number of trailing 0-bits of a binary bit string.  
The Pareto front is $\{(a, n-a)\mid a\in [0..n]\}$, whose size is $n+1$, and the Pareto optimal solution corresponding to $(a,n-a)$,  $a\in [0..n]$, is $1^a0^{n-a}$, i.e., the solution with $a$ leading 1-bits and $n-a$ trailing 0-bits.
\begin{definition}[\lotz~\cite{LaumannsTEC04}]\label{def:LOTZ}
	The \lotz\ problem of size $n$ is to find $n$ bits binary strings which maximize
        $
        	{\bm{f}}(\bmx)= (\sum\nolimits^n_{i=1} \prod\nolimits^{i}_{j=1}x_j, \sum\nolimits^{n}_{i=1} \prod\nolimits^{n}_{j=i}(1-x_j))
         $,
	where $x_j$ denotes the $j$-th bit of $\bmx \in \{0,1\}^n$.
\end{definition}

\begin{algorithm}[h!]
	\caption{SMS-EMOA with archive reues}
	\label{alg:sms_crossover}
	\textbf{Input}: objective function $f_1,f_2\cdots,f_m$, population size $\mu$, probability $p_c$ of using crossover \\
	\textbf{Output}: $\mu$ solutions from $\{0,1\}^n$
	\begin{algorithmic}[1] %[1] enables line numbers
		\STATE $P\leftarrow \mu$ solutions uniformly and randomly selected from $\{0, 1\}^{\!n}$ with replacement, $A\leftarrow \emptyset$;
		\WHILE{criterion is not met}
		\STATE $\bmx \leftarrow$ \textsc{Selection with Archive Reuse} $(P,A)$;
		\STATE sample $u$ from the uniform distribution over $[0, 1]$;
		\IF{$u<p_c$}
		\STATE $\bmy \leftarrow$ \textsc{Selection with Archive Reuse} $(P,A)$;
		\STATE apply one-point crossover on $\bmx$ and $\bmy$ to generate  $\bmx'$
		\ELSE 
		\STATE set $\bmx'$ as the copy of $\bmx$
		\ENDIF
		\STATE apply bit-wise mutation on $\bmx'$ to generate $\bmx''$;
        \IF{$\not \exists \bmz \in A$ such that $\bmz \succeq \bmx''$}
        \STATE $A \leftarrow (A \setminus\{\bmz \in A \mid \bmx'' \succ \bmz\}) \cup \{\bmx''\}$
        \ENDIF
		\STATE let $\bmz=\arg\min_{\bmx\in R_v}\Delta_{\bmr}(\bmx,R_v)$;
		\STATE $P\leftarrow (P\cup \{\bmx''\})\setminus \{\bmz\}$
		\ENDWHILE
		\RETURN $A$
	\end{algorithmic}
\end{algorithm}

\begin{algorithm}[t!]
\caption{\textsc{Selection with Archive Reuse}} \label{archive_selection}
	\textbf{Input}: population $P$, archive $A$\\
	\textbf{Output}: one solution from $\{0,1\}^n$
	\begin{algorithmic}[1] %[1] enables line numbers
	\STATE sample $u$ from uniform distribution over $[0,1]$;
      \IF{$u<1/2$}
      \STATE select a solution $\bm{x}$ from $P$ uniformly at random;
      \ELSE
      \STATE select a solution $\bm{x}$ from $A$ uniformly at random;
  \ENDIF
	\end{algorithmic}
\end{algorithm}

Note that the analysis of Theorems~\ref{thm:sms-arc-omm} and~\ref{thm:sms-arc-lotz} is very similar to that in~\cite{bian2024archive}. Although so, we still provide the proof in detail for completeness. The SMS-EMOA algorithm with archive reuse and one-point crossover operator is presented in Algorithm~\ref{alg:sms_crossover}. The archive reuse mechanism is adapted such that each parent solution for recombination is selected with probability $1/2$ from the population and with probability $1/2$ from the archive. Note that one-point crossover selects a random point $i \in \{1, 2, \ldots, n\}$ and exchanges the first $i$ bits of two parent solutions, which actually produces two new solutions, but the algorithm only picks the one that consists of the first part of the first parent solution and the second part of the second parent solution.

\begin{proof}[Proof of Theorem~\ref{thm:sms-arc-omm}]
    We divide the running process into two phases. The first phase starts after initialization and finishes until $1^n$ and $0^n$ are both found; the second phase starts after the second phase and finishes until the whole Pareto front is found.

   For the first phase, we prove that the expected number of generations for finding $1^n$ is $O(\mu n\log n)$, and the same bound holds for finding $0^n$ analogously. Since SMS-EMOA always preserves the two boundary points, i.e., the two solutions with the maximum $f_1$ and $f_2$ values respectively. Thus, the maximum $f_2$ value among the Pareto optimal solutions in $P\cup \{\bm{x}''\}$ will not decrease, where $\bm{x}''$ is the offspring solution generated in each iteration. Now, we consider the increase of the maximum $f_2$ value among the Pareto optimal solutions found. In each generation, SMS-EMOA chooses the population $P$ as the parent population (whose probability is $1/2$), selects the Pareto optimal solution $\bm{x}$ with the maximum $f_2$ value as a parent solution (whose probability is $1/\mu$), and generates a solution with more 1-bits if crossover is not performed (whose probability is $1-p_c$) and only one of the 0-bits in $\bmx$ is flipped by bit-wise mutation (whose probability is $((n-|\bm{x}|_1)/n)\cdot (1-1/n)^{n-1}$). This implies that the probability of generating a solution with more than $|\bm{x}|_1$ 1-bits in one generation is at least
   \begin{equation}
     \frac{1}{2}\cdot\frac{1}{\mu}\cdot (1-p_c)\cdot \frac{n-|\bm{x}|_1}{n}\cdot \Big(1-\frac{1}{n}\Big)^{n-1}\ge (1-p_c)\cdot \frac{n-|\bm{x}|_1}{2e\mu n}.
   \end{equation}
   Thus, the expected number of generations for increasing the maximum $f_2$ value to $n$, i.e., finding a solution with $n$ 1-bits, is at most $(1/ (1-p_c)) \cdot \sum_{i=0}^{n-1} 2e\mu n/(n-i)=O(\mu n \log n)$, where the equality holds by $p_c=\Theta(1)$. That is, the expected number of generations of the first phase is $O(\mu n\log n)$. 

    Then, we consider the second phase, and will show that the SMS-EMOA can find the whole Pareto front in $O(\mu n\log n)$ expected number of generations. Note that after the second phase, $0^n$ and $1^n$ must be maintained in the population $P$. Let $D=\{j \mid \exists \bm{x}\in A, |\bmx|_1 = j\}$, where $A$ denotes the archive, and we suppose $|D|=i$, i.e., $i$ points on the Pareto front have been found in the archive. Note that $i \geq 2$ as $0^n$ and $1^n$ have been found. Assume that in the reproduction procedure, a solution $\bm{x}$ with $|\bm{x}|_1 = j$ ($j \in [0..n]$) is selected as one parent. If the other selected parent solution is $1^n$, then for any $d\in [1..n-1] \setminus D$, there must exist a crossover point $d'$ such that exchanging the first $d'$ bits of $\bm{x}$ and $1^n$ can generate a solution with $d$ 1-bits. If the other selected parent is $0^n$, then for any $d\in [1..n-1] \setminus D$, there must exist a crossover point $d'$ such that exchanging the first $d'$ bits of $\bm{x}$ and $0^n$ can generate a solution with $d$ 1-bits. The newly generated solution can keep unchanged if no bits are flipped in bit-wise mutation. Note that the probability of choosing the population $P$ as the parent population selecting $1^n$ (or $0^n$) as a parent solution is $1/2\mu$. Thus, the probability of generating a new point on the Pareto front is at least
\begin{equation}
\begin{aligned}
  &\frac{1}{2\mu}\cdot p_c \cdot \frac{n+1 - |D|}{n} \cdot \Big(1-\frac{1}{n}\Big)^n \\
  &\ge \frac{p_c(n+1-|D|)}{4e\mu n}.
\end{aligned}
\end{equation}
Then, we can derive that the expected number of generations of the third phase (i.e., for finding the whole Pareto front) is at most $\sum_{i=2}^{n} O(4e\mu n/(n+1-i)) = O(\mu n\log n)$.

Combining the two phases, the total expected number of generations is $O(\mu n \log n+ \mu n\log n) = O( \mu n\log n)$, where $\mu \ge 2$. Thus, the theorem holds.
\end{proof}

\begin{proof}[Proof of Theorem~\ref{thm:sms-arc-lotz}]
     We divide the running process into two phases. The first phase starts after initialization and finishes until $1^n$ and $0^n$ are both found; the second phase starts after the second phase and finishes until the whole Pareto front is found.

    The proof of the first phase is similar to that of Theorem~\ref{thm:sms-arc-omm}. Firstly, we consider the increase of $f_1$ value, i.e., $\max_{\bmx\in P} \text{LO}(\bmx)$. In each generation, SMS-EMOA chooses the population $P$ as the parent solution, selects a solution with maximum $f_1$ value, and flips its $(\text{LO}(\bmx) +1)$-th bit (which must be $0$) without using crossover. with probability at least
    \begin{align}
        \frac{1}{2\mu} \cdot (1-p_c)\cdot \frac{1}{n}\cdot \Big(1-\frac{1}{n}\Big)^{n-1} \ge \frac{1-p_c}{2e\mu n}.
    \end{align}
    Thus, the expected number of generations for the first phase is $\sum_{i=0}^{n-1} 2e\mu n/(1-p_c) = O(\mu n^2)$.

    Next, we consider the second phase. Let $D = \{j\in [0..n]\mid 1^j0^{n-j}\in A\}$, where $A$ denotes the archive, and suppose $|D| = i$, i.e., $i$ points on the Pareto front has been found in the archive. By selecting $1^n$ and $0^n$ from population $P$ as a pair of parent solutions, and exchanging their first $k$ bits ($k\in [0..n]\backslash D$) by one-point crossover, the solution $1^k0^{n-k}$ can be generated. $1^k0^{n-k}$ can keep unchanged by flipping no bits in bit-wise mutation. The probability of choosing population $P$ as parent population and selecting $1^n$ and $0^n$ as a pair of parents is $1/(2\mu)^2$. Thus, a new point on the Pareto front can be generated with probability at least
    \begin{align}
        \frac{1}{(2\mu)^2} \cdot p_c \cdot \frac{n+1-i}{n}\cdot \Big(1-\frac{1}{n}\Big)^{n-1} \ge \frac{n+1-i}{8e\mu^2 n}.
    \end{align}
    Thus, the expected number of generations of finding the whole Pareto front is $\sum_{i=2}^n 8e\mu^2 n/(n+1-i) = (\mu^2 n\log n)$.

    Combining the two phases, the total expected number of generations is $O(\mu n^2+ \mu^2 n\log n)$, where $\mu \ge 2$. Thus, the theorem holds.
\end{proof}

% \fi
\end{document}